\documentclass[11pt]{article}
\pdfoutput=1

\usepackage{mathrsfs}
\usepackage{amsmath,amssymb}
\usepackage{bm}
\usepackage{natbib}
\usepackage[usenames]{color}
\usepackage{amsthm}

\usepackage{multirow} 
\usepackage{enumitem}

\usepackage[colorlinks,
linkcolor=red,
anchorcolor=blue,
citecolor=blue
]{hyperref}

\usepackage{mylatexstyle}

\usepackage{setspace}
\usepackage[left=1in, right=1in, top=1in, bottom=1in]{geometry}

\usepackage{graphicx,subcaption,wrapfig}

\newtheorem{theorem}{Theorem}

\newtheorem{lemma}{Lemma}

\newtheorem{definition}{Definition}

\usepackage{xcolor}

\ifdefined\final
\usepackage[disable]{todonotes}
\else
\usepackage[textsize=tiny]{todonotes}
\fi
\usepackage{amssymb}
\usepackage{pifont}
\usepackage{xcolor}
\usepackage{booktabs}

\newcommand{\cmark}{\ding{51}} 
\newcommand{\xmark}{\ding{55}} 
\newcommand{\good}{\textcolor{green!70!black}{\cmark}}
\newcommand{\bad}{\textcolor{red!80!black}{\xmark}}

\newcommand{\wen}[1]{{\color{green}{\textbf{[Wenjing]} {#1}}}}

\title{\huge Learning Theory of Transformers: Local-to-Global Approximation via Softmax Partition of Unity}

\author{
	Zhongjie Shi \thanks{School of Mathematics, Georgia Institute of Technology, Atlanta, GA 30332, United States; E-mail:  {\tt zshi332@gatech.edu}}
	~~~and~~~
	Wenjing Liao \thanks{School of Mathematics, Georgia Institute of Technology, Atlanta, GA 30332, United States; E-mail: {\tt wliao60@gatech.edu}}
}
\date{}

\begin{document}

\maketitle

\begin{abstract}
This paper investigates the learning theory of Transformer networks for regression tasks on the compact Euclidean domain $[0,1]^d$ and $d$-dimensional compact Riemannian manifolds. We propose a novel constructive approximation framework for Transformers that builds local approximations of the target function and aggregates them into a global approximation via softmax partition of unity.
This approach leverages the attention mechanism to achieve spatial localization  through affine transformations of the input. The softmax activation plays a crucial role in aggregating local approximations to a global output.
From an approximation perspective, we prove that a  dense Transformer equipped with only  two encoder blocks and standard single-hidden-layer point-wise feed-forward networks can achieve a uniform $\varepsilon$-approximation error for $\alpha$-H\"older continuous functions with $\alpha \in (0,1]$ using $\mathcal{O}(\varepsilon^{-d/\alpha})$ total parameters. Building upon this approximation guarantee, we establish a near minimax-optimal generalization error bound of order $\mathcal{O}\big(n^{-\frac{2\alpha}{2\alpha+d}} \log n\big)$ for the empirical risk minimizer, where $n$ is the training data size.  The Transformer architecture studied in this paper is dense, shallow and wide, and employs softmax activation and sinusoidal positional encodings, closely reflecting practical implementations.

\end{abstract}

\section{Introduction}

During the past decade, deep learning has achieved remarkable success in various domains, including natural language processing \cite{graves2013speech, vaswani2017attention, liu2023pre}, computer vision \cite{krizhevsky2012imagenet, goodfellow2014generative, song2021scorebased}, and complex games \cite{silver2017mastering}. As a leading architecture, the Transformer \cite{vaswani2017attention} has become central to these advancements: Vision Transformers (ViTs) have surpassed classical convolutional networks in image recognition \cite{dosovitskiy2020image}, while large language models (LLMs) demonstrate exceptional generative and reasoning abilities \cite{brown2020language,ouyang2022training}. Despite this widespread empirical success, the theoretical understanding of the expressivity and generalization capability of the Transformers remains limited.

To establish a learning theory of Transformers, extensive research has investigated their expressivity and generalization. Initial studies focused on universal approximation properties, demonstrating that various Transformer models can universally approximate continuous (permutation equivariant) sequence-to-sequence functions \cite{yun2020transformers, yun2020n, zaheer2020big, kajitsuka2024are}. Beyond universality, inspired by the constructive techniques developed for classical deep ReLU networks \cite{yarotsky2017error, SchmidtHieber2020,shi2026can}, recent studies have shown that Transformers can achieve satisfying approximation bounds and has the potential to overcome the curse of dimensionality under certain structural assumptions \cite{gurevych2022rate, takakura2023approximation, jiang2024approximation, havrilla2024understanding, shi2025approximation, jiao2026transformers}. Based on these approximation results, further studies have provided generalization analysis and statistical rates. These works evaluate sample complexity and estimation errors in different theoretical settings \cite{zhang2022analysis, edelman2022inductive, gurevych2022rate, wei2022statistically, takakura2023approximation, havrilla2024understanding, shi2025learning, shi2025nonlinear, shi2025approximation}. However, despite these theoretical advances, a clear understanding of the specific role the softmax attention mechanism plays in the expressivity of transformer networks remains elusive, uncovering how this attention mechanism facilitates efficient feature representation learning is still an important problem.

In deep neural network approximation theory \cite{yarotsky2017error, yarotsky2018optimal, SchmidtHieber2020}, a standard local-to-global approach to uniformly approximate $\alpha$ H\"older continuous functions $g: \cX \subseteq [0,1]^d \to \mathbb{R}$ involves partitioning the domain into explicit meshes to smoothly aggregate local expert approximations. This aggregation necessitates a Partition of Unity (POU). For ReLU feed-forward neural networks (FFNs), constructing a compactly supported POU faces structural bottlenecks due to the piecewise linear activation. Existing constructive proofs generally adopt two paradigms. For $\alpha \in (0,1]$, the Spike POU method \cite{yarotsky2018optimal} constructs piecewise linear simplex triangulations by composing ReLUs to compute the multivariate minimum of affine mappings, requiring a network depth of $d^2+d$. The second paradigm constructs the POU through tensor products of one-dimensional trapezoidal functions when they studied high-order H\"older functions \cite{yarotsky2017error, SchmidtHieber2020}. Because ReLU cannot natively compute multiplications, evaluating these products requires deep compositions with a depth of $\mathcal{O}(\log(1/\varepsilon))$ to achieve an $\varepsilon$-accuracy. Fundamentally, both ReLU-based POUs rely on explicit geometric partitions. This reliance on explicit Cartesian grids hinders their generalization to complex geometries such as compact manifolds. Transformer analyses that rely on these ReLU-based POUs inevitably inherit these domain limitations or depth requirements dependent on $\varepsilon$ \cite{gurevych2022rate, takakura2023approximation, jiang2024approximation, havrilla2024understanding, shi2025approximation, jiao2026transformers}. This leaves the potential representational capacity of the softmax attention mechanism in Transformer architectures underexplored.

To overcome these structural limitations, we propose a novel mesh-free local-to-global constructive approximation framework via Softmax POU. Instead of explicit geometric partitions, our method employs an implicit and smooth mechanism to dynamically 
allocate weights of local expert approximations, which naturally aligns with the Transformer attention mechanism equipped with softmax activation. Specifically, to approximate a target function $g: \cX \rightarrow \RR$, we cover the input domain $\cX$ with $\ell_2$ balls of radius $r_g$ centered at a discrete set of points $\{\bc_i\}_{i=1}^{C_g}$. Given a scaling parameter $M_g > 0$, we define the spatially localized weights as
\begin{equation} \label{eq_softmax_weights}
    \beta_i(\bx) = \frac{\exp\left(M_g (r_g^2 - \|\bx - \bc_i\|_2^2)\right)}{\sum_{k=1}^{C_g} \exp\left(M_g (r_g^2 - \|\bx - \bc_k\|_2^2)\right)} = \frac{\exp\left(2M_g \langle \bx, \bc_i \rangle - M_g \|\bc_i\|_2^2\right)}{\sum_{k=1}^{C_g} \exp\left(2M_g \langle \bx, \bc_k \rangle - M_g \|\bc_k\|_2^2\right)}.
\end{equation}
The target function $g$ is then approximated by the convex combination of its values at these discrete centers: $\hat g(\bx) = \sum_{i=1}^{C_g} \beta_i(\bx) g(\bc_i)$. This Softmax POU functions as a globally supported yet highly localized smooth mesh-free gating mechanism by unifying three fundamental properties: Geometrically, relying purely on the Euclidean distance $\|\bx - \bc_i\|_2^2$ to the unstructured centers, the Softmax POU completely bypasses the need for explicit grid construction. This mesh-free property enables it to naturally adapt to complex geometries such as Riemannian manifolds. Mechanically, the Softmax POU operates dynamically rather than statically to achieve spatial localization. In contrast to ReLU POUs where the assigned weight depends solely on the absolute distance to a specific grid vertex, the weight in Softmax POU is determined by the relative distance to all available centers $\bc_k$, thereby inherently establishing a global attention competition. Algebraically, softmax activation not only enforces the exact POU condition $\sum_{i=1}^{C_g} \beta_i(\bx) = 1$, but also cancels the input-dependent quadratic term $\|\bx\|_2^2$. This algebraic cancellation reduces the non-linear distance computation to purely affine transformations, converting it into the native dot-product attention. Together, these properties reveal an intrinsic structural alignment, empowering shallow Transformers to natively aggregate local approximations via softmax attention and adapt to complex geometries.

Building upon these insights, this paper provides a comprehensive theoretical analysis of the expressive power and generalization capabilities of Transformers. Our main contributions are summarized as follows

\begin{itemize}
    \item \textbf{Softmax POU Approximation Framework.} We propose a novel local-to-global constructive approximation framework based on a globally supported Softmax POU. By algebraically canceling input-dependent quadratic terms, this mesh-free POU mechanism translates relative Euclidean distances into purely affine transformations, inherently establishing a dynamic global attention competition to achieve spatial localization. This exact structural correspondence empowers shallow Transformers to seamlessly fuse localized approximations into an aggregated global approximation via native softmax dot-product attention mechanism.

    \item \textbf{Learning on Euclidean Domains via Softmax POU.} Applying this framework to compact Euclidean spaces $[0,1]^d$, we constructively prove that a shallow (only two encoder blocks with standard single-hidden-layer FFNs), wide, and dense Transformer with softmax activation and sinusoidal positional encodings achieves a uniform $\varepsilon$-approximation error for $\alpha$-H\"older continuous functions using $\mathcal{O}(\varepsilon^{-d/\alpha})$ parameters (see Theorem~\ref{thm_holder_approx}). Building upon this, we establish a near minimax-optimal convergence rate of $\mathcal{O}\big(n^{-\frac{2\alpha}{2\alpha+d}} \log n\big)$ (see Theorem~\ref{thm_gen_cube}), formally validating the inherent statistical efficiency of the native Transformer architecture without relying on artificial sparsity or structural depth constraints.
    
    \item \textbf{Adaptivity to Riemannian Manifolds.} We extend our theoretical guarantees to target functions defined on $d$-dimensional compact Riemannian manifolds embedded in $\RR^{\bar{d}}$. We formally prove that by operating directly on the ambient Euclidean metric, the Softmax POU framework natively adapts to such complex geometries. Consequently, shallow Transformers can achieve a uniform $\varepsilon$-approximation error using $\mathcal{O}(\varepsilon^{-d/\alpha})$ parameters (see Theorem~\ref{thm_manifold_approx}) and a near minimax-optimal convergence rate of $\mathcal{O}\big(n^{-\frac{2\alpha}{2\alpha+d}} \log n\big)$ (see Theorem \ref{thm_gen_manifold}). Because these bounds scale exclusively with the intrinsic dimension $d$ rather than the ambient dimension $\bar{d}$, our analysis establishes the native Transformer architecture's inherent capacity to adapt to complex geometries and overcome the curse of dimensionality.
\end{itemize}

The remainder of this paper is organized as follows: Section~\ref{sec:architecture} defines the Transformer architecture and its hypothesis space. Section~\ref{sec:euclidean} establishes the theoretical guarantees on compact Euclidean domains: Subsection~\ref{sec:POU_framework} introduces the Softmax POU framework to prove uniform approximation bounds, and Subsection~\ref{sec:generalization} leverages these results to derive generalization rates. Section~\ref{sec:mtor} extends this framework to compact Riemannian manifolds, demonstrating the architecture's adaptivity to complex geometries and its ability to avoid the curse of dimensionality. Section~\ref{sec:related_work} discusses related literature, and Section~\ref{sec:conclusion} concludes the paper. All detailed proofs are deferred to the Appendix.

\paragraph{Notation.} We use lower case bold letters for vectors ($\bx, \by$), upper case bold for matrices ($\bX, \bA$), and calligraphic letters for sets and spaces ($\cM, \cH$). For a vector $\bx$, $\|\bx\|_p$ denotes its standard $\ell_p$ norm, and $\langle \bx, \by \rangle$ denotes the Euclidean inner product. For a matrix $\bX$, $\mathrm{vec}(\bX)$ denotes its column-wise vectorization, $\|\bX\|_{\max} = \max_{i,j} |\bX_{i,j}|$ denotes the maximum absolute value among its elements. For a function $f: \cX \to \RR$, $\|f\|_\infty := \sup_{\bx \in \cX} |f(\bx)|$ denotes its supremum norm. Let $\mathcal{B}(\bc, r)$ and $\bar{\mathcal{B}}(\bc, r)$ denote the open and closed Euclidean balls centered at $\bc$ with radius $r$, respectively. We define $[n] := \{1, \dots, n\}$. The Kronecker delta $\delta_{i,j}$ evaluates to $1$ if $i=j$ and $0$ otherwise.


\section{Transformer architecture} \label{sec:architecture}

In this section, we formally introduce the structure of the Transformer model. The architecture consists of a pre-processing stage followed by $L$ blocks of Transformer encoders. Each encoder block comprises a multi-head attention (MHA) layer, followed by a point-wise FFN layer.

For an input vector $\bx \in \RR^d$, the pre-processing step $\cP: \RR^d \to \RR^{D \times P}$ converts the input to an initial hidden representation $\bZ_0 \in \RR^{D \times P}$ through an affine embedding and the addition of a positional encoding (PE) $\bP \in \RR^{D \times P}$. Formally, the pre-processing operation is defined as
\begin{equation*}
    \bZ_0 = \cP(\bx) = (\bW_E \bx + \bb_E) \be_1^\top + \bP,
\end{equation*}
where $\bW_E \in \RR^{D \times d}$ is the embedding weight matrix, $\bb_E \in \RR^D$ is the bias vector, and $\be_1 = [1, 0, \dots, 0]^\top \in \RR^P$ is the canonical basis vector that assigns the embedded input exclusively to the first token position. Here, $P \ge 1$ denotes the total sequence length, implying that the remaining $P - 1$ columns of $\bZ_0$ serve as appended padding or latent tokens. We use sinusoidal positional encodings in our constructions, which aligns with the Transformer architectures in practical applications \cite{vaswani2017attention}.  

Consider the $i$-th encoder block with input $\bZ_{i-1} = [\bz^1_{i-1}, \dots, \bz^P_{i-1}] \in \RR^{D \times P}$. Within the MHA layer $\cA_i: \RR^{D \times P} \to \RR^{D \times P}$ with $H_i$ heads, each head includes a query matrix $\bQ_i^h \in \RR^{d_k \times D}$, a key matrix $\bK_i^h \in \RR^{d_k \times D}$, and a value matrix $\bV_i^h \in \RR^{d_v \times D}$, where $d_k$ is the uniform query/key dimension and $d_v$ is the uniform value dimension. For $h\in [H_i]$, the output of the $h$-th attention head ${\head}^h_i \in \RR^{d_v \times P}$ is computed as
\begin{equation*}
    {\head}^h_i =  \bV_i^h \bZ_{i-1} \bA_i^h,
\end{equation*}
where $\bA_i^h \in \RR^{P \times P}$ is the attention probability matrix. Its $j$-th column is defined by
\begin{equation*}
    (\bA_i^h)_{:, j} := \softmax \left( \left(\bZ_{i-1}^\top {\bK_i^h}^\top \bQ_i^h \bZ_{i-1}\right)_{:, j} \right), \quad j \in [P],
\end{equation*}
with the $\softmax: \RR^{P} \to \RR^{P}$ operator applied component-wise as
\begin{equation*}
    \softmax(\bx) = \left[ \frac{e^{x_1}}{\sum_{l=1}^P e^{x_l}}, \dots, \frac{e^{x_{P}}}{\sum_{l=1}^P e^{x_l}} \right]^\top.
\end{equation*}
To preserve the distinct representations from different heads, the final output of the MHA layer $\widehat{\bZ}_i \in \RR^{D \times P}$ is generated by concatenating the outputs of all $H_i$ heads along the feature dimension, followed by a linear projection
\begin{equation*}
    \widehat{\bZ}_i = \bW^O_i \begin{bmatrix} {\head}^1_i \\ \vdots \\ {\head}^{H_i}_i \end{bmatrix},
\end{equation*}
where $\bW^O_i \in \RR^{D \times (H_i \cdot d_v)}$ is the output projection matrix.

The output $\widehat{\bZ}_i= [\hat \bz_i^1, \dots, \hat \bz^{P}_i] \in \RR^{D \times P}$ is subsequently fed into a point-wise FFN. The FFN layer $\cF_i: \RR^{D \times P} \to \RR^{D \times P}$ with weight matrices $\bW_i^1 \in \RR^{d_{\text{ff}} \times D}$, $\bW_i^2 \in \RR^{D \times d_{\text{ff}}}$ and biases $\bb_i^1 \in \RR^{d_{\text{ff}}}$, $\bb_i^2 \in \RR^{D}$ computes
\begin{equation} \label{FFNlayer}
    \bz^j_i =  \bW_i^2 \sigma\left(\bW_i^1 \hat \bz^j_i +\bb_i^1 \right) +\bb_i^2, \quad   j \in [P],
\end{equation}
where $\sigma(\cdot) = \max(0, \cdot)$ is the ReLU activation function applied component-wise.

Finally, for the Transformer model with $L$ encoder blocks, its output is
\begin{equation*}
    \cT_L(\bx) = \cF_L \circ \cA_L \circ \cdots \circ \cF_1 \circ \cA_1 \circ \cP(\bx) \in \RR^{D \times P}.
\end{equation*}
The scalar output is then obtained by reading out the relevant components of $\cT_L(\bx)$, via a linear affine mapping $\bc_{L+1}^\top \mathrm{vec}(\cT_L(\bx))$. With the complete architecture established, we now define the function class expressed by such Transformer networks.

\begin{definition}[Transformer Network Class] \label{def_transformer_class}
    For depth $L \in \NN$, embedding dimension $D \in \NN$, sequence length $P \in \NN$, layer configurations $\{H^\ell\}_{\ell=1}^L, \{d_k^\ell\}_{\ell=1}^L, \{d_v^\ell\}_{\ell=1}^L, \{d_{\text{ff}}^\ell\}_{\ell=1}^L \subset \NN$, and parameter magnitude bound $M > 0$, we define the class of Transformer networks as
    \begin{align*}
        &\mathcal{T}\Big(L, D, P, \{H^\ell\}_{\ell=1}^L, \{d_k^\ell\}_{\ell=1}^L, \{d_v^\ell\}_{\ell=1}^L, \{d_{\text{ff}}^\ell\}_{\ell=1}^L, M\Big) \\
        &\quad = \left\{ f_\theta \;\left|\; 
        \begin{aligned}
            &f_\theta(\bx)= \bc_{L+1}^\top \mathrm{vec}(\cT_L(\bx)) \text{ is an } L\text{-block Transformer with embedding dim } D,  \\
            &\text{ sequence length } P, \text{parameter bound } \|\theta\|_\infty \le M, \text{ and for each block } \ell \in [L]:  \\
            & \text{there is } H^\ell \text{heads}, \text{query/key dim is } d_k^\ell, \text{ value dim is } d_v^\ell, \text{ FFN hidden width is } d_{\text{ff}}^\ell
        \end{aligned}
        \right. \right\}.
    \end{align*}
\end{definition}

\section{Learning on Euclidean domains} \label{sec:euclidean}

In this section, we establish the theoretical capabilities of the Transformer architecture. We focus our analysis on compact Euclidean domains and introduce a novel Softmax POU framework. In Subsection~\ref{sec:POU_framework}, we constructively prove uniform approximation rates for H\"older continuous functions. Building upon these approximation results, we further establish the corresponding statistical generalization bounds in Subsection~\ref{sec:generalization}.

\subsection{Approximation of H\"older functions on Euclidean domains} \label{sec:POU_framework}

A foundational ingredient in our construction is the Softmax POU. Unlike deep ReLU networks that rely on locally compactly supported bump functions as POU \cite{yarotsky2017error, yarotsky2018optimal, SchmidtHieber2020}, we introduce a globally supported "soft" POU by Softmax POU. This idea shares connections with classical non-parametric techniques, such as Shepard's method for spatial interpolation \cite{shepard1968two} and normalized Radial Basis Function (RBF) networks \cite{moody1989fast}. By utilizing a scaling parameter $M_g$ to control the exponential decay outside local neighborhoods, this scheme naturally establishes a local-to-global approximation paradigm. We formalize this mathematical construction in the following lemma, which is proved in Appendix \ref{sec:appendixA1}.

\begin{lemma}[Softmax POU Approximation on Euclidean Space] \label{lemma_POU_approx}
    Let $d \in \NN$, $\alpha \in (0,1]$, and $g: [0,1]^d \to \RR$ be an $\alpha$-H\"older continuous function with constant $C_H$. For any target accuracy $\varepsilon \in (0,\frac{1}{e}]$, let the covering radius be $r_g = (\varepsilon / (4C_H))^{1/\alpha}$. There exists a finite set of $C_g$ centers $\{\bc_i\}_{i=1}^{C_g} \subset [0,1]^d$ forming an $r_g$-covering of the domain (i.e., $[0,1]^d \subseteq \bigcup_{i=1}^{C_g} \bar{\mathcal{B}}(\bc_i, r_g)$) with $C_g \le \left( \sqrt{d} (4C_H)^{1/\alpha}\right)^d \varepsilon^{-\frac{d}{\alpha}}$. By setting the scaling parameter 
    \begin{equation*}
        M_g = \frac{(4C_H)^{2/\alpha}}{3} \left(\log\left( 4\|g\|_\infty \left( \sqrt{d} (4C_H)^{1/\alpha}\right)^d \right)+ \frac{d+\alpha}{\alpha} \right) \varepsilon^{-\frac{2}{\alpha}} \log \frac{1}{\varepsilon} ,
    \end{equation*}
    the Softmax POU approximation
    $\hat g(\bx) = \sum_{i=1}^{C_g} \beta_i(\bx) g(\bc_i),$
    with
    \begin{align}
        \beta_i(\bx) = \frac{\exp\left(M_g (r_g^2 - \|\bx - \bc_i\|_2^2)\right)}{\sum_{k=1}^{C_g} \exp\left(M_g (r_g^2 - \|\bx - \bc_k\|_2^2)\right)}= \frac{\exp\left(2M_g \inner{\bx}{\bc_i} - M_g \|\bc_i\|_2^2\right)}{\sum_{k=1}^{C_g} \exp\left(2M_g \inner{\bx}{\bc_k} - M_g \|\bc_k\|_2^2\right)},
        \label{eq:lemma1betai}
    \end{align}
    achieves the uniform error bound
    $\sup_{\bx \in [0,1]^d} |\hat g(\bx) - g(\bx)| \le \varepsilon$.
\end{lemma}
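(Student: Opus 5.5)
The approach is to build the covering from a uniform grid and then split the error by distance to $\bx$. Let $N = \lceil \sqrt{d}/(2r_g)\rceil$ or a similar choice, and take $\{\bc_i\}$ to be the centers of the $N^d$ subcubes of side $1/N \le 2r_g/\sqrt{d}$. Each subcube has half-diagonal at most $r_g$, so these points form an $r_g$-covering of $[0,1]^d$. Counting gives $C_g \le (\sqrt{d}/r_g)^d = (\sqrt d (4C_H)^{1/\alpha})^d \varepsilon^{-d/\alpha}$, after adjusting the grid so the ceiling does not cost more than the stated constant. (Using side $1/\lceil \sqrt d/ r_g\rceil$ is safe: half-diagonal $\le r_g/2$.)

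Since $\sum_i \beta_i(\bx)=1$, I write
\[
|\hat g(\bx)-g(\bx)| \le \sum_i \beta_i(\bx)\,|g(\bc_i)-g(\bx)|
\]
and split the indices into a near set $\{i:\|\bx-\bc_i\|_2\le 2r_g\}$ and a far set (its complement). The equality of the two expressions for $\beta_i$ in \eqref{eq:lemma1betai} is immediate: the common factors $e^{M_g r_g^2}$ and $e^{-M_g\|\bx\|^2}$ cancel.

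For the near set, Hölder continuity gives $|g(\bc_i)-g(\bx)|\le C_H(2r_g)^\alpha = 2^\alpha\varepsilon/4 \le \varepsilon/2$. Because the near-set weights sum to at most 1, the near contribution is at most $\varepsilon/2$.

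For the far set, I bound $|g(\bc_i)-g(\bx)|\le 2\|g\|_\infty$. By the covering property, some $\bc_{k^*}$ satisfies $\|\bx-\bc_{k^*}\|\le r_g$, so the denominator is at least $\exp(M_g(r_g^2-r_g^2))=1$. For a far index, the numerator is at most $\exp(M_g(r_g^2-4r_g^2))=\exp(-3M_g r_g^2)$. Hence the far contribution is at most $2\|g\|_\infty C_g e^{-3M_g r_g^2}$.

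The main step is to verify that this is at most $\varepsilon/2$, i.e.
\[
3M_g r_g^2 \ge \log(4\|g\|_\infty C_g) + \log(1/\varepsilon).
\]
Here $r_g^2=(4C_H)^{-2/\alpha}\varepsilon^{2/\alpha}$, so
\[
3M_g r_g^2 = \Bigl(\log\bigl(4\|g\|_\infty(\sqrt d(4C_H)^{1/\alpha})^d\bigr)+\tfrac{d+\alpha}{\alpha}\Bigr)\log\tfrac1\varepsilon .
\]
The right-hand side we need is at most $\log\bigl(4\|g\|_\infty(\sqrt d(4C_H)^{1/\alpha})^d\bigr) + \frac{d+\alpha}{\alpha}\log\frac1\varepsilon$. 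Since $\varepsilon\le 1/e$ gives $\log(1/\varepsilon)\ge1$, the first term is dominated, provided the logarithm is nonnegative. That holds if $4\|g\|_\infty(\cdots)^d\ge1$. If it is negative, i.e. $\|g\|_\infty$ is tiny, the trivial bound $|\hat g-g|\le 2\|g\|_\infty\le\varepsilon$ handles the case; I would note this as a minor case split or assume the quantity is at least 1.

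The only delicate point is this bookkeeping, together with the constant in $C_g$. Combining the two contributions gives $\varepsilon/2+\varepsilon/2=\varepsilon$.
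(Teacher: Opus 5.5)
Your main line is the paper's proof essentially step for step. Both use a midpoint grid with circumradius at most $r_g$ and split at radius $2r_g$. On the near set you use the H\"older bound $C_H(2r_g)^\alpha\le\varepsilon/2$; on the far set, the ``denominator $\ge 1$'' trick and the factor $e^{-3M_g r_g^2}$. The bookkeeping rests on the identity $3M_g r_g^2=\bigl(L_0+\tfrac{d+\alpha}{\alpha}\bigr)\log\tfrac1\varepsilon$, where $L_0=\log(4\|g\|_\infty K)$ and $K=\bigl(\sqrt d\,(4C_H)^{1/\alpha}\bigr)^d$.

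Taking side exactly $1/N$ with $N=\lceil \sqrt d/(2r_g)\rceil$ is slightly cleaner than the paper's side $\delta=2r_g/\sqrt d$, because it keeps every center in $[0,1]^d$. The count bound then follows from $\lceil x\rceil\le 2x$ for $x\ge 1$, as in the paper. Your parenthetical alternative $N=\lceil\sqrt d/r_g\rceil$ covers fine but does not give $C_g\le K\varepsilon^{-d/\alpha}$.

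The one place you go beyond the paper is the case $L_0<0$, and there your patch is wrong. The condition $4\|g\|_\infty K<1$ only says $\|g\|_\infty<1/(4K)$, and $K$ does not depend on $\varepsilon$. So it does not give $2\|g\|_\infty\le\varepsilon$ once $\varepsilon$ is small.

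In fact no patch can work, because the lemma as stated is false in part of this regime. If $L_0<-(d+\alpha)/\alpha$, then $M_g<0$. Both your denominator bound and your far-field bound then reverse, and the softmax concentrates on the \emph{farthest} center. For example, take $d=\alpha=C_H=1$ and $g(x)=\min\{x,0.008\}-0.004$. Then $16\|g\|_\infty=0.064<e^{-2}$, so $M_g<0$ for every $\varepsilon$. At $x=0$, any admissible covering has a center in $[1-r_g,1]$. Its weight overwhelms that of every center in $[0,0.008)$, which are the only centers where $g\neq 0.004$. Hence $\hat g(0)\approx 0.004$, and the error is about $0.008>\varepsilon=10^{-3}$.

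The paper's step of ``factoring out $\log(1/\varepsilon)$'' silently assumes $L_0\ge 0$. Your other option is the right fix: assume $4\|g\|_\infty K\ge 1$, or replace $L_0$ by $\max\{L_0,0\}$ in $M_g$. With that, your proof is complete.
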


\begin{wrapfigure}{r}{0.55\textwidth}    \centering
    \begin{subfigure}[b]{0.24\textwidth}
        \centering
        \includegraphics[width=\textwidth,height=2.8cm]{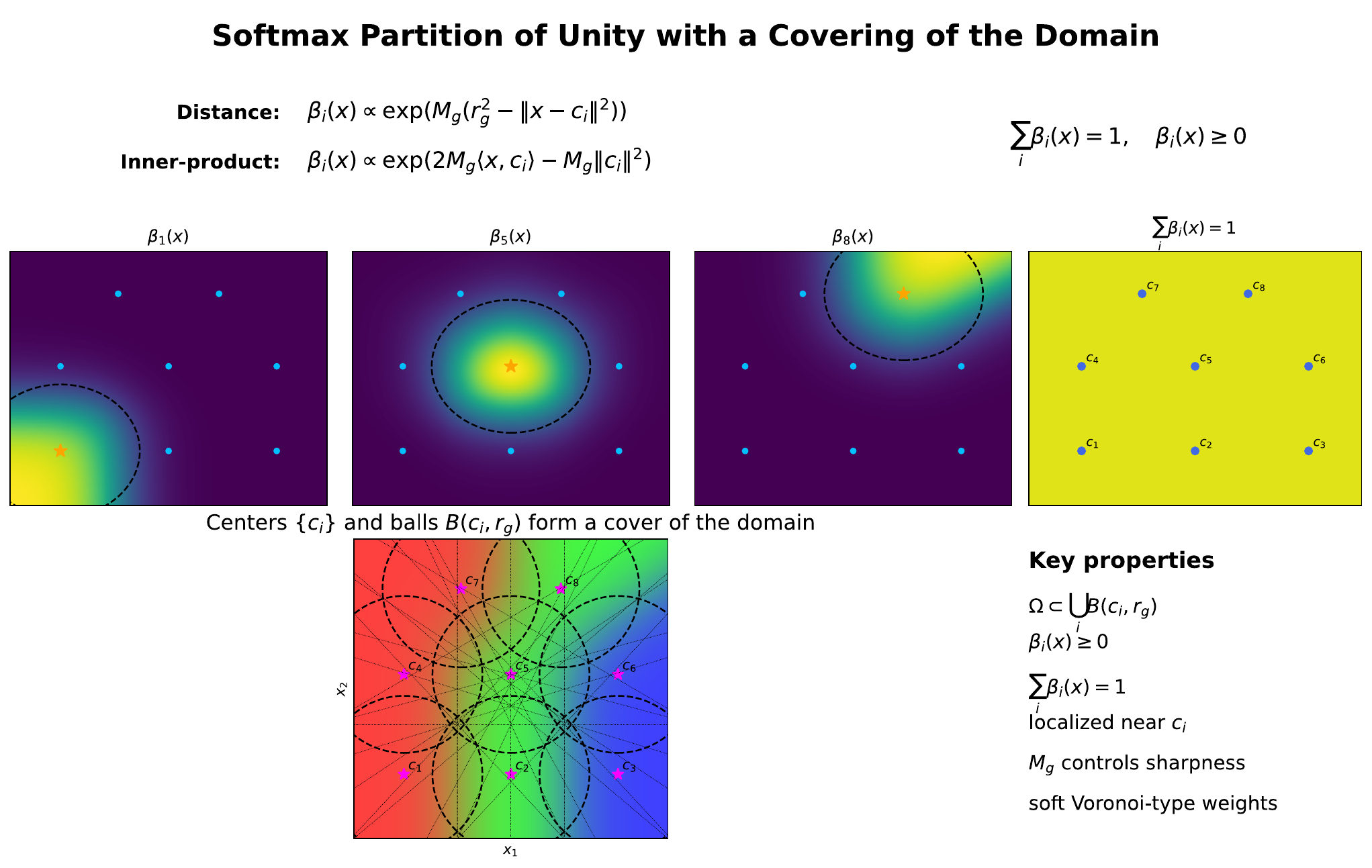}
        \caption{Centers}
    \end{subfigure}
    \hspace{0.2cm}
    \begin{subfigure}[b]{0.24\textwidth}
        \centering     \includegraphics[width=\textwidth,height=2.82cm]{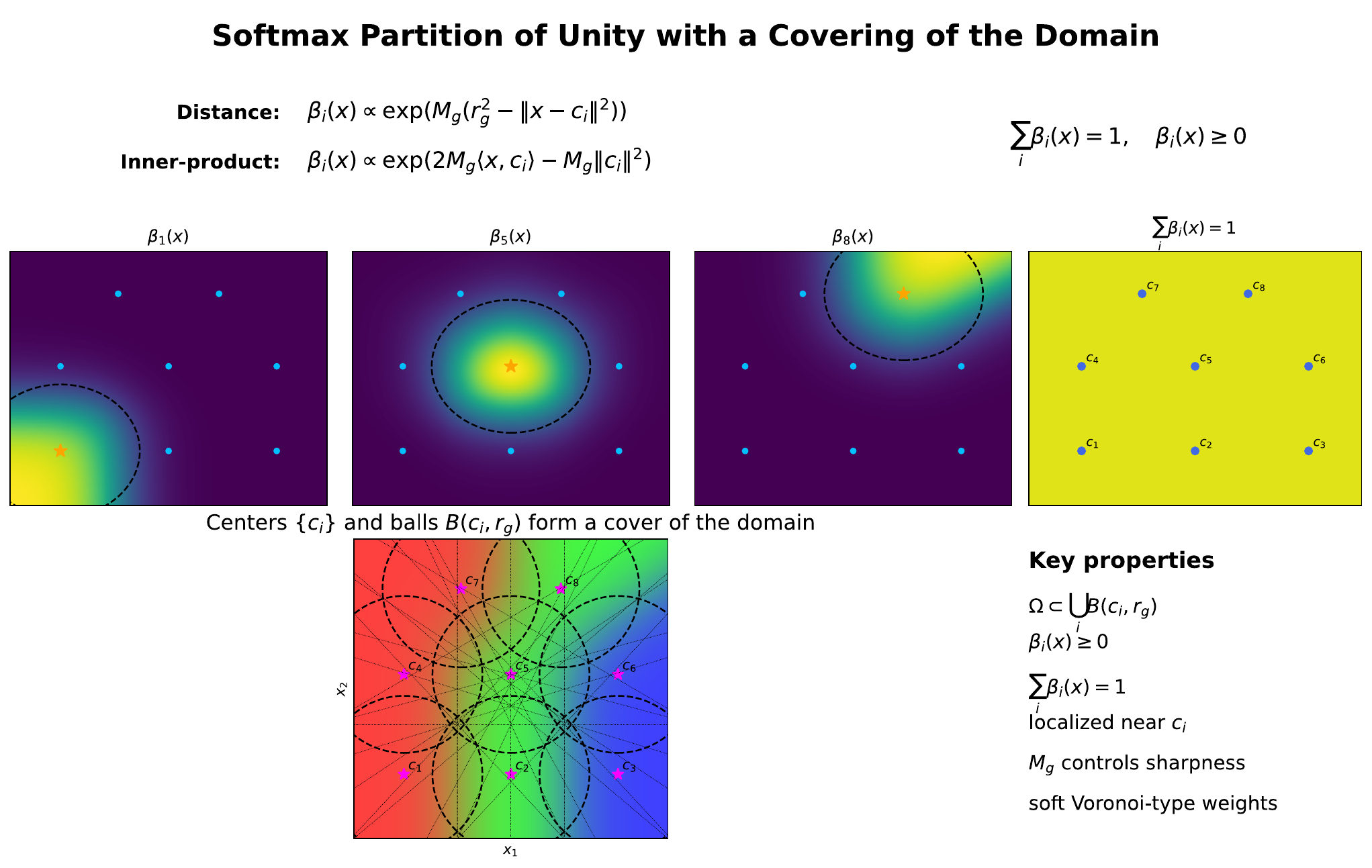}
        \caption{$\beta_5(\bx)$}
    \end{subfigure}
    \caption{Geometric illustration of the softmax partition of unity. The centers $\{\bc_i\}_{i=1}^{C_g}$ and covering balls $\bar{\mathcal{B}}(\bc_i,r_g)$ induce the localized soft  weights $\beta_i(\bx)$.}
    \label{fig:softpou}
        \vspace{-0.4cm}
\end{wrapfigure}

Lemma \ref{lemma_POU_approx} introduces our new approximation framework. The centers $\{\bc_i\}_{i=1}^{C_g}$ and the corresponding covering balls $\bar{\mathcal{B}}(\bc_i,r_g)$ induce localized soft regions through the weights $\beta_i(\bx)$, which is illustrated in Figure \ref{fig:softpou}. The weight functions $\{\beta_i(\bx)\}_{i=1}^{C_g}$ form a partition of unity over the domain, and the scaling parameter $M_g$ controls the exponential decay of $\beta_i(\bx)$.

The Softmax POU approximation scheme offers two primary advantages: (i) the algebraic cancellation of the input-dependent quadratic term $\|\bx\|_2^2$ eliminates the geometric quadratic form, reducing the relative distance computation to purely affine transformations; and (ii) it perfectly aligns with the native Transformer architecture, enabling mesh-free spatial localization that is dynamically achieved through standard softmax dot-product attention mechanism.
The following theorem quantifies the approximation power of Transformers using the Softmax POU approximation framework.

\begin{theorem}[Transformer Approximation on Euclidean Space] \label{thm_holder_approx}
    Let $d \in \NN$, $\alpha \in (0,1]$, and $g: [0,1]^d \to \RR$ be $\alpha$-H\"older continuous with constant $C_H$ and $\|g\|_\infty \le B$. For any $\varepsilon \in (0, 1/e]$, there exists a Transformer network 
    \begin{equation*}
        \hat{g}_T \in \mathcal{T}\Big(L, D, P, \{H^\ell\}_{\ell=1}^L, \{d_k^\ell\}_{\ell=1}^L, \{d_v^\ell\}_{\ell=1}^L, \{d_{\text{ff}}^\ell\}_{\ell=1}^L, M_{max}\Big)
    \end{equation*}
    satisfying 
    $$\sup_{\bx \in [0,1]^d} |\hat{g}_T(\bx) - g(\bx)| \le \varepsilon$$
    with the following parameters
    \begin{itemize}
        \item $L=2$, $D=d+4$, and $P \le C_P \varepsilon^{-d/\alpha}$ where $C_P = d^{d/2} (8C_H)^{d/\alpha}$.
        \item For $\ell=1$: $H^1 = P+2$, $d_k^1 = d_v^1 = 2$, and $d_{\text{ff}}^1 = 2d+8$.
        \item For $\ell=2$: $H^2 = 1$, $d_k^2 = d_v^2 = 1$, and $d_{\text{ff}}^2 = 2d+8$.
    \end{itemize}
    The parameter magnitude $M_{max}$ and total dense parameters $\cN_{total}$ satisfy
    \begin{align*}
        M_{max} \le \widetilde{C}_{mag} \varepsilon^{-2d/\alpha} \log \frac{1}{\varepsilon}, \qquad \cN_{total} \le C_N \varepsilon^{-d/\alpha},
    \end{align*}
    where 
    $C_N= 10(d+4) C_P + 9d^2 + 95d + 236$, and
    $\widetilde{C}_{mag} = \max\left\{ \frac{1}{1 - \frac{1}{eB}}, C_{mag}, 2 \left(1 + \frac{1}{2eB}\right) C_B \right\},$
    with $C_B = \max\{3d C_M, B, 1\}$, $C_{mag} = \frac{C_P^2 C_{log}}{8}$, $C_{log} = \left| \log(4 C_P B (1 + 6 d C_M)) \right| + \frac{d+2+\alpha}{\alpha} + 2$, and $C_M = \frac{(8C_H)^{2/\alpha}}{3} \left(\log\big( 4B d^{d/2}(4C_H)^{d/\alpha} \big) + \frac{d+\alpha}{\alpha} \right)$.
\end{theorem}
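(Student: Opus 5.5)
We realize the Softmax POU estimator of Lemma~\ref{lemma_POU_approx} with a two-block Transformer, using accuracy $\varepsilon/2$ for the POU. The remaining $\varepsilon/2$ budget absorbs the error from exactly realizing, or slightly perturbing, the affine logits $2M_g\langle \bx,\bc_i\rangle - M_g\|\bc_i\|_2^2$ and the values $g(\bc_i)$. Replacing $C_H$ by $2^{\alpha}C_H$ (equivalently, $\varepsilon$ by $\varepsilon/2$) yields $P\approx C_g\le d^{d/2}(8C_H)^{d/\alpha}\varepsilon^{-d/\alpha}=C_P\varepsilon^{-d/\alpha}$ and the constant $C_M$, which is $M_g$ divided by its $\varepsilon$-dependence. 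The structural idea is to use one token per center: token $1$ carries the input $\bx$, and the padding tokens $j=2,\dots,P$ carry, through the sinusoidal positional encoding, enough information to produce a key encoding the center $\bc_j$ and a value encoding $g(\bc_j)$. The second attention layer then lets the first token attend over all tokens with logits $2M_g\langle\bx,\bc_j\rangle-M_g\|\bc_j\|^2$, so its output is exactly $\sum_j \beta_j(\bx)g(\bc_j)$.

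\textbf{Block 1 (writing centers into tokens).} The sinusoidal PE gives each column a distinct, bounded, known vector. Since $H^1=P+2$ heads each have $d_k=d_v=2$, I would dedicate one head per token $j$ to injecting center-specific data into column $j$. The head's query/key are chosen with a large scale so that column $j$ attends essentially only to itself. This works because the PE columns are distinct points on a circle, and the inner product of a PE with itself strictly dominates its inner product with any other PE. The head's value/output projection then writes a scalar proportional to the position coordinate into a reserved feature slot. The two extra heads preserve $\bx$ in token $1$ and a constant-one coordinate. The FFN of width $2d+8$ then uses ReLU pairs $\sigma(t)-\sigma(-t)$ to pass through $\bx$ (about $2d$ units) and a few scalar channels.

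The delicate part is turning position information into the $d$ coordinates of $\bc_j$ with only $D=d+4$ rows and $O(1)$ FFN width. I would sidestep this by absorbing the centers into the output projections $\bW^O_1$. Head $j$'s output projection column is itself set to $(\bc_j, -\|\bc_j\|^2, g(\bc_j),\dots)$, scaled by the self-attention weight. This costs $O(D)$ parameters per head, giving the $10(d+4)C_P$ term in $C_N$. The softmax in this layer is not exactly one-hot, so leakage of order $e^{-\text{scale}\cdot\text{gap}}$ must be bounded. The minimal gap between PE inner products is about $1/P^2$, so the scale must be of order $P^2\log(1/\varepsilon)$, which matches $M_{max}\sim\varepsilon^{-2d/\alpha}\log(1/\varepsilon)$ and $C_{mag}=C_P^2C_{log}/8$.

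\textbf{Block 2 (aggregation).} A single head with $d_k=d_v=1$ uses the query $2M_g\bx$ (from token $1$) against the key $\bc_j$, with $-M_g\|\bc_j\|^2$ entering via the constant coordinate. To fit into $d_k=1$, I would fold everything into a one-dimensional product. The residual-free architecture lets the FFN of block $1$ precompute, in token $1$, a scalar so that the query–key product gives the required logit. If one dimension proves insufficient, I would concatenate $\bx$ and $1$ into the key via the value rows; this needs checking. The value is $g(\bc_j)$. Token $1$ itself must be made negligible, either by giving it a very negative logit or by treating it as a copy of a center. The block-2 FFN is the identity on that coordinate, and the readout $\bc_3^\top$ selects it.

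\textbf{Error and counting.} The total error is the Lemma~\ref{lemma_POU_approx} error ($\le\varepsilon/2$) plus the perturbation in the softmax weights caused by block-1 leakage. The latter is bounded by $2B$ times the $\ell_1$ perturbation of the logits, which is kept below $\varepsilon/2$ by choosing the block-1 scale as above; this produces the constants $C_B$ and the $(1+\frac{1}{2eB})$ factors. Parameter counting is routine.

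\textbf{Main obstacle.} I expect the hardest step to be the block-1 positional separation: guaranteeing near-one-hot self-attention from sinusoidal encodings with controlled magnitude, while fitting all center data through heads of dimension $2$.
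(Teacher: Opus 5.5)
\textbf{There is a genuine gap.} It sits exactly where the proof has to do its real work: getting a center-specific function of $\bx$ into token $j$.

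\textbf{Block 1 cannot inject column-specific data by self-attention.} Every value vector is linear in its token, and in $\bZ_0$ only token $1$ carries anything besides the positional encoding. So head $h$'s output at column $k$ is $A^h_{1,k}\,\mathbf{v}_h(\bx)$ plus a linear image of $\sum_l A^h_{l,k}(\sin\theta_l,\cos\theta_l)^\top$. With your large-scale self-attention, $A^h_{1,k}\approx\delta_{k,1}$ for \emph{every} head $h$. Whatever center data you attach to head $j$ through $\bW^O_1$ therefore lands in column $1$, not column $j$. Columns $k\neq1$ receive, up to $e^{-cM}$ leakage, only a linear function of $(\sin\theta_k,\cos\theta_k)$. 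Such a two-parameter family cannot carry $P$ arbitrary values like $g(\bc_k)$. The diagonal weight you want to scale by equals the same number $\eta(M)$ in every column, by cyclic symmetry, so it has no selectivity. A width-$(2d+8)$ FFN cannot undo this afterwards.

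The missing idea is a phase-shifted query. The paper lets $\bQ_1^h$ apply $\bR(\theta_1-\theta_h)$ to the PE rows and lets $\bK_1^h$ be $M$ times the PE rows. The score is then $M\cos(\theta_k-\theta_j-\theta_1+\theta_h)$. In head $h$ only column $h$ peaks on the source token $1$, and every other column sees token $1$ with relative weight at most $e^{-cM}$.

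\textbf{Block 2 cannot form your logit with $d_k^2=1$.} Suppose $\bx$ lives only in token $1$ and $\bc_k$ only in token $k$. Then the score between query token $1$ and key token $k\neq1$ is a product $q(\bx)\,\kappa_k$. All logit differences $q(\bx)(\kappa_k-\kappa_{k'})$ therefore have $\bx$-gradients parallel to the single vector $\nabla q(\bx)$. The target differences $2M_g\langle\bx,\bc_k-\bc_{k'}\rangle$, however, have gradients $2M_g(\bc_k-\bc_{k'})$ that span $\RR^d$ for $d\ge2$. Precomputing a scalar in token $1$ does not help, since this rank-one obstruction holds for any query $q(\bx)$. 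Enlarging $d_k$ leaves the architecture fixed by the statement. Within $d_k=1$, the only way out is for each key token $k$ to already hold the $\bx$-dependent scalar $T_k(\bx)$, and your block 1 never produces it.

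\textbf{How the paper resolves both issues.} It places the center-dependent affine maps in the \emph{value} matrix of head $h$, acting on token $1$'s $(\bx,1)$:
\begin{itemize}
\item first row $\eta(M)^{-1}(2M_g\bc_h^\top,\,-M_g\|\bc_h\|_2^2,\,0,0,0)$;
\item second row $\eta(M)^{-1}(\bm{0}^\top,\,g(\bc_h),\,0,0,0)$.
\end{itemize}
The whole scalar logit $T_h(\bx)$ and the value $g(\bc_h)$ are thus computed in block 1 and routed to column $h$. The $\eta(M)^{-1}$ factor makes the on-target term exact, and leakage from the other heads is at most $(P-1)e^{-cM}B_T$. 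FFN$_1$ thresholds the indicator to $\delta_{j,1}$ and rescales the PE by $1/\lambda(M)$. Block 2 then needs only scalars: query $\delta_{j,1}$, key $\tilde T_k(\bx)$, value $\tilde g(\bc_k)$.

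\textbf{What already matches.} Once this routing is in place, the rest of your plan agrees with the paper. This covers your reduction to Lemma~\ref{lemma_POU_approx} at accuracy $\varepsilon/2$ and the spectral gap $c=1-\cos(2\pi/P)\sim P^{-2}$ forcing $M\sim P^2\log(1/\varepsilon)$. It also covers your softmax-Lipschitz error budget.
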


The constructive approximation established in Theorem \ref{thm_holder_approx} reveals several architectural alignments between our theoretical framework and practical Transformer implementations:
\begin{itemize}
    \item \textbf{Shallow, Wide, and Dense Architecture:} Our framework achieves the desired approximation accuracy with remarkably shallow ($L=2$), wide, and dense Transformers equipped with standard single-hidden-layer pointwise FFNs. Instead of relying on increased network depth, the model's representational capacity is primarily concentrated in the sequence dimension $P$. This architecture enables extensive feature interactions while maintaining a low compositional depth, which is particularly well-suited for parallel computation.
    \item \textbf{Native Softmax Attention Mechanism:} By strictly utilizing the standard dot-product attention and softmax activation, we prove that practical attention mechanisms  natively achieve precise spatial localization, completely avoiding contrived mathematical activations.
    \item \textbf{Sinusoidal Positional Encodings:} Rather than employing artificial spatial embeddings, our framework utilizes standard sinusoidal encodings defined on a uniform angular grid. We demonstrate that this angular geometry natively computes relative positions via phase differences, and exploits exact cyclic symmetries to ensure that the angular identity of each token remains invariant across layers.
\end{itemize}

\begin{proof}[Proof Sketch of Theorem \ref{thm_holder_approx}]
    The detailed proof of Theorem \ref{thm_holder_approx} is deferred to Appendix~\ref{sec:appendixB1}. We construct a 2-block Transformer to execute the Softmax POU approximation scheme by leveraging the sequence dimension $P$ as an index set to parallelize the localized expert approximations. The procedure is organized into four key stages:

    \textbf{Pre-processing (see Lemma \ref{lemma_preprocessing}):} The input $\bx \in [0,1]^d$ is mapped to an initial sequence matrix $\bZ_0 \in \RR^{D \times P}$. This layer embeds $\bx$ and an indicator exclusively into the first token, zero-pads the remaining sequence, and appends sinusoidal positional encodings across all $P$ tokens
    \begin{equation*}
        (\bZ_0)_{:,j} = \begin{bmatrix} \delta_{j,1} \bx^\top, & \delta_{j,1}, & 0, & \sin(\theta_j), & \cos(\theta_j) \end{bmatrix}^\top.
    \end{equation*}
    \textbf{First MHA (Local Feature Extraction, see Lemma \ref{lemma_mha_affine}):} Utilizing $H^1 = P+2$ heads, this layer computes parallel affine transformations to extract local spatial features and approximation expert features. The output $\widehat{\bZ}_1 = \cA_1(\bZ_0)$ takes the column-wise form
    \begin{equation*}
        (\widehat{\bZ}_1)_{:,j} = \begin{bmatrix} \tilde{T}_j(\bx), & \tilde{g}(\bc_j), & \bm{0}_{d-1}^\top, & \tilde{I}_j, & \lambda(M)\sin(\theta_j), & \lambda(M)\cos(\theta_j) \end{bmatrix}^\top,
    \end{equation*}
    where $\tilde{T}_j(\bx) \approx 2M_g \bc_j^\top \bx - M_g\|\bc_j\|_2^2$ captures the approximated local spatial feature, $\tilde{g}(\bc_j) \approx g(\bc_j)$ represents the approximated local expert feature for each center $\bc_j$, and $\tilde{I}_j \approx \delta_{j,1}$ serves as the approximated structural indicator for the first token. Crucially, in our construction, the exact angular identity $\theta_j$ of the sinusoidal positional encodings remains invariant up to a uniform scalar scaling $\lambda(M)$.

    \textbf{First FFN (Non-linear Restoration, see Lemma \ref{lemma_FFN_restore}):} By exploiting the hard-thresholding gating property of the ReLU activation, $\bZ_1 = \cF_1(\widehat{\bZ}_1)$ functions as a non-linear filter to eliminate the small error introduced by the softmax operation, thereby exactly recovering the canonical form and original sinusoidal positional encodings
    \begin{equation*}
        (\bZ_1)_{:,j} = \begin{bmatrix} \tilde{T}_j(\bx), & \tilde{g}(\bc_j), & \bm{0}_{d-1}^\top, & \delta_{j,1}, & \sin(\theta_j), & \cos(\theta_j) \end{bmatrix}^\top.
    \end{equation*}
    \textbf{Second MHA (Global Aggregation, see Lemma \ref{lemma_mha_pou}):} A single head ($H^2 = 1$) extracts the indicator $\delta_{k,1}$ via the Query, the local spatial features $\tilde{T}_j(\bx)$ via the Key, and the local expert features $\tilde{g}(\bc_j)$ via the Value. The softmax activation then directly computes the normalized POU weights $\tilde{\beta}_j(\bx) = \exp(\tilde{T}_j(\bx)) / \sum_{l=1}^P \exp(\tilde{T}_l(\bx))$, aggregating the local expert predictions into a unified global approximation $\hat{g}_T(\bx)$ and storing the result in the first column of $\widehat{\bZ}_2 = \cA_2(\bZ_1)$, that is
    \begin{equation*}
        (\widehat{\bZ}_2)_{:,1} = \begin{bmatrix} \hat{g}_T(\bx), & \bm{0}_{D-1}^\top \end{bmatrix}^\top, \quad \text{where} \quad \hat{g}_T(\bx) = \sum_{j=1}^P \tilde{\beta}_j(\bx)\tilde{g}(\bc_j).
    \end{equation*}
    \textbf{Second FFN \& Readout:} Finally, $\cF_2$ operates as an exact identity mapping $\bZ_2 = \widehat{\bZ}_2$. A linear readout vector $\bc^\top$ explicitly extracts the final approximation $\hat{g}_T(\bx)$ from the hidden state.
\end{proof}

\subsection{Generalization analysis} \label{sec:generalization}

In this section, we analyze the generalization capability of the empirical risk minimization (ERM) algorithm over the bounded hypothesis space constructed based on our Transformer architecture. We utilize the covering number to measure the capacity of the hypothesis space, which subsequently provides insights into the learning performance of the ERM algorithm.

We follow the classical learning framework for regression \cite{cucker2007learning}. Suppose that a data sample $\mathcal{S} = \{(\bx_i, y_i)\}_{i=1}^n \subset \cX \times \mathcal{Y}$ is independently drawn from a true unknown Borel probability measure $\rho$ on $\cX \times \mathcal{Y}$, with $\mathcal{Y} \subseteq [-B, B]$ for some $B > 0$. The target function for learning is the regression function $f_\rho: \cX \to \RR$ defined by $f_\rho(\bx) = \int_{\mathcal{Y}} y d\rho(y|\bx)$, which minimizes the expected generalization error
\begin{equation*}
    \mathcal{E}(f) := \int_{\cX \times \mathcal{Y}} (f(\bx) - y)^2 d\rho.
\end{equation*}
We denote by $\rho_{\mathcal{X}}$ the marginal distribution of $\rho$ on $\cX$, and by $(L_{\rho_{\mathcal{X}}}^2, \|\cdot\|_\rho)$ the Hilbert space of square-integrable functions with respect to $\rho_{\mathcal{X}}$. 

We consider the ERM algorithm over the Transformer hypothesis space $\mathcal{T}$ defined in Theorem \ref{thm_holder_approx}. 
Based on this hypothesis space, the ERM algorithm learns the empirical target function
\begin{equation*}
    f_{\mathcal{S}} := \arg\min_{f \in \mathcal{T}} \frac{1}{n} \sum_{i=1}^n (f(\bx_i) - y_i)^2.
\end{equation*}
Since $\mathcal{Y} \subseteq [-B, B]$, we project the output function onto the interval $[-B, B]$ and define the truncated empirical target function as $\pi_B f_{\mathcal{S}}$, where $\pi_B(t) = \max\{-B, \min\{t, B\}\}$.

For target functions defined on Euclidean domains, based on the approximation error bound in Theorem \ref{thm_holder_approx} and the covering number bound in Lemma \ref{lemma_covering_number} in Appendix \ref{sec:covering}, we obtain the following near minimax-optimal convergence rate of the Transformer estimator $f_{\mathcal{S}}$.

\begin{theorem} \label{thm_gen_cube}
    Suppose that the regression function $f_\rho: [0,1]^{d} \to \RR$ is an $\alpha$-H\"older continuous function with constant $C_H$ and $\|f_\rho\|_\infty \le B$. Consider the truncated empirical target function $\pi_B f_{\mathcal{S}}$ over the hypothesis space $\mathcal{T}$ defined in Theorem \ref{thm_holder_approx}, for $0 < \varepsilon \le 1/e$, we have
    \begin{equation*}
        \mathbb{E}\left[\left\|\pi_B f_{\mathcal{S}}-f_\rho\right\|_\rho^2\right] \le C_4 \max\left\{ \varepsilon^2, \frac{\varepsilon^{-d/\alpha} \log \frac{1}{\varepsilon}}{n} \right\}.
    \end{equation*}
    Moreover, by choosing $\varepsilon = n^{-\frac{\alpha}{2\alpha+d}}$, we obtain
    \begin{equation*}
        \mathbb{E}\left[\left\|\pi_B f_{\mathcal{S}}-f_\rho\right\|_\rho^2\right] \le C_4 n^{-\frac{2\alpha}{2\alpha+d}} \log n,
    \end{equation*}
    where $C_4$ is a positive constant depending on $d, \alpha, B$, and $C_H$ specified in \eqref{C4}.
\end{theorem}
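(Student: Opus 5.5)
We follow the standard bias--variance decomposition for truncated least-squares estimators over a function class with controlled covering number \cite{cucker2007learning}. Write $\mathcal{T}$ for the hypothesis space of Theorem~\ref{thm_holder_approx} at accuracy $\varepsilon$, and $\pi_B\mathcal{T}=\{\pi_B f: f\in\mathcal{T}\}$. The plan is to invoke a standard oracle inequality for ERM with bounded responses, in the form found in the nonparametric regression literature (e.g.\ Gy\"orfi et al., or the version used in \cite{SchmidtHieber2020}). It states that for any $\delta>0$,
\begin{equation*}
\mathbb{E}\big[\|\pi_B f_{\mathcal{S}}-f_\rho\|_\rho^2\big]\le C\Big(\inf_{f\in\mathcal{T}}\|f-f_\rho\|_\infty^2+\frac{B^2\log \mathcal{N}(\delta,\pi_B\mathcal{T},\|\cdot\|_\infty)}{n}+B\delta\Big),
\end{equation*}
with $C$ an absolute constant. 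This inequality isolates an approximation term, a complexity term, and a discretization term; the rest of the proof bounds each of the three.

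\textbf{Approximation term.} By Theorem~\ref{thm_holder_approx} there is $\hat g_T\in\mathcal{T}$ with $\|\hat g_T-f_\rho\|_\infty\le\varepsilon$, so the first term is at most $\varepsilon^2$. Since $\|f_\rho\|_\infty\le B$ and $\pi_B$ is $1$-Lipschitz, truncation does not increase this error.

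\textbf{Complexity term.} We use the covering number bound of Lemma~\ref{lemma_covering_number} in Appendix~\ref{sec:covering}, which we expect to take the form $\log\mathcal{N}(\delta,\mathcal{T},\|\cdot\|_\infty)\lesssim \cN_{total}\log\big(\mathrm{poly}(L, P, D, M_{max})/\delta\big)$. It rests on Lipschitz continuity of the network in its parameters: softmax is $1$-Lipschitz, the weights are bounded by $M_{max}$, there are only two blocks, and the input lies in $[0,1]^d$. Since truncation is $1$-Lipschitz, the same bound holds for $\pi_B\mathcal{T}$. We substitute $\cN_{total}\le C_N\varepsilon^{-d/\alpha}$, $P\le C_P\varepsilon^{-d/\alpha}$, $D=d+4$ and $M_{max}\le\widetilde C_{mag}\varepsilon^{-2d/\alpha}\log\frac1\varepsilon$, and choose $\delta=\varepsilon^2$ (or $1/n$). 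Every polynomial factor in $\varepsilon^{-1}$ inside the logarithm then contributes only $O(\log\frac1\varepsilon)$, giving $\log\mathcal{N}\le C'\varepsilon^{-d/\alpha}\log\frac1\varepsilon$. The discretization term $B\delta$ is then absorbed into $\varepsilon^2$. Adding the three terms gives the first claimed bound with $C_4$ depending on $d,\alpha,B,C_H$.

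\textbf{Rate.} Setting $\varepsilon=n^{-\alpha/(2\alpha+d)}$ (this requires $n$ large enough that $\varepsilon\le 1/e$; small $n$ is covered trivially by the bound $4B^2$, absorbed into $C_4$) balances $\varepsilon^2=n^{-2\alpha/(2\alpha+d)}$ against $\varepsilon^{-d/\alpha}/n=n^{-2\alpha/(2\alpha+d)}$. The factor $\log\frac1\varepsilon=\frac{\alpha}{2\alpha+d}\log n\le\log n$ then produces the stated rate.

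\textbf{Main obstacle.} The delicate step is the covering number. Here $M_{max}$ grows polynomially in $\varepsilon^{-1}$ and the softmax scores $\bZ^\top\bK^\top\bQ\bZ$ are quadratic in the parameters, so the Lipschitz constant of the parameter-to-function map is a large polynomial in $M_{max}$, $P$ and $D$. The essential check is that this constant stays polynomial, not exponential, in $\varepsilon^{-1}$; only then does it contribute merely a $\log\frac1\varepsilon$ factor. This holds because the depth is fixed at $L=2$ and the input to each softmax is bounded polynomially. If Lemma~\ref{lemma_covering_number} is stated in exactly this form, the step reduces to careful substitution of the parameter bounds.
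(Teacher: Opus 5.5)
Your proof is correct and has the same skeleton as the paper's: the bias comes from Theorem~\ref{thm_holder_approx}, the capacity from Lemma~\ref{lemma_covering_number}, and the rate from balancing with $\varepsilon=n^{-\alpha/(2\alpha+d)}$. The difference is the form of the oracle inequality.

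The paper uses the tail-probability inequality of Lemma~\ref{lemma_oracle}, which takes the capacity bound in the form $n'\log(\nu/\eta)$. It restricts to $\eta\ge\max\{6\varepsilon^2, C_1\varepsilon^{-d/\alpha}\log(1/\varepsilon)/n\}$, so that the covering term is at most half of $3n\eta/(512B^2)$. This gives $\mathrm{Prob}\{\|\pi_B f_{\mathcal S}-f_\rho\|_\rho^2>t\}\le 2e^{-C_3 n t}$ for $t\ge\Delta$, and the expectation follows by integrating the tail.

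You instead invoke an expectation-form inequality for truncated least squares with a single discretization scale $\delta=\varepsilon^2$. This reaches the bound in one step. It also uses the covering bound only at one scale in $(0,1]$, which is exactly where Lemma~\ref{lemma_covering_number} is proved; Lemma~\ref{lemma_oracle} nominally asks for the bound at all $\eta>0$. In exchange, the paper's route gives an exponential-tail (high-probability) statement as a by-product and fully explicit constants in \eqref{C4}.

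Be sure the inequality you cite is the truncated-least-squares version, with the ERM over the untruncated $\mathcal T$ and a comparator $h\in\mathcal T$ satisfying $\|h\|_\infty\le B+1$. The reason is that $\pi_B f_{\mathcal S}$ need not minimize empirical risk over $\pi_B\mathcal T$. The truncated-least-squares results in Gy\"orfi et al.\ cover this case, whereas Schmidt-Hieber's Lemma~4 does not apply verbatim: it has a $\Delta_n$ term and is stated for Gaussian noise.

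Your explicit handling of the constraint $\varepsilon\le 1/e$ at $\varepsilon=n^{-\alpha/(2\alpha+d)}$ is an improvement, since the paper passes over it. Absorbing small $n$ via the trivial bound $4B^2$ still needs $n\ge 2$, because the right-hand side $C_4 n^{-2\alpha/(2\alpha+d)}\log n$ vanishes at $n=1$. The theorem as stated shares this edge case.
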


\begin{proof}[Proof Sketch of Theorem \ref{thm_gen_cube}]
    The detailed proof is deferred to Appendix \ref{sec:appendixC}. The result follows from optimizing the bias-variance trade-off established via an oracle inequality (Lemma \ref{lemma_oracle}). First, we bound the approximation error (bias) by $\varepsilon$ utilizing the Transformer construction from Theorem \ref{thm_holder_approx}. Second, we control the estimation error (variance) by bounding the capacity of the hypothesis space $\mathcal{T}$ as detailed in Appendix Subsection \ref{sec:covering}. Specifically, Lemma \ref{lemma_covering_number} provides the covering number bound
    \begin{equation*}
        \log \mathcal{N}(\eta, \mathcal{T}, \|\cdot\|_\infty) \le \mathcal{N}_{total} \log \left( \frac{1224256 P^4 D^{22} M_{max}^{26}}{\eta} \right).
    \end{equation*}
    Finally, balancing the approximation and estimation errors and choosing the optimal network accuracy $\varepsilon = n^{-\frac{\alpha}{2\alpha+d}}$ yields the desired convergence rate as detailed in Appendix Subsection \ref{sec:learningrate}.
\end{proof}

\section{Adaptivity to Riemannian manifolds} \label{sec:mtor}

In this section, we assume that the input domain $\cM \subseteq [0,1]^{\bar d}$ is a compact and connected $d$-dimensional Riemannian manifold with strictly positive reach $\tau > 0$ (see Definition \ref{def.reach} in Appendix \ref{sec:appendixA2}). Our goal is to investigate the theoretical capabilities of Transformers for learning $\alpha$-H\"older continuous functions defined on $\cM$ (see Definition \ref{def_holder} in Appendix \ref{sec:appendixA2}). More details about manifolds can be found in \cite{tu2011manifolds,lee2006riemannian}. 

Our Softmax POU approximation scheme naturally adapts to complex geometries by operating directly on the ambient Euclidean metric. Because the ReLU activation is fundamentally piecewise linear, locally supported ReLU-based POUs are geometrically constrained to partition space using hyperplanes, such as explicit Cartesian grids or simplicial triangulations. In contrast, the Softmax POU is inherently mesh-free. By exploiting the local metric equivalence between intrinsic geodesic and ambient Euclidean distances, it achieves spatial localization purely through affine transformations, avoiding the need for explicit surface parameterizations or complex non-linear geodesic computations. This exact formulation perfectly aligns with the native dot-product attention mechanism in Transformers. Building upon this geometric intuition, the following theorem formally establishes how Transformers natively execute this local-to-global approximation scheme to bypass the geometric complexities of compact Riemannian manifolds.

\begin{theorem}[Transformer Approximation on Manifolds] \label{thm_manifold_approx}
    Let $\cM \subseteq [0,1]^{\bar{d}}$ be a $d$-dimensional compact and connected Riemannian manifold with reach $\tau > 0$, and let $g: \cM \to \RR$ be an $\alpha$-H\"older continuous function with constant $C_H$ and $\|g\|_\infty \le B$. For any $\varepsilon$ satisfying 
    $ 0 < \varepsilon \le \min\left\{\frac{1}{e}, 16C_H \Big(\frac{\tau}{4}\Big)^\alpha \right\},$
    there exists a Transformer network 
    \begin{equation*}
        \hat{g}_T \in \mathcal{T}\Big(L, \bar{d}+4, P, \{H^\ell\}_{\ell=1}^L, \{d_k^\ell\}_{\ell=1}^L, \{d_v^\ell\}_{\ell=1}^L, \{d_{\text{ff}}^\ell\}_{\ell=1}^L, M_{max}\Big)
    \end{equation*}
    satisfying 
    \begin{equation*}
        \sup_{\bx \in \cM} |\hat{g}_T(\bx) - g(\bx)| \le \varepsilon
    \end{equation*}
    with the following parameters:
    \begin{itemize}
        \item $L=2$, embedding dimension $D= \bar{d}+4$, and $P \le C_P \varepsilon^{-d/\alpha}$ where $C_P = C_{\cM} (16C_H)^{d/\alpha}$.
        \item For $\ell=1$: $H^1 = P+2$, $d_k^1 = d_v^1 = 2$, and $d_{\text{ff}}^1 = 2\bar{d}+8$.
        \item For $\ell=2$: $H^2 = 1$, $d_k^2 = d_v^2 = 1$, and $d_{\text{ff}}^2 = 2\bar{d}+8$.
    \end{itemize}
    The parameter magnitude $M_{max}$ and total dense parameters $\cN_{total}$ satisfy
    \begin{align*}
        M_{max} \le \widetilde{C}_{mag} \varepsilon^{-2d/\alpha} \log \frac{1}{\varepsilon}, \qquad
        \cN_{total} \le C_N \varepsilon^{-d/\alpha},
    \end{align*}
    where 
    $C_N= 10(\bar{d}+4)C_P + 9\bar{d}^2 + 95\bar{d} + 236$, 
    $\widetilde{C}_{mag} = \max\left\{ \frac{1}{1 - \frac{1}{eB}}, C_{mag}, 2 \left(1 + \frac{1}{2eB}\right) C_B \right\},$ 
    with $C_B = \max\{3\bar{d} C_M, B, 1\}$, $C_{mag} = \frac{C_P^2 C_{log}}{8}$, $C_{log} = \left| \log(2 C_P B (1 + 6 \bar{d} C_M)) \right| + \frac{d+2+\alpha}{\alpha} + 2$, $C_M = \frac{(16C_H)^{2/\alpha}}{3} \left(\log( 4B C_P ) + \frac{d+\alpha}{\alpha} \right)$, $C_{\cM}= 3^d \text{Vol}(\cM)d^{d/2}$, and $\text{Vol}(\cM)$ is the volume of $\cM$.
\end{theorem}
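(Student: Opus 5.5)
The proof reuses the Euclidean construction of Theorem~\ref{thm_holder_approx} almost verbatim, with the ambient dimension $\bar d$ in place of $d$. Only two ingredients change: how many centers are needed, and the localization argument of Lemma~\ref{lemma_POU_approx}. So the first step is a manifold analogue of Lemma~\ref{lemma_POU_approx}.

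\textbf{Step 1: covering.} I would pick an $r_g$-net $\{\bc_i\}_{i=1}^{C_g}\subset\cM$ with respect to ambient Euclidean distance, where $r_g\asymp(\varepsilon/(16C_H))^{1/\alpha}$. To count it, take a maximal $r_g$-separated subset; the balls $\mathcal{B}(\bc_i,r_g/2)\cap\cM$ are then pairwise disjoint. The reach condition $r_g\le\tau/4$, which is exactly what the hypothesis $\varepsilon\le 16C_H(\tau/4)^\alpha$ guarantees, gives a lower bound on the volume of each piece of order $(r_g/3)^d$ times a dimensional constant, via the standard bounds of Niyogi--Smale--Weinberger type. Dividing $\mathrm{Vol}(\cM)$ by this lower bound gives $C_g\le C_{\cM}(16C_H)^{d/\alpha}\varepsilon^{-d/\alpha}$ with $C_{\cM}=3^d\mathrm{Vol}(\cM)d^{d/2}$. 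The point is that the exponent is the intrinsic $d$, not $\bar d$.

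\textbf{Step 2: Softmax POU error on $\cM$.} Fix $\bx\in\cM$ and let $\bc_{i^*}$ be a center with $\|\bx-\bc_{i^*}\|\le r_g$. Split the indices into a near set $\{i:\|\bx-\bc_i\|_2\le 2r_g\}$ and a far set.
\begin{itemize}
\item On the near set, $|g(\bc_i)-g(\bx)|\le C_H(2r_g)^\alpha$. Here I would either take the Hölder condition with respect to Euclidean distance, or convert geodesic to Euclidean distance using the reach: for $\|\bx-\by\|\le\tau/2$ the geodesic distance is at most twice the Euclidean one. This conversion is what produces the constant $16$ instead of $8$.
\item On the far set, each weight satisfies
\[
\beta_i(\bx)\le \exp\bigl(-M_g(\|\bx-\bc_i\|^2-\|\bx-\bc_{i^*}\|^2)\bigr)\le e^{-3M_g r_g^2}.
\]
Summing over at most $C_g$ indices and multiplying by $2B$ gives a far-set contribution of at most $2BC_g e^{-3M_gr_g^2}$. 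Choosing $M_g=C_M\varepsilon^{-2/\alpha}\log(1/\varepsilon)$ makes this at most $\varepsilon/2$.
\end{itemize}
Adding the two contributions gives $|\hat g(\bx)-g(\bx)|\le\varepsilon$ uniformly on $\cM$.

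\textbf{Step 3: Transformer realization.} Lemmas~\ref{lemma_preprocessing}, \ref{lemma_mha_affine}, \ref{lemma_FFN_restore} and \ref{lemma_mha_pou} only use three things: the input $\bx$ lies in $[0,1]^{\bar d}$, the centers lie in $[0,1]^{\bar d}$, and the values $|g(\bc_j)|$ are bounded by $B$. Since $\cM\subseteq[0,1]^{\bar d}$, they apply unchanged with $d$ replaced by $\bar d$. This yields $D=\bar d+4$, $d_{\text{ff}}=2\bar d+8$, $P=C_g$, $H^1=P+2$, and the same parameter count $C_N$ with $\bar d$ in place of $d$.

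The approximation errors of the attention layers (the entries $\tilde T_j$ and $\tilde g(\bc_j)$) are then absorbed exactly as in Theorem~\ref{thm_holder_approx}, splitting $\varepsilon$ into $\varepsilon/2$ for the POU error and $\varepsilon/2$ for the network error. The bound on $M_{max}$ follows from the same calculation, with the new $C_P$ and $C_M$ substituted.

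\textbf{Main obstacle.} I expect the hardest step to be Step~1, the covering-number bound, which needs the local volume lower bound under the reach assumption. A close second is the geodesic/Euclidean comparison in Step~2. Both require $r_g$ to be small relative to $\tau$, which is why the theorem restricts $\varepsilon$.
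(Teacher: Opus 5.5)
Your proposal is correct and follows the paper's proof. The paper applies its manifold Softmax POU lemma at accuracy $\varepsilon/2$, using the near/far split at Euclidean radius $2r_g$, the reach-based bound $d_{\mathcal{M}}(\mathbf{x},\mathbf{c}_i)\le 2\|\mathbf{x}-\mathbf{c}_i\|_2$ on the near set, the $e^{-3M_g r_g^2}$ tail on the far set, and the Niyogi--Smale--Weinberger covering bound (which the paper cites rather than re-deriving by packing as you do); it then reuses the Euclidean two-block construction with $\bar d$ in place of $d$.

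Keep the bookkeeping consistent. Since the H\"older hypothesis is with respect to $d_{\mathcal{M}}$, only your second option is available, and the near-set bound is $C_H(4r_g)^\alpha\le 4C_H r_g^\alpha$ rather than $C_H(2r_g)^\alpha$. Also, Step~2 must be run at accuracy $\varepsilon/2$, i.e.\ $r_g=(\varepsilon/(16C_H))^{1/\alpha}$ and $M_g=C_M\varepsilon^{-2/\alpha}\log(2/\varepsilon)$. This leaves the other $\varepsilon/2$ for the network error and recovers the stated $C_P$ and $C_M$.
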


The rigorous establishment of this approximation guarantee relies on several key geometric tools detailed in  Appendix \ref{sec:appendixA2}. Specifically, we leverage the local metric equivalence between the geodesic and Euclidean distances governed by the reach $\tau$, alongside the intrinsic covering number bounds of manifolds (see Lemma \ref{lemma_manifold_geometry}). These geometric properties theoretically validate the extension of our Softmax POU scheme to the manifold setting (see Lemma \ref{lemma_POU_approx_manifold}). The detailed proof of Theorem \ref{thm_manifold_approx} is deferred to Appendix \ref{sec:appendixB2}. The architectural construction follows a similar procedure to the proof of Theorem \ref{thm_holder_approx}, but adapts the scaling factor and sequence length to the 
covering properties of $\cM$.
Based on this approximation theorem, we further establish the statistical convergence rate of the Transformer estimator $f_{\mathcal{S}}$ for target functions defined on manifolds.

\begin{theorem} \label{thm_gen_manifold}
    Suppose that the regression function $f_\rho: \cM \to \RR$ is an $\alpha$-H\"older continuous function with constant $C_H$ and $\|f_\rho\|_\infty \le B$, where $\cM \subseteq [0,1]^{\bar{d}}$ is a compact and connected $d$-dimensional Riemannian manifold with reach $\tau > 0$. Consider the truncated empirical target function $\pi_B f_{\mathcal{S}}$ over the hypothesis space $\mathcal{T}$ defined in Theorem \ref{thm_manifold_approx}, for $0 < \varepsilon \le \min\left\{\frac{1}{e}, 16C_H \Big(\frac{\tau}{4}\Big)^\alpha \right\}$, we have
    \begin{equation*}
        \mathbb{E}\left[\left\|\pi_B f_{\mathcal{S}}-f_\rho\right\|_\rho^2\right] \le \widetilde C_4 \max\left\{ \varepsilon^2, \frac{\varepsilon^{-d/\alpha} \log \frac{1}{\varepsilon}}{n} \right\}.
    \end{equation*}
    Moreover, by choosing $\varepsilon = n^{-\frac{\alpha}{2\alpha+d}}$, we obtain
    \begin{equation*}
        \mathbb{E}\left[\left\|\pi_B f_{\mathcal{S}}-f_\rho\right\|_\rho^2\right] \le \widetilde{C}_4 n^{-\frac{2\alpha}{2\alpha+d}} \log n,
    \end{equation*}
    where $\widetilde C_4$ is a positive constant depending on $\bar{d}, d, \alpha, B, C_H$, and $\text{Vol}(\cM)$ specified in \eqref{C42}.
\end{theorem}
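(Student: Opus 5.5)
The plan mirrors the proof of Theorem~\ref{thm_gen_cube}, with Theorem~\ref{thm_manifold_approx} replacing Theorem~\ref{thm_holder_approx} as the source of the bias bound.

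The starting point is the standard oracle inequality for truncated least-squares ERM (Lemma~\ref{lemma_oracle}). For any $\eta>0$ it gives
\begin{equation*}
\mathbb{E}\left[\|\pi_B f_{\mathcal S}-f_\rho\|_\rho^2\right]\le C\Big(\inf_{f\in\mathcal T}\|f-f_\rho\|_\infty^2+\frac{B^2\log\mathcal N(\eta,\mathcal T,\|\cdot\|_\infty)}{n}+B\eta\Big),
\end{equation*}
with $C$ absolute.

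\textbf{Approximation term.} For $\varepsilon$ in the admissible range $0<\varepsilon\le\min\{1/e,16C_H(\tau/4)^\alpha\}$, Theorem~\ref{thm_manifold_approx} supplies $\hat g_T\in\mathcal T$ with $\sup_{\bx\in\cM}|\hat g_T-f_\rho|\le\varepsilon$. Since $\rho_{\cX}$ is supported on $\cM$, the first term is at most $\varepsilon^2$.

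\textbf{Capacity term.} Lemma~\ref{lemma_covering_number} depends only on the architecture, not on the domain. With $D=\bar d+4$, $P\le C_P\varepsilon^{-d/\alpha}$, $M_{max}\le\widetilde C_{mag}\varepsilon^{-2d/\alpha}\log\frac1\varepsilon$ and $\mathcal N_{total}\le C_N\varepsilon^{-d/\alpha}$, and taking $\eta=1/n$, it gives
\begin{equation*}
\log\mathcal N\le C_N\varepsilon^{-d/\alpha}\log\big(1224256\,C_P^4(\bar d+4)^{22}\widetilde C_{mag}^{26}\,\varepsilon^{-56d/\alpha}(\log\tfrac1\varepsilon)^{26}\,n\big).
\end{equation*}
Using $\varepsilon\le1/e$, this logarithm is bounded by $c_1\log\frac1\varepsilon+\log n$. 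The constant $c_1$ depends on $\bar d,d,\alpha,B,C_H$ and $\mathrm{Vol}(\cM)$ through $C_P$ and $\widetilde C_{mag}$.

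\textbf{Combining for general $\varepsilon$.} The $\log n$ must be absorbed to get the first displayed bound. Split into two cases:
\begin{itemize}
\item If $n\le\varepsilon^{-(2\alpha+d)/\alpha}$, then $\log n\lesssim\log\frac1\varepsilon$, so the variance term is $\lesssim\varepsilon^{-d/\alpha}\log\frac1\varepsilon/n$.
\item If $n>\varepsilon^{-(2\alpha+d)/\alpha}$, the estimate degrades by a factor $\log n$. In the paper's convention this is handled either by a max with $\varepsilon^2$ or by the rate choice below, so I would follow the Euclidean appendix verbatim here.
\end{itemize}
The $\eta$ term is $B/n$, which is dominated.

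\textbf{Rate.} Set $\varepsilon=n^{-\alpha/(2\alpha+d)}$. This requires $n$ large enough that $\varepsilon$ lies in the admissible range; small $n$ is handled trivially by the bound $4B^2$ absorbed into $\widetilde C_4$. Then $\varepsilon^2=n^{-2\alpha/(2\alpha+d)}$, $\varepsilon^{-d/\alpha}/n=n^{-2\alpha/(2\alpha+d)}$, and $\log\frac1\varepsilon\lesssim\log n$, giving the stated rate.

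The main obstacle is the bookkeeping. The covering-number logarithm involves $M_{max}^{26}$, so one must verify that all polynomial-in-$\varepsilon^{-1}$ factors inside the log contribute only $O(\log\frac1\varepsilon)$, and collect the constants into $\widetilde C_4$. That constant then depends on $\bar d$ via $D$ and $C_N$, and on $\mathrm{Vol}(\cM)$ via $C_{\cM}$, but never exponentially on $\bar d$ in the rate.
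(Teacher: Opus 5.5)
\textbf{Gap: the first display is not proved.} The problem is in your step \emph{Combining for general $\varepsilon$}. Fixing the cover scale at $\eta=1/n$ puts $\log n$ inside the capacity term. In the regime $n>\varepsilon^{-(2\alpha+d)/\alpha}$ you then concede that the bound degrades by $\log n$ and defer to ``the Euclidean appendix verbatim.'' But the first displayed inequality of the theorem is exactly the claim that no $\log n$ appears, for every admissible $\varepsilon$ and every $n$. As written, it is not proved. The rate choice only rescues the second display, where $\log n$ is allowed.

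You also cannot import the appendix argument into your framework. Lemma~\ref{lemma_oracle} is not the expectation inequality you display. It is a tail bound in which the covering number is evaluated at the deviation level $\eta$ itself. The paper imposes two restrictions:
\begin{itemize}
\item $\eta\ge 6\varepsilon^2$, which forces $\log(16\nu B/\eta)\lesssim\log(1/\varepsilon)$;
\item $\eta\ge C_1\varepsilon^{-d/\alpha}\log(1/\varepsilon)/n$, which makes the entropy exponent negative.
\end{itemize}
Integrating the resulting tail $2e^{-C_3 n t}$ above $\Delta$ then gives the bound with no $\log n$. Tying the cover scale to $\varepsilon^2$ is the idea your write-up is missing.

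\textbf{Two short repairs within your route.}
\begin{itemize}
\item Take the cover scale $\eta=\varepsilon^2$ instead of $1/n$; this is admissible in Lemma~\ref{lemma_covering_number} since $\varepsilon^2<1$. The term $B\eta=B\varepsilon^2$ is absorbed into $\varepsilon^2$. The capacity term becomes $\log\mathcal{N}(\varepsilon^2,\mathcal{T},\|\cdot\|_\infty)\le C_N\varepsilon^{-d/\alpha}\, c\log(1/\varepsilon)$, with no dependence on $n$.
\item Keep $\eta=1/n$ and observe that your second case does not actually degrade. Set $b=\varepsilon^{-(2\alpha+d)/\alpha}$. Then $\log n=\log b+\log(n/b)\le \frac{2\alpha+d}{\alpha}\log\frac{1}{\varepsilon}+n\varepsilon^{(2\alpha+d)/\alpha}$. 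Hence $\varepsilon^{-d/\alpha}\log n/n\le \varepsilon^2+\frac{2\alpha+d}{\alpha}\,\varepsilon^{-d/\alpha}\log(1/\varepsilon)/n$ for every $n\ge 1$.
\end{itemize}

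In either case, cite a genuine expectation-form oracle inequality for truncated least squares (of Gy\"orfi type) rather than Lemma~\ref{lemma_oracle}. Alternatively, derive it by integrating the tail as the paper does.

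\textbf{What is fine.} The approximation term via Theorem~\ref{thm_manifold_approx}, the $\varepsilon^{-56d/\alpha}$ bookkeeping and the final substitution are correct. Your small-$n$ remark addresses a point the paper passes over. Note, however, that absorbing $4B^2$ this way makes $\widetilde C_4$ depend on $\tau$. Also, no constant works at $n=1$, where $\log n=0$.
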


The detailed proof of Theorem \ref{thm_gen_manifold} is deferred to Appendix \ref{sec:appendixC}. Similar to the Euclidean setting, Theorem \ref{thm_gen_manifold} is derived by utilizing the bias-variance tradeoff and the covering number bound of the Transformer hypothesis space. Crucially, notice that the final learning rate depends exclusively on the intrinsic dimension $d$ rather than the ambient dimension $\bar{d}$. This demonstrates that our shallow, wide, and dense Transformer architecture can effectively avoid the curse of dimensionality when processing data intrinsically supported on low-dimensional manifolds.

\begin{table}[t]
\centering
\caption{Comparison of Transformer architectures between our work and existing theories.}
\label{tab:comparison}
\begin{tabular}{cccc}
\toprule
\textbf{References} & \textbf{Softmax Attention} & \textbf{Sinusoidal PE} & \textbf{Depth} \\
\midrule
Gurevych et al. \cite{gurevych2022rate} & \bad & \bad & $\mathcal{O}(1)$ \\
Takakura and Suzuki \cite{takakura2023approximation} & \good & \good & $\mathcal{O}(\text{poly}(\log(1/\varepsilon)))$ \\
Havrilla and Liao  \cite{havrilla2024understanding} & \bad & \good & $\mathcal{O}(\log(1/\varepsilon))$ \\
Shi et al.  \cite{shi2025approximation} & \good & \bad & $\mathcal{O}(\log(1/\varepsilon))$ \\
Jiao et al. \cite{jiao2026transformers} & \good & \bad & $\mathcal{O}(\log(1/\varepsilon))$ \\
\midrule
\textbf{Ours (This work)} & \good & \good & $2$ \\
\bottomrule
\end{tabular}
\end{table}

\section{Related work and discussion} \label{sec:related_work}

In this section, we discuss existing works that are closely related to this paper.

\textbf{Learning Theory of Transformers.} Recent studies have investigated the quantitative approximation and estimation capabilities of Transformers \cite{gurevych2022rate, takakura2023approximation, jiang2024approximation, havrilla2024understanding, shi2025approximation, jiao2026transformers}. Specifically, they show that Transformers can circumvent this curse when target functions exhibit hierarchical compositional structures \cite{gurevych2022rate, shi2025approximation} or anisotropic smoothness in sequence-to-sequence tasks \cite{takakura2023approximation}. Further research has established Jackson-type rates demonstrating their theoretical superiority over RNNs \cite{jiang2024approximation}, and utilized the Kolmogorov-Arnold theorem for H\"older function approximation \cite{jiao2026transformers}. However, prior studies often rely on mathematical contrivances, such as hardmax attention, canonical positional encodings, or excessive network depth, that diverge from practical implementations. For instance, to bypass the excessive depth typically required to approximate non-linear multiplications, \cite{gurevych2022rate, havrilla2024understanding} substitute the standard softmax with hardmax and ReLU attention mechanisms, respectively. These modified activations naturally generate multiplications, which significantly simplifies the polynomial approximation problem and therefore allows network depth to $\mathcal{O}(1)$. In contrast, our framework reveals the potential of Transformers by exploiting the native softmax attention mechanism and sinusoidal positional encodings within a shallow architecture ($L=2$). Note that throughout our comparisons, the overall Transformer network depth is evaluated as the sum of the depths of the internal point-wise FFNs across all encoder blocks. A detailed comparison with previous work is summarized in Table \ref{tab:comparison}.

\textbf{Learning Theory on Manifolds.} To mitigate the curse of dimensionality, extensive research has shown that neural network architectures, including deep ReLU FFNs \cite{chen2019efficient, chen2022nonparametric}, CNNs \cite{liu2021besov}, autoencoders \cite{liu2024deep}, and Transformers \cite{havrilla2024understanding}, can adapt to the intrinsic low-dimensional geometry of data. These studies demonstrate that for functions on manifolds, network complexity and generalization bounds scale with the intrinsic dimension and can overcome the curse of dimensionality \cite{chen2019efficient, nakada2020adaptive, liu2021besov, cloninger2021deep, chen2022nonparametric, dahal2022deep, liu2024deep}. However, these approximation schemes fundamentally rely on deep ReLU structures and inherently require $\mathcal{O}(\log(1/\varepsilon))$ depth to achieve high precision. In contrast, our construction demonstrates that shallow, wide, and dense Transformers can attain near minimax-optimal learning rates on manifolds, completely bypassing the need for structural depth dependencies or artificial sparsity constraints.

\section{Conclusion} \label{sec:conclusion}
In this paper, we establish a novel local-to-global constructive approximation framework for Transformers based on a globally supported Softmax POU. Leveraging this framework, we provide comprehensive approximation and generalization guarantees for $\alpha$-H\"older continuous functions on both compact Euclidean domains and Riemannian manifolds, where we restrict $\alpha \in (0,1]$. Specifically, we prove that shallow, wide, and dense Transformers with softmax activation and sinusoidal positional encodings can attain near minimax-optimal convergence rates and overcome the curse of dimensionality. A compelling future direction is to explore the potential of our Softmax POU framework for analyzing the mechanisms of in-context learning \cite{brown2020language,dong2024survey,shen2026understanding}. While classic studies often interpret this phenomenon by viewing Transformers as implicit statisticians \cite{bai2023transformers} or as executing internal learning algorithms \cite{garg2022can}, our constructive scheme offers a potentially distinct perspective to further investigate how Transformer architectures achieve adaptive approximations during inference.


\appendix
\section*{Appendix}
This appendix provides the complete mathematical proofs for our theoretical results. It is organized as follows: Section \ref{sec:appendixA} proves the preliminary lemmas for the Softmax POU scheme and Transformer constructions; Section \ref{sec:appendixB} establishes the main approximation guarantees for both Euclidean domains and Riemannian manifolds; and Section \ref{sec:appendixC} details the covering number bounds and the derivation of the generalization error rates.

\section{Proof of preliminary results} \label{sec:appendixA}
This section establishes the detailed proofs for the foundational mathematical tools used in our theoretical framework. We first prove the Softmax POU approximation schemes and then provide the constructive lemmas for individual Transformer layers.

\subsection{Softmax POU scheme for Euclidean domains in Section \ref{sec:euclidean}}
\label{sec:appendixA1}
In this subsection, we present the proof of Lemma \ref{lemma_POU_approx}. This establishes the core mechanism of using a Softmax POU to achieve local-to-global approximation on compact Euclidean spaces.

\begin{proof}[Proof of Lemma~\ref{lemma_POU_approx}]
    To construct the approximation, we first explicitly establish the finite covering and bound the required number of centers $C_g$. For any target accuracy $\varepsilon \in (0, 1/e]$, we set the covering radius as $r_g = (\varepsilon/(4C_H))^{1/\alpha}$. We partition the domain $[0,1]^d$ into a uniform grid of hypercubes. By strategically setting the side length of each hypercube to $\delta = 2r_g/\sqrt{d}$, the distance from its center to any vertex (i.e., the circumscribed radius) is exactly $\frac{1}{2}\sqrt{d\delta^2} = r_g$. This guarantees that a closed Euclidean ball $\bar{\mathcal{B}}(\bc_k, r_g)$ centered at the midpoint $\bc_k$ of each hypercube completely covers it, yielding a valid finite cover of $[0,1]^d$. 
    
    The number of grid points required along each dimension is exactly $\lceil 1/\delta \rceil$, resulting in a total of $C_g = \lceil 1/\delta \rceil^d$ centers. By the inequality that $\lceil x \rceil \le 2x$ when $x \ge 1$, we have
    \begin{align*}
        C_g &= \left\lceil \frac{1}{\delta} \right\rceil^d = \left\lceil \frac{\sqrt{d}}{2r_g} \right\rceil^d 
        \le \left( 2 \cdot \frac{\sqrt{d}}{2r_g} \right)^d = \left( \frac{\sqrt{d}}{r_g} \right)^d 
        = \left( \sqrt{d} (4C_H)^{1/\alpha} \right)^d \varepsilon^{-\frac{d}{\alpha}}.
    \end{align*}
    
    With the centers $\{\bc_i\}_{i=1}^{C_g}$ and the covering established, for any $\bx \in [0,1]^d$, denote $i^*(\bx) = \argmin_{i \in [C_g]} \|\bx - \bc_i\|_2$. Since the closed balls cover the domain, we are guaranteed that $\|\bx - \bc_{i^*}\|_2 \le r_g$. 
    
    Next, we bound the approximation error. Notice that $\beta_i(\bx) = \frac{\exp\left(M_g (r_g^2 - \|\bx - \bc_i\|_2^2)\right)}{\sum_{k=1}^{C_g} \exp\left(M_g (r_g^2 - \|\bx - \bc_k\|_2^2)\right)}$, using the exact POU property that $\sum_{i=1}^{C_g} \beta_i(\bx) = 1$, we decompose the error into near and far components at a radius of $2r_g$, that is
    \begin{align*}
        |\hat g(\bx) - g(\bx)| &= \left|\sum_{i=1}^{C_g} \beta_i(\bx) \big(g(\bc_i) - g(\bx)\big)\right| \\
        &\le \underbrace{\sum_{i: \|\bx - \bc_i\|_2 \le 2r_g} \beta_i(\bx) |g(\bx) - g(\bc_i)|}_{(I)} + \underbrace{\sum_{i: \|\bx - \bc_i\|_2 > 2r_g} \beta_i(\bx) |g(\bx) - g(\bc_i)|}_{(II)}.
    \end{align*}
    
    For the near-center terms $(I)$, the $\alpha$-H\"older continuity yields $|g(\bx) - g(\bc_i)| \le C_H (2r_g)^\alpha \le 2 C_H r_g^\alpha$ since $2^\alpha \le 2$ for $\alpha \in (0,1]$. By substituting our explicit choice of $r_g$, we obtain
    $$(I) \le 2 C_H r_g^\alpha \sum_{i} \beta_i(\bx) \le 2 C_H r_g^\alpha = 2 C_H \left( \frac{\varepsilon}{4C_H} \right) = \frac{\varepsilon}{2}.$$ 
    
    For the far-center tail terms $(II)$, the condition $\|\bx - \bc_i\|_2 > 2r_g$ implies $r_g^2 - \|\bx - \bc_i\|_2^2 < -3r_g^2$. Since $\|\bx - \bc_{i^*}\|_2 \le r_g$, the denominator of $\beta_i(\bx)$ is bounded below
    \begin{equation*}
        \sum_{k=1}^{C_g} \exp\left(M_g(r_g^2 - \|\bx - \bc_k\|_2^2)\right) \ge \exp\left(M_g(r_g^2 - \|\bx - \bc_{i^*}\|_2^2)\right) \ge \exp(0) = 1.
    \end{equation*}
    Applying the global bound $|g(\bx) - g(\bc_i)| \le 2\|g\|_\infty$, the term $(II)$ is bounded by
    \begin{align*}
        (II) &= \sum_{i: \|\bx - \bc_i\|_2 > 2r_g} \frac{\exp\left(M_g(r_g^2 - \|\bx - \bc_i\|_2^2)\right)}{\sum_{k=1}^{C_g} \exp\left(M_g(r_g^2 - \|\bx - \bc_k\|_2^2)\right)} |g(\bx) - g(\bc_i)| \\
        &\le \sum_{i: \|\bx - \bc_i\|_2 > 2r_g} \exp(-3M_g r_g^2) \cdot 2\|g\|_\infty \\
        &\le 2 C_g \|g\|_\infty \exp(-3 M_g r_g^2).
    \end{align*}
    
    To guarantee $(II) \le \varepsilon/2$, the scaling parameter must satisfy $M_g \ge (3r_g^2)^{-1}\log(4C_g\|g\|_\infty/\varepsilon)$. Substituting the explicit formulas for $r_g$ and $C_g$, we require
    \begin{align*}
        M_g &\ge \frac{(4C_H)^{2/\alpha}}{3} \varepsilon^{-\frac{2}{\alpha}} \log\left( 4\|g\|_\infty \left( \sqrt{d} (4C_H)^{1/\alpha}\right)^d \varepsilon^{-\frac{d+\alpha}{\alpha}} \right) \\
        &= \frac{(4C_H)^{2/\alpha}}{3} \varepsilon^{-\frac{2}{\alpha}} \left( \log\left( 4\|g\|_\infty \left( \sqrt{d} (4C_H)^{1/\alpha}\right)^d \right) + \frac{d+\alpha}{\alpha} \log \frac{1}{\varepsilon} \right).
    \end{align*}
    Assuming $\varepsilon \le 1/e$ so that $\log(1/\varepsilon) \ge 1$, we can upper bound the bracketed sum by factoring out $\log(1/\varepsilon)$. Defining $M_g$ as
    \begin{equation*}
        M_g = \frac{(4C_H)^{2/\alpha}}{3} \left( \log\left( 4\|g\|_\infty \left( \sqrt{d} (4C_H)^{1/\alpha}\right)^d \right) + \frac{d+\alpha}{\alpha} \right) \varepsilon^{-\frac{2}{\alpha}} \log \frac{1}{\varepsilon}
    \end{equation*}
    satisfies this condition, ensuring $(II) \le \varepsilon/2$. 
    
    Combining $(I)$ and $(II)$ achieves the target uniform error $$\sup_{\bx \in [0,1]^d}| \hat g(\bx) - g(\bx) | \le \frac{\varepsilon}{2} + \frac{\varepsilon}{2} = \varepsilon.$$ 
    
    Finally, notice that the quadratic term $\|\bx\|_2^2$ exactly cancels out in the Softmax expansion, we have
    \begin{align*}
    	\beta_i(\bx) &= \frac{\exp\left(M_g(r_g^2 - \|\bx\|_2^2)\right) \exp\left(2M_g \inner{\bx}{\bc_i} - M_g \|\bc_i\|_2^2\right)}{\sum_{k=1}^{C_g} \exp\left(M_g(r_g^2 - \|\bx\|_2^2)\right) \exp\left(2M_g \inner{\bx}{\bc_k} - M_g \|\bc_k\|_2^2\right)} \nonumber \\
    	&= \frac{\exp\left(2M_g \inner{\bx}{\bc_i} - M_g \|\bc_i\|_2^2\right)}{\sum_{k=1}^{C_g} \exp\left(2M_g \inner{\bx}{\bc_k} - M_g \|\bc_k\|_2^2\right)},
    \end{align*}
    thus we complete the proof.
\end{proof}

\subsection{Softmax POU scheme for manifolds in Section \ref{sec:mtor}} \label{sec:appendixA2}

Here, we extend our Softmax POU approximation scheme to compact Riemannian manifolds (See Lemma \ref{lemma_POU_approx_manifold}). We leverage the intrinsic geometric properties and local metric equivalence to properly bound the approximation errors.
We first introduce some definitions and notations about Riemannian manifolds. Let $d(\xb,\cM) := \inf_{\vb\in\cM}\|\xb-\vb\|_2$ denote the Euclidean distance from a point $\xb \in \RR^{\bar d}$ to a manifold $\cM \subset \RR^{\bar d}$. To characterize the curvature and the folding of $\cM$ within the ambient space, we introduce the concept of reach.

\begin{definition}[Reach \citep{federer1959curvature,niyogi2008finding}] \label{def.reach}
    The reach of $\cM$ is defined as
    \begin{equation*}
        \tau = \inf_{\vb\in\cM} \inf_{\xb\in G} \|\xb-\vb\|_2,
    \end{equation*}
    where $G = \left\{\xb\in \RR^{\bar d}: \exists \mbox{ distinct } \pb,\qb\in \cM \mbox{ such that } d(\xb,\cM)=\|\xb-\pb\|_2=\|\xb-\qb\|_2\right\}$ is the medial axis of $\cM$.
\end{definition}

We equip the compact and connected Riemannian manifold $\cM$ with the intrinsic geodesic metric $d_{\cM}: \cM \times \cM \to \RR_{\ge 0}$, defined as
\begin{equation*}
    d_{\cM}(\bx, \by) := \inf \left\{ \int_0^1 \|\dot{\gamma}(t)\|_2 \, dt \;\middle|\; \gamma \in C^1([0,1], \cM), \gamma(0)=\bx, \gamma(1)=\by \right\}.
\end{equation*}

Based on this metric space $(\cM, d_{\cM})$, we define the H\"older continuous functions on the manifolds.

\begin{definition}[H\"older Continuous Functions on Manifolds] \label{def_holder}
    Let $\alpha \in (0, 1]$, $(\cM, d_{\cM})$ be a compact and connected $d$-dimensional Riemannian manifold. A function $f: \cM \to \RR$ is $\alpha$-H\"older continuous if there exists a constant $C_H \ge 0$ such that 
    \begin{equation*}
        |f(\bx) - f(\by)| \le C_H \cdot [d_{\cM}(\bx, \by)]^\alpha, \quad \forall \bx, \by \in \cM.
    \end{equation*}
\end{definition}

Recall that the $\epsilon$-covering number $\cN(X, \epsilon, d)$ is the minimum number of open balls of radius $\epsilon$ needed to cover a metric space $(X, d)$. Since the Euclidean chordal distance is naturally upper-bounded by the intrinsic geodesic distance (i.e., $\|\bx - \by\|_2 \le d_{\cM}(\bx, \by)$), any valid $\epsilon$-covering under $d_{\cM}$ is strictly an $\epsilon$-covering under $\|\cdot\|_2$. This topological inclusion directly yields $\cN(\cM, \epsilon, \|\cdot\|_2) \le \cN(\cM, \epsilon, d_{\cM})$.
The following lemma establishes the bounds for these covering numbers and the local metric equivalence.

\begin{lemma} \label{lemma_manifold_geometry}
    Let $\cM \subseteq [0,1]^{\bar d}$ be a $d$-dimensional compact and connected Riemannian manifold with reach $\tau > 0$. 
    \begin{enumerate}
        \item (Metric Equivalence \cite[Lemma 3]{genovese2012minimax}) For any $\bx, \by \in \cM$ such that $\|\bx - \by\|_2 \le \tau/2$, the intrinsic geodesic distance satisfies
        \begin{equation} \label{eq_geodesic_bound}
            d_{\cM}(\bx, \by) \le \tau \left( 1 - \sqrt{1 - \frac{2\|\bx - \by\|_2}{\tau}} \right) \le 2 \|\bx - \by\|_2.
        \end{equation}
        \item (Covering Number Bound \cite{niyogi2008finding}) For any $0 < \epsilon \le \tau/2$, the covering numbers of $\cM$ are bounded by
        \begin{equation} \label{eq_covering_bound}
            \cN(\cM, \epsilon, \|\cdot\|_2) \le \cN(\cM, \epsilon, d_{\cM}) \le C_{\cM} \epsilon^{-d},
        \end{equation}
        where $C_{\cM}= 3^d \text{Vol}(\cM)d^{d/2}$, and $\text{Vol}(\cM)$ is the volume of $\cM$.
    \end{enumerate}
\end{lemma}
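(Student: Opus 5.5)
Both parts follow from standard reach-based geometry. I would prove the two parts separately, deriving the second from the first.

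\textbf{Part 1 (metric equivalence).} The first inequality in \eqref{eq_geodesic_bound} is exactly \cite[Lemma 3]{genovese2012minimax}, which rests on the reach-based estimate of \cite{niyogi2008finding}; I will invoke it directly for $\|\bx-\by\|_2\le\tau/2$. For the second inequality, set $s = 2\|\bx-\by\|_2/\tau \in [0,1]$. Since $0\le 1-s\le 1$, we have $\sqrt{1-s}\ge 1-s$, hence $1-\sqrt{1-s}\le s$. Multiplying by $\tau$ gives $\tau\bigl(1-\sqrt{1-s}\bigr)\le \tau s = 2\|\bx-\by\|_2$, which is the claim.

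\textbf{Part 2 (covering numbers).} The first inequality $\cN(\cM,\epsilon,\|\cdot\|_2)\le \cN(\cM,\epsilon,d_{\cM})$ was already justified before the lemma, since $\|\bx-\by\|_2\le d_{\cM}(\bx,\by)$. For the second inequality I will use a packing argument in the ambient metric and then convert it to the geodesic metric via Part 1.

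\begin{enumerate}[label=(\roman*)]
\item Let $\{\bc_i\}_{i=1}^N\subset\cM$ be a maximal set with pairwise Euclidean distances at least $\epsilon/2$. By maximality, every $\bx\in\cM$ satisfies $\|\bx-\bc_i\|_2<\epsilon/2\le\tau/2$ for some $i$. Part 1 then gives $d_{\cM}(\bx,\bc_i)\le 2\|\bx-\bc_i\|_2<\epsilon$, so the $\bc_i$ form a geodesic $\epsilon$-cover and $\cN(\cM,\epsilon,d_{\cM})\le N$.
\item To bound $N$, note that the sets $\cM\cap\mathcal{B}(\bc_i,\epsilon/4)$ are pairwise disjoint. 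Hence
\begin{equation*}
N\cdot \min_i \mathrm{Vol}\bigl(\cM\cap\mathcal{B}(\bc_i,\epsilon/4)\bigr)\le \mathrm{Vol}(\cM).
\end{equation*}
\item For the volume lower bound I will invoke the reach-based estimate of \cite{niyogi2008finding} (their Lemma 5.3). For $r<\tau$ and $\bc\in\cM$ it states that $\mathrm{Vol}\bigl(\cM\cap\mathcal{B}(\bc,r)\bigr)\ge \cos^d(\theta)\,\mathrm{Vol}\bigl(\mathcal{B}^d(r)\bigr)$ with $\theta=\arcsin\bigl(r/(2\tau)\bigr)$.
\item With $r=\epsilon/4\le\tau/8$ we get $\sin\theta\le 1/16$, so $\cos\theta\ge 2/3$. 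The $d$-dimensional ball of radius $r$ contains a cube of side $2r/\sqrt d$, so $\mathrm{Vol}(\mathcal{B}^d(r))\ge (2r/\sqrt d)^d=\bigl(\epsilon/(2\sqrt d)\bigr)^d$.
\item Combining these,
\begin{equation*}
\mathrm{Vol}\bigl(\cM\cap\mathcal{B}(\bc_i,\epsilon/4)\bigr)\ge \Bigl(\tfrac{2}{3}\Bigr)^d\Bigl(\tfrac{\epsilon}{2\sqrt d}\Bigr)^d=\Bigl(\tfrac{\epsilon}{3\sqrt d}\Bigr)^d .
\end{equation*}
Therefore $N\le 3^d d^{d/2}\,\mathrm{Vol}(\cM)\,\epsilon^{-d}=C_{\cM}\epsilon^{-d}$.
\end{enumerate}

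\textbf{Main obstacle.} The delicate step is (iii), the lower bound on the volume of $\cM\cap\mathcal{B}(\bc,r)$. It requires citing the Niyogi--Smale--Weinberger estimate in precisely the form needed, with its hypothesis $r<\tau$ satisfied and the $\cos^d\theta$ factor controlled. If that form is not directly available, I would instead argue via the local graph representation over the tangent space: the reach condition forces the orthogonal projection onto $T_{\bc}\cM$ to be nearly isometric on $\cM\cap\mathcal{B}(\bc,r)$, and the projection covers a $d$-ball of radius $r\cos\theta$, which yields the same bound.
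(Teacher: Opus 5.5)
Your proposal is correct and takes essentially the same approach as the paper. The paper gives no separate proof: it rests Part 1 on Genovese et al.\ (Lemma 3) and Part 2 on Niyogi--Smale--Weinberger, together with the inclusion $\|\bx-\by\|_2\le d_{\cM}(\bx,\by)$. Your argument fills in the omitted steps, namely the elementary bound $1-\sqrt{1-s}\le s$ and the $\epsilon/2$-packing combined with the Niyogi--Smale--Weinberger volume lower bound at radius $\epsilon/4\le\tau/8$, and it recovers exactly the stated constant $C_{\cM}=3^d\,\mathrm{Vol}(\cM)\,d^{d/2}$.
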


With the local metric equivalence and intrinsic covering number bound of manifolds established, we proceed to the Softmax POU approximation scheme on Riemannian manifolds. The proof controls the near-center error using the intrinsic H\"older continuity, and bounds the far-field error through the exponential decay of the ambient Softmax weights.

\begin{lemma}[Softmax POU Approximation on Manifolds] \label{lemma_POU_approx_manifold}
    Let $\cM \subseteq [0,1]^{\bar d}$ be a compact and connected $d$-dimensional Riemannian manifold with reach $\tau > 0$, and let $g: \cM \to \RR$ be an $\alpha$-H\"older continuous function with constant $C_H$. For any target accuracy $\varepsilon$ satisfying 
    $0 < \varepsilon \le \min\left\{\frac{1}{e}, 8C_H \Big(\frac{\tau}{4}\Big)^\alpha \right\},$
    let the covering radius be $r_g = (\varepsilon/(8C_H))^{1/\alpha}$. There exists a finite set of $C_g$ centers $\{\bc_i\}_{i=1}^{C_g} \subset \cM$ forming an open $r_g$-covering of the manifold (i.e., $\cM \subseteq \bigcup_{i=1}^{C_g} \mathcal{B}(\bc_i, r_g)$) with $C_g \leq C_{\cM}(8C_H)^{d/\alpha} \varepsilon^{-d/\alpha}$. By setting the scaling parameter
    \begin{equation*}
        M_g = C_M \varepsilon^{-2/\alpha} \log \frac{1}{\varepsilon}, \quad \text{where } C_M = \frac{(8C_H)^{2/\alpha}}{3} \left( \log\left(4 \|g\|_\infty C_{\cM} (8C_H)^{d/\alpha}\right) + \frac{d+\alpha}{\alpha} \right),
    \end{equation*}
    the ambient Softmax POU approximation
    $$\hat{g}(\bx) = \sum_{i=1}^{C_g} \beta_i(\bx) g(\bc_i),$$
    with
    \begin{align}
        \beta_i(\bx) = \frac{\exp\left(M_g (r_g^2 - \|\bx - \bc_i\|_2^2)\right)}{\sum_{k=1}^{C_g} \exp\left(M_g (r_g^2 - \|\bx - \bc_k\|_2^2)\right)}= \frac{\exp\left(2M_g \inner{\bx}{\bc_i} - M_g \|\bc_i\|_2^2\right)}{\sum_{k=1}^{C_g} \exp\left(2M_g \inner{\bx}{\bc_k} - M_g \|\bc_k\|_2^2\right)},
    \end{align}
    achieves the uniform approximation bound
    $$\sup_{\bx \in \cM} |\hat{g}(\bx) - g(\bx)| \le \varepsilon.$$
\end{lemma}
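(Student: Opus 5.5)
The plan is to mirror the proof of Lemma~\ref{lemma_POU_approx}, replacing the grid covering by the intrinsic covering of Lemma~\ref{lemma_manifold_geometry}. The Euclidean H\"older estimate is replaced by the metric equivalence \eqref{eq_geodesic_bound}.

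\textbf{Covering.} Set $r_g=(\varepsilon/(8C_H))^{1/\alpha}$. The hypothesis $\varepsilon\le 8C_H(\tau/4)^\alpha$ gives $r_g\le\tau/4\le\tau/2$, so \eqref{eq_covering_bound} applies. Choose a minimal $r_g$-covering of $\cM$ by open Euclidean balls with centers in $\cM$. If a covering by abstract balls produces centers off $\cM$, I would instead take a maximal $r_g$-separated subset of $\cM$ (or a geodesic covering, whose centers lie on $\cM$ and which is also a Euclidean covering). This gives $C_g\le C_{\cM} r_g^{-d}=C_{\cM}(8C_H)^{d/\alpha}\varepsilon^{-d/\alpha}$. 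For each $\bx\in\cM$ there is then some $i^*$ with $\|\bx-\bc_{i^*}\|_2<r_g$.

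\textbf{Error split.} Using $\sum_i\beta_i=1$, split $|\hat g(\bx)-g(\bx)|$ into near terms with $\|\bx-\bc_i\|_2\le 2r_g$ and far terms with $\|\bx-\bc_i\|_2>2r_g$.

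For the near terms, $2r_g\le\tau/2$, so \eqref{eq_geodesic_bound} yields $d_{\cM}(\bx,\bc_i)\le 2\|\bx-\bc_i\|_2\le 4r_g$. Hence
\[
|g(\bx)-g(\bc_i)|\le C_H4^\alpha r_g^\alpha\le 4C_H r_g^\alpha=\varepsilon/2 .
\]
This step is the reason for the factor $8$ in $r_g$ (versus $4$ in the cube case) and for the restriction $r_g\le\tau/4$. I expect the only real subtlety of the proof to lie here: the ambient distance must be small enough to invoke local metric equivalence.

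For the far terms, argue exactly as before. The denominator is at least $\exp(M_g(r_g^2-\|\bx-\bc_{i^*}\|_2^2))\ge1$, and each far numerator is at most $\exp(-3M_gr_g^2)$. So the far part is bounded by $2C_g\|g\|_\infty e^{-3M_gr_g^2}$.

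\textbf{Choice of $M_g$.} It suffices to have $M_g\ge(3r_g^2)^{-1}\log(4C_g\|g\|_\infty/\varepsilon)$. Substituting $r_g^{-2}=(8C_H)^{2/\alpha}\varepsilon^{-2/\alpha}$ and the bound on $C_g$, the logarithm becomes $\log(4\|g\|_\infty C_{\cM}(8C_H)^{d/\alpha})+\frac{d+\alpha}{\alpha}\log\frac1\varepsilon$. Since $\varepsilon\le1/e$ gives $\log(1/\varepsilon)\ge1$, this is at most $C_M'\log\frac1\varepsilon$, where $C_M'$ is the bracket in $C_M$. Strictly, this step needs the first log term to be nonnegative, or one absorbs its absolute value, as the paper does in the analogous Euclidean lemma. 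Therefore the stated $M_g=C_M\varepsilon^{-2/\alpha}\log\frac1\varepsilon$ makes the far part at most $\varepsilon/2$.

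Summing the near and far parts gives the uniform bound $\varepsilon$ on $\cM$. The identity for $\beta_i$ follows from cancelling the common factor $\exp(M_g(r_g^2-\|\bx\|_2^2))$, verbatim as in the proof of Lemma~\ref{lemma_POU_approx}.
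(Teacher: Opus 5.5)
Your proposal is correct and follows the paper's proof essentially step for step. Both take the covering from Lemma~\ref{lemma_manifold_geometry}, split near and far terms at $2r_g$ with metric equivalence giving $4C_H r_g^\alpha=\varepsilon/2$, bound the far-term denominator below by $1$, and derive $M_g$ the same way; your explicit check $r_g\le\tau/4$ and your use of a geodesic covering to keep the centers on $\cM$ make precise what the paper leaves implicit.

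Your caveat about the sign of $\log\bigl(4\|g\|_\infty C_{\cM}(8C_H)^{d/\alpha}\bigr)$ is genuine, and the paper's proof silently assumes it is nonnegative too. The paper does not take an absolute value in Lemma~\ref{lemma_POU_approx}; it does so only in $C_{log}$ of Theorem~\ref{thm_holder_approx}. The issue is not cosmetic: when that term is sufficiently negative, $M_g<0$ and the stated bound can actually fail (e.g.\ $g=C_H\,d_{\cM}(\cdot,\bp)$ on a circle of radius $\tau$ with small $C_H$). So assuming nonnegativity, or replacing the term by its absolute value in $C_M$, is the right fix.
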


\begin{proof}[Proof of Lemma~\ref{lemma_POU_approx_manifold}]
    To construct the approximation, we first establish a finite covering of the data manifold $\cM$ by open Euclidean balls $\{\mathcal{B}(\bc_i, r_g)\}_{i=1}^{C_g}$ of radius $r_g > 0$, with centers $\bc_i \in \cM$. By the covering number bound in Lemma \ref{lemma_manifold_geometry}, such an open covering exists with the number of centers bounded by $C_g \le C_{\cM} r_g^{-d}$. 
    For any $\bx \in \cM$, let $i^*(\bx) = \argmin_{i \in [C_g]} \|\bx - \bc_i\|_2$. 
    Since these open balls cover $\cM$, we are guaranteed that $\|\bx - \bc_{i^*}\|_2 < r_g$. Notice that $\beta_i(\bx) = \frac{\exp\left(M_g (r_g^2 - \|\bx - \bc_i\|_2^2)\right)}{\sum_{k=1}^{C_g} \exp\left(M_g (r_g^2 - \|\bx - \bc_k\|_2^2)\right)}$, using the property that $\sum_{i=1}^{C_g} \beta_i(\bx) = 1$, by decomposing the approximation error into near and far components at a Euclidean radius of $2r_g$, we have
    \begin{align*}
        |\hat g(\bx) - g(\bx)| &= \left|\sum_{i=1}^{C_g} \beta_i(\bx) \big(g(\bc_i) - g(\bx)\big)\right| \\
        &\le \underbrace{\sum_{i: \|\bx - \bc_i\|_2 \le 2r_g} \beta_i(\bx) |g(\bx) - g(\bc_i)|}_{(I)} + \underbrace{\sum_{i: \|\bx - \bc_i\|_2 > 2r_g} \beta_i(\bx) |g(\bx) - g(\bc_i)|}_{(II)}.
    \end{align*}
    
    For the near-center terms $(I)$, we can bound the error using the intrinsic H\"older continuity. For sufficiently small $\varepsilon$ such that $2r_g \le \tau/2$, by Lemma \ref{lemma_manifold_geometry}, we have $d_{\cM}(\bx, \bc_i) \le 2\|\bx - \bc_i\|_2 \le 4r_g$. Consequently, the intrinsic $\alpha$-H\"older continuity yields 
    $$|g(\bx) - g(\bc_i)| \le C_H (d_{\cM}(\bx, \bc_i))^\alpha \le C_H (4r_g)^\alpha \le 4 C_H r_g^\alpha,$$ since $4^\alpha \le 4$ for $\alpha \in (0,1]$. Thus
    $$(I) \le 4 C_H r_g^\alpha \sum_{i} \beta_i(\bx) \le 4 C_H r_g^\alpha.$$
    
    For the far-center tail terms $(II)$, the condition $\|\bx - \bc_i\|_2 > 2r_g$ implies $r_g^2 - \|\bx - \bc_i\|_2^2 < -3r_g^2$. Since $\|\bx - \bc_{i^*}\|_2 \le r_g$, the denominator of the ambient Softmax POU $\beta_i(\bx)$ is bounded below
    \begin{equation*}
        \sum_{k=1}^{C_g} \exp\left(M_g(r_g^2 - \|\bx - \bc_k\|_2^2)\right) \ge \exp\left(M_g(r_g^2 - \|\bx - \bc_{i^*}\|_2^2)\right) \ge \exp(0) = 1.
    \end{equation*}
    Applying the global bound $|g(\bx) - g(\bc_i)| \le 2\|g\|_\infty$, the term $(II)$ is bounded by
    \begin{align*}
        (II) &= \sum_{i: \|\bx - \bc_i\|_2 > 2r_g} \frac{\exp\left(M_g(r_g^2 - \|\bx - \bc_i\|_2^2)\right)}{\sum_{k=1}^{C_g} \exp\left(M_g(r_g^2 - \|\bx - \bc_k\|_2^2)\right)} |g(\bx) - g(\bc_i)| \\
        &\le \sum_{i: \|\bx - \bc_i\|_2 > 2r_g} \exp(-3M_g r_g^2) \cdot 2\|g\|_\infty \\
        &\le 2 C_g \|g\|_\infty \exp(-3 M_g r_g^2).
    \end{align*}
    
    Combining $(I)$ and $(II)$ yields
    \begin{equation} \label{eq_POU_error_proof_manifold}
        \sup_{\bx \in \cM} | \hat g(\bx) - g(\bx) | \le 4 C_H r_g^\alpha + 2 C_g \|g\|_\infty \exp(-3 M_g r_g^2).
    \end{equation}
    
    Setting $r_g = (\varepsilon/(8C_H))^{1/\alpha}$ yields $(I) \le \varepsilon/2$. By Lemma \ref{lemma_manifold_geometry}, the number of Euclidean balls of radius $r_g$ required to cover $\cM$ is bounded by $C_g \le C_{\cM} r_g^{-d}$, where $C_{\cM} = 3^d \text{Vol}(\cM) d^{d/2}$. Substituting our choice of $r_g$, we obtain
    \begin{align*}
        C_g &\le C_{\cM} \left( \left(\frac{\varepsilon}{8C_H}\right)^{1/\alpha} \right)^{-d} = C_{\cM} (8C_H)^{d/\alpha} \varepsilon^{-\frac{d}{\alpha}}.
    \end{align*}
    
    Finally, to guarantee $(II) \le \varepsilon/2$, the scaling parameter must satisfy $M_g \ge (3r_g^2)^{-1}\log(4\|g\|_\infty C_g/\varepsilon)$. Substituting the explicit formulas for $r_g$ and the upper bound for $C_g$, we require
    \begin{align*}
        M_g &\ge \frac{(8C_H)^{2/\alpha}}{3} \varepsilon^{-\frac{2}{\alpha}} \log\left( 4\|g\|_\infty C_{\cM} (8C_H)^{d/\alpha} \varepsilon^{-\frac{d+\alpha}{\alpha}} \right) \\
        &= \frac{(8C_H)^{2/\alpha}}{3} \varepsilon^{-\frac{2}{\alpha}} \left( \log\left( 4\|g\|_\infty C_{\cM} (8C_H)^{d/\alpha} \right) + \frac{d+\alpha}{\alpha} \log \frac{1}{\varepsilon} \right).
    \end{align*}
    Assuming $\varepsilon \le 1/e$ so that $\log(1/\varepsilon) \ge 1$, we can upper bound the bracketed sum by factoring out $\log(1/\varepsilon)$. Defining $M_g$ as
    \begin{equation*}
        M_g = \frac{(8C_H)^{2/\alpha}}{3} \left( \log\left( 4\|g\|_\infty C_{\cM} (8C_H)^{d/\alpha} \right) + \frac{d+\alpha}{\alpha} \right) \varepsilon^{-\frac{2}{\alpha}} \log \frac{1}{\varepsilon}
    \end{equation*}
    satisfies this condition, ensuring $(II) \le \varepsilon/2$. This achieves the target uniform error $\sup_{\bx \in \cM}| \hat g(\bx) - g(\bx) | \le \varepsilon$ and completes the proof.
\end{proof}

\subsection{Constructing Transformers for Softmax POU approximation: fundamental lemmas}
\label{sec:appendixA3}

This subsection details the the structural lemmas (Lemmas \ref{lemma_preprocessing}, \ref{lemma_mha_affine}, \ref{lemma_FFN_restore}, and \ref{lemma_mha_pou}) and their proofs. These proofs demonstrate how the native softmax attention mechanism in Transformers can realize our Softmax POU approximation framework. The pre-processing step embeds $\bx$ and an indicator exclusively into the first token, zero-pads the remaining sequence, and appends sinusoidal positional encodings across all $P$ tokens.

\begin{lemma}[Pre-processing Step] \label{lemma_preprocessing}
    Let $\bx \in [0,1]^d$ be the input vector, $P \ge 1$ be the sequence length, and $D = d + 4$ be the embedding dimension. There exists a pre-processing operator $\cP: \RR^d \to \RR^{D \times P}$ with the absolute values of the parameters bounded by $1$, such that its output $\bZ_0 = \cP(\bx)$ is
    \begin{equation*}
        \bZ_0 = \begin{bmatrix}
            \bx & \bm{0} & \cdots & \bm{0} \\
            1 & 0 & \cdots & 0 \\
            0 & 0 & \cdots & 0 \\
            \sin(\theta_1) & \sin(\theta_2) & \cdots & \sin(\theta_P) \\
            \cos(\theta_1) & \cos(\theta_2) & \cdots & \cos(\theta_P)
        \end{bmatrix} \in \RR^{D \times P},
    \end{equation*}
    where $\theta_j = \frac{2\pi j}{P}$ for $j \in [P]$.
\end{lemma}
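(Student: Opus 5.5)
The plan is to exhibit the pre-processing parameters explicitly and then check the output column by column. Recall that $\cP(\bx) = (\bW_E \bx + \bb_E)\be_1^\top + \bP$. I will choose $\bW_E \in \RR^{(d+4)\times d}$ and $\bb_E \in \RR^{d+4}$ so that $\bW_E \bx + \bb_E = [\bx^\top, 1, 0, 0, 0]^\top$.
\begin{itemize}
\item Take $\bW_E = \begin{bmatrix} \bm{I}_d \\ \bm{0}_{4\times d}\end{bmatrix}$, i.e. the identity stacked on a zero block.
\item Take $\bb_E = \be_{d+1} \in \RR^{d+4}$, the canonical basis vector with a single $1$ in coordinate $d+1$.
\end{itemize}
Both have all entries in $\{0,1\}$.

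The positional encoding $\bP \in \RR^{D\times P}$ has zero first $d+2$ rows. Its $(d+3)$-th row is $(\sin\theta_1,\dots,\sin\theta_P)$ and its $(d+4)$-th row is $(\cos\theta_1,\dots,\cos\theta_P)$, with $\theta_j = 2\pi j/P$. These are the standard sinusoidal encodings at a single frequency.

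The verification then splits by column.
\begin{itemize}
\item \textbf{Column $j=1$.} The term $(\bW_E\bx+\bb_E)\be_1^\top$ contributes $[\bx^\top,1,0,0,0]^\top$, and $\bP$ adds $[\bm{0}_{d+2}^\top,\sin\theta_1,\cos\theta_1]^\top$. The sum is the first column displayed in the statement.
\item \textbf{Columns $j\ge 2$.} Since $\be_1^\top$ has a zero in position $j$, only $\bP$ contributes. This gives $[\bm{0}_{d+2}^\top,\sin\theta_j,\cos\theta_j]^\top$, as required.
\end{itemize}

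It remains to bound the parameter magnitudes. The entries of $\bW_E$ and $\bb_E$ lie in $\{0,1\}$. The entries of $\bP$ are $0$, $\sin\theta_j$ or $\cos\theta_j$, each of absolute value at most $1$. Hence all parameters are bounded by $1$ in absolute value.

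There is no real obstacle here. The only point needing care is bookkeeping of row indices: the indicator sits in row $d+1$, the zero row in row $d+2$, and the encodings in rows $d+3$ and $d+4$. These indices must match the layout that the later attention lemmas assume.

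If $\bP$ is viewed as fixed rather than as a trainable parameter, the magnitude bound applies only to $\bW_E$ and $\bb_E$, and the claim still holds.
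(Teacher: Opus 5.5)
Your proposal is correct and follows the paper's proof: it uses the same choices $\bW_E = \begin{bmatrix} \bI_d \\ \bm{0}_{4\times d}\end{bmatrix}$ and $\bb_E = \be_{d+1}$, and the same $\bP$ with zero first $d+2$ rows and single-frequency sine/cosine rows, then checks the output column by column. Your remark that every entry, including those of $\bP$, lies in $[-1,1]$ makes explicit the magnitude bound that the paper leaves implicit.
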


\begin{proof}[Proof of Lemma~\ref{lemma_preprocessing}]
    Let $\be_1 = [1, 0, \dots, 0]^\top \in \RR^P$ be the indicator vector for the first token. We construct the pre-processing operator as
    \begin{equation*}
        \cP(\bx) = (\bW_E \bx + \bb_E) \be_1^\top + \bP.
    \end{equation*}
    We define the embedding weight matrix $\bW_E \in \RR^{D \times d}$ and the bias vector $\bb_E \in \RR^D$ as
    \begin{equation*}
        \bW_E = \begin{bmatrix} \bI_d \\ \bm{0}_{1 \times d} \\ \bm{0}_{1 \times d} \\ \bm{0}_{2 \times d} \end{bmatrix}, \quad \bb_E = \begin{bmatrix} \bm{0}_{d \times 1} \\ 1 \\ 0 \\ \bm{0}_{2 \times 1} \end{bmatrix},
    \end{equation*}
    and the positional encoding matrix $\bP \in \RR^{D \times P}$ as
    \begin{equation*}
        \bP = \begin{bmatrix} \bm{0}_{d \times P} \\ \bm{0}_{1 \times P} \\ \bm{0}_{1 \times P} \\ \begin{matrix} \sin(\theta_1) & \dots & \sin(\theta_P) \\ \cos(\theta_1) & \dots & \cos(\theta_P) \end{matrix} \end{bmatrix},
    \end{equation*}
    where $\theta_j = \frac{2\pi j}{P}$ for $j \in [P]$. Therefore, we have
    \begin{equation*}
        \bZ_0 = \cP(\bx)= \begin{bmatrix}
            \bx & \bm{0} & \cdots & \bm{0} \\
            1 & 0 & \cdots & 0 \\
            0 & 0 & \cdots & 0 \\
            \sin(\theta_1) & \sin(\theta_2) & \cdots & \sin(\theta_P) \\
            \cos(\theta_1) & \cos(\theta_2) & \cdots & \cos(\theta_P)
        \end{bmatrix} \in \RR^{D \times P},
    \end{equation*}
    thus we complete the proof.
\end{proof}

The first MHA layer extracts two affine transformatioin features, while the exact angular identity $\theta_j$ of the sinusoidal positional encodings remains invariant up to a uniform scalar scaling.
\begin{lemma}[Parallel Affine Transformations via MHA] \label{lemma_mha_affine}
    Let $\bZ_0 \in \RR^{D \times P}$ ($D = d+4$) as defined in Lemma \ref{lemma_preprocessing}. For target affine functions $T_h(\bx) = \bxi_h^\top \bx + b_h$ and $R_h(\bx) = \bzeta_h^\top \bx + c_h$ ($h \in [P]$), let $B_T := \max_h \|T_h\|_\infty$, $B_R := \max_h \|R_h\|_\infty$, and $B_{\cA_1} := \max_h \max \{ \|\bxi_h\|_\infty, \|\bzeta_h\|_\infty, |b_h|, |c_h| \}$. There exists an MHA layer $\cA_1: \RR^{D \times P} \to \RR^{D \times P}$ with $H = P+2$ heads, $d_v= d_k= 2$, and scaling factor $M > 0$, yielding output $\widehat{\bZ}_1= \cA_1(\bZ_0) \in \RR^{D \times P}$ whose $j$-th column is
    \begin{equation*}
        (\widehat{\bZ}_1)_{:, j} = \begin{bmatrix}
            \tilde{T}_j(\bx) \\
            \tilde{R}_j(\bx) \\
            \bm{0}_{d-1} \\
            \tilde{I}_j \\
            \lambda(M) \sin(\theta_j) \\
            \lambda(M) \cos(\theta_j)
        \end{bmatrix}, \quad \forall j \in [P].
    \end{equation*}
    Let $c = 1 - \cos(2\pi/P) > 0$. The output components and parameter magnitude satisfy
    \begin{enumerate}
        \item \textbf{Affine Outputs:} we have the approximation error
        \begin{align*}
            |T_j(\bx) - \tilde{T}_j(\bx)| &\le (P-1) B_T e^{-cM}, \\
            |R_j(\bx) - \tilde{R}_j(\bx)| &\le (P-1) B_R e^{-cM}.
        \end{align*}
        
        \item \textbf{Indicator Output:} $\tilde{I}_1 = \eta(M) := \frac{e^M}{\sum_{l=1}^P e^{M\cos(\theta_l)}}$, and
        \begin{align*}
            0 &\le 1 - \tilde{I}_1 = 1 - \eta(M) \le (P-1)e^{-cM}, \quad &(j = 1) \\
            0 &\le \tilde{I}_j \le e^{-cM}. \quad &(j \neq 1)
        \end{align*}
        
        \item \textbf{PE Output:} the scaling coefficient $\lambda(M)$ satisfies
        \begin{equation*}
            \lambda(M): = \frac{\sum_{k=1}^P e^{M \cos(\theta_k)} \cos(\theta_k)}{\sum_{l=1}^P e^{M \cos(\theta_l)}}, \quad 0 \le 1 - \lambda(M) \le 2(P-1) e^{-c M}.
        \end{equation*}

        \item \textbf{Parameter Bounds:} the maximum parameter magnitude $M_{\cA_1}$ satisfy
        \begin{align*}
            M_{\cA_1} &\le \max \big\{1, M, (1 + (P-1)e^{-cM}) B_{\cA_1} \big\}.
        \end{align*}
    \end{enumerate}
\end{lemma}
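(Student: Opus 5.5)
The construction uses the sinusoidal positional encodings as an attention-based \emph{selector}. For each head, query and key matrices are built from the last two rows of $\bZ_0$, so the attention logit between token $k$ (key) and token $j$ (query) is
\[
M\cos(\theta_k - \theta_j + \phi_h) = M\big(\cos\theta_k\cos(\theta_j-\phi_h) + \sin\theta_k\sin(\theta_j-\phi_h)\big)
\]
for a chosen phase shift $\phi_h$. Each such logit is a dot product of two $2$-dimensional vectors, each a rotation of $(\sin\theta,\cos\theta)$ scaled by $M$, which is why $d_k=2$ suffices. Since $\theta_k-\theta_j+\phi_h$ lies on the uniform grid $2\pi\mathbb{Z}/P$ whenever $\phi_h = 2\pi m/P$, the maximal logit $M$ is attained at exactly one key. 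Every other key has logit at most $M\cos(2\pi/P) = M - cM$. Hence the softmax column puts weight $\ge 1-(P-1)e^{-cM}$ on the selected key and weight $\le e^{-cM}$ on each of the others.

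The $P+2$ heads are assigned as follows.
\begin{itemize}
\item[(i)] $P$ \emph{selection heads}. Head $h$ uses the shift for which query $j=h$ attends to key $1$. Its value matrix reads the first token's content $(\bx,1)$ and outputs $(\bxi_h^\top\bx + b_h,\ \bzeta_h^\top\bx + c_h)$; the bias is realized by multiplying $b_h,c_h$ against the indicator row. Since the value vector is nonzero only on token $1$, the head output at column $j$ equals $(T_h,R_h)$ times the attention weight $A_{1j}$.
\item[(ii)] One \emph{indicator head} with zero shift. Its value extracts the indicator row, giving $\tilde I_j = A_{1j}$.
\item[(iii)] One \emph{PE head} with zero shift, so each query attends to itself. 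Its value is the identity on the $(\sin,\cos)$ rows.
\end{itemize}

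The output projection $\bW^O$ must route each head into the correct coordinates. This is the step I expect to be the main obstacle. Each selection head's output must land in rows $1,2$ only at column $j=h$, yet $\bW^O$ acts identically on all columns. To handle this, I would let head $h$ be shifted so that query $j$ attends to key $j-h+1 \pmod P$. Then column $j$ of head $h$ carries the value of token $j-h+1$, and only $h=j$ picks token $1$. Summing all $P$ selection heads into rows $1,2$ therefore gives
\[
\tilde T_j = \sum_h A^{h}_{1j}\,T_h(\bx),
\]
a sum dominated by $h=j$ with weight $\ge 1-(P-1)e^{-cM}$, while each of the other $P-1$ terms has weight $\le e^{-cM}$. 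This gives the affine error bound $|T_j-\tilde T_j| \le (P-1)B_T e^{-cM}$, and likewise for $R_j$. I would double-check that the diagonal deficit and the off-diagonal mass combine to the stated $(P-1)$ factor and not $2(P-1)$.

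For the indicator, the logit vector of column $1$ is $M\cos\theta_k$ up to a cyclic shift. This gives the exact formula $\eta(M) = e^M/\sum_l e^{M\cos\theta_l}$ and both stated bounds.

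For the PE head, write $\tilde\theta_k = \theta_k - \theta_j$. Its output at column $j$ is
\[
\sum_k w_k\,(\sin\theta_k,\cos\theta_k), \qquad w_k = \frac{e^{M\cos\tilde\theta_k}}{\sum_l e^{M\cos\theta_l}}.
\]
I would expand $\theta_k = \theta_j + \tilde\theta_k$ with the angle-addition formulas. The $\sin\tilde\theta_k$ contributions vanish by the symmetry $\tilde\theta \mapsto -\tilde\theta$ of the uniform grid. This leaves exactly $\lambda(M)\,(\sin\theta_j,\cos\theta_j)$ with the stated $\lambda(M)$. Then
\[
1-\lambda(M) = \sum_k w_k\,(1-\cos\theta_k) \le 2(P-1)e^{-cM}
\]
follows the same way as the indicator bound.

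Parameter magnitudes come out as: the query/key entries are at most $M$, the value and projection entries are at most $\max\{1,B_{\cA_1}\}$, and entries of order $(1+(P-1)e^{-cM})B_{\cA_1}$ absorb any normalization, matching item 4.
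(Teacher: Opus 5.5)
Your construction matches the paper's in every structural respect: selection heads whose query $j$ peaks at key $j-h+1 \pmod P$, the bias carried by the indicator row, zero-shift indicator and PE heads, and $\bW^O$ summing the $P$ selection heads into rows $1,2$. The gap is exactly at the step you flagged, and the double-check fails.

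\textbf{Why the factor $2$ is real.} With the value matrix outputting $(T_h,R_h)$ unscaled, column $j$ is $\tilde T_j=\sum_h A^h_{1j}T_h(\bx)$. By cyclic shift invariance, every column of every head has the same denominator $S=\sum_l e^{M\cos\theta_l}$, so $\sum_h A^h_{1j}=1$. Hence
\[
T_j-\tilde T_j=\sum_{h\neq j}A^h_{1j}\,(T_j-T_h),
\]
which is only bounded by $2B_T(1-\eta(M))\le 2(P-1)B_Te^{-cM}$. This is attained: take $P=3$ (so $c=3/2$), $T_1\equiv B_T$ and $T_2=T_3\equiv -B_T$. Then $T_1-\tilde T_1=4B_Te^{-3M/2}/(1+2e^{-3M/2})$, which exceeds the claimed $2B_Te^{-3M/2}$ whenever $M>\tfrac23\log 2$. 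So your construction as written does not satisfy item 1.

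\textbf{The missing idea.} Put the normalization into the value matrix of each selection head:
\[
\bV_1^h=\eta(M)^{-1}\begin{bmatrix} \bxi_h^\top & b_h & \bm{0}_{1\times 3}\\ \bzeta_h^\top & c_h & \bm{0}_{1\times 3}\end{bmatrix}.
\]
By the same shift invariance, $\eta(M)$ is a deterministic constant equal to the diagonal weight $A^j_{1j}$ in every column. So the $h=j$ contribution becomes exactly $T_j(\bx)$. Each $h\neq j$ contribution becomes $e^{M\cos(\theta_h-\theta_j)-M}\,T_h(\bx)$, of size at most $e^{-cM}B_T$. Summing over the $P-1$ off-diagonal heads gives the stated $(P-1)B_Te^{-cM}$, and likewise $(P-1)B_Re^{-cM}$ for $\tilde R_j$.

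This rescaling is precisely where the factor $\eta(M)^{-1}\le 1+(P-1)e^{-cM}$ in item 4 comes from. You mention that factor as ``absorbing any normalization,'' but no normalization is ever introduced into the construction or used in the error estimate. The indicator, PE, and parameter-bound parts go through as you describe.
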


\begin{proof}[Proof of Lemma \ref{lemma_mha_affine}]
    We set the uniform dimension for query, key, and value matrices to $d_k = 2$ and $d_v = 2$.
    
    \textbf{Step 1: Affine Heads ($h \in [P]$).} 
    We construct the first $P$ heads in the first encoder block to selectively route the affine transformations. We define the 2D rotation matrix $\bR(\theta)$ as
    \begin{equation*}
        \bR(\theta) = \begin{bmatrix} \cos\theta & \sin\theta \\ -\sin\theta & \cos\theta \end{bmatrix}.
    \end{equation*}
    
    Before defining the attention matrices, we introduce a deterministic scalar $\eta(M)$ representing the exact peak attention weight, that is
    \begin{equation*}
        \eta(M) := \frac{e^M}{\sum_{l=1}^P e^{M\cos(\theta_l)}}.
    \end{equation*}
    To eliminate the influence of the attention matrix while realizing affine transformations, we preemptively rescale the value matrix by $\eta(M)^{-1}$. Adhering to the uniform value dimension $d_v=2$, we stack the weights and biases of both affine transformations. The query, key, and value matrices ($\bQ_1^h, \bK_1^h, \bV_1^h \in \RR^{2 \times D}$) are defined as
    \begin{align*}
        \bQ_1^h &= \begin{bmatrix} 
            \bm{0}_{2 \times d} & \bm{0}_{2 \times 2} & \bR(\theta_1 - \theta_h)
        \end{bmatrix} \implies (\bQ_1^h \bZ_0)_{:,j} = \begin{bmatrix} \sin(\theta_j + \theta_1 - \theta_h) \\ \cos(\theta_j + \theta_1 - \theta_h) \end{bmatrix}, \\
        \bK_1^h &= \begin{bmatrix} 
            \bm{0}_{2 \times d} & \bm{0}_{2 \times 2} & M \bI_2 
        \end{bmatrix} \implies (\bK_1^h \bZ_0)_{:,k} = \begin{bmatrix} M\sin(\theta_k) \\ M\cos(\theta_k) \end{bmatrix}, \\
        \bV_1^h &= \frac{1}{\eta(M)}\begin{bmatrix} 
            \bxi_h^\top & b_h & \bm{0}_{1 \times 3} \\
            \bzeta_h^\top & c_h & \bm{0}_{1 \times 3}
        \end{bmatrix} \\
        & \implies (\bV_1^h \bZ_0)_{:,k} = \frac{1}{\eta(M)}\begin{bmatrix} \bxi_h^\top \bx + b_h \\ \bzeta_h^\top \bx + c_h \end{bmatrix} \delta_{k,1} = \frac{1}{\eta(M)}\begin{bmatrix} T_h(\bx) \\ R_h(\bx) \end{bmatrix} \delta_{k,1}.
    \end{align*}
    
    Following the attention mechanism, the unnormalized score $s_{k,j}^h$ is defined as the inner product between the $k$-th key column and the $j$-th query column, that is,
    \begin{equation*}
        s_{k,j}^h = (\bK_1^h \bZ_0)_{:,k}^\top (\bQ_1^h \bZ_0)_{:,j} = M \cos\big(\theta_k - (\theta_j + \theta_1 - \theta_h)\big).
    \end{equation*}
    Consequently, the $(k, j)$-th entry of the attention matrix $\bA_1^h \in \RR^{P \times P}$ is computed via the column-wise softmax function, we have
    \begin{equation*}
        (\bA_1^h)_{k,j} = \frac{\exp(s_{k,j}^h)}{S_j}, \quad \text{where } S_j = \sum_{l=1}^P \exp(s_{l,j}^h).
    \end{equation*}
    Let $c = 1 - \cos(2\pi/P) > 0$ denote the spectral gap. For any $\theta_a \neq \theta_b$, we have $\cos(\theta_a - \theta_b) \le 1 - c$. We analyze the columns of $\bA_1^h$ (indexed by the query token $j$) by bounding the numerator and the denominator $S_j$ in the following.
    \begin{itemize}
        \item \textbf{Target Column ($j=h$):} The relative phase shift perfectly aligns the attention peak with the source key $k=1$. Substituting $j=h$ yields $s_{k,h}^h = M\cos(\theta_k - \theta_1)$. For the source key ($k=1$), the score reaches its exact maximum $s_{1,h}^h = M\cos(0) = M$. For the denominator $S_h = \sum_{l=1}^P e^{M\cos(\theta_l - \theta_1)}$, subtracting the constant phase $\theta_1$ merely applies a cyclic permutation to the uniform grid. Due to the shift-invariance of the periodic grid, we have $S_h= \sum_{l=1}^P e^{M\cos(\theta_l)}$. Thus, the attention weight for the source key is
        \begin{equation*}
            (\bA_1^h)_{1,h} = \frac{e^{s_{1,h}^h}}{S_h} = \frac{e^M}{\sum_{l=1}^P e^{M\cos(\theta_l)}} = \eta(M),
        \end{equation*}
        therefore, we have
        \begin{equation*}
             (\bV_1^h \bZ_0)_{:,1} (\bA_1^h)_{1,h} = \eta(M) \cdot \frac{1}{\eta(M)} \begin{bmatrix} T_h(\bx) \\ R_h(\bx) \end{bmatrix} = \begin{bmatrix} T_h(\bx) \\ R_h(\bx) \end{bmatrix}.
        \end{equation*}
        Because $(\bV_1^h \bZ_0)_{:,k} = \bm{0}$ for $k \neq 1$, we get
        \begin{equation*}
            (\bV_1^h \bZ_0) (\bA_1^h)_{:,h} = \sum_{k=1}^P (\bA_1^h)_{k,h} (\bV_1^h \bZ_0)_{:,k} = (\bA_1^h)_{1,h} (\bV_1^h \bZ_0)_{:,1} = \begin{bmatrix} T_h(\bx) \\ R_h(\bx) \end{bmatrix}.
        \end{equation*}
        
       \item \textbf{Other Columns ($j \neq h$):} The query perfectly matches a shifted key position $k^*$ satisfying $\theta_{k^*} \equiv \theta_j + \theta_1 - \theta_h$. Since the target differs ($j \neq h$), the attention peak shifts away from the source token ($k^* \neq 1$). The unnormalized attention score at the source key ($k=1$) is
        \begin{equation*}
            s_{1,j}^h = M\cos\big(\theta_1 - (\theta_j + \theta_1 - \theta_h)\big) = M\cos(\theta_h - \theta_j) \le M(1-c).
        \end{equation*}
        We denote the output for these non-target columns as $(\bV_1^h \bZ_0) (\bA_1^h)_{:,j} := [\epsilon_{h,j}, \tilde{\epsilon}_{h,j}]^\top$, then
        \begin{align*}
            \begin{bmatrix} \epsilon_{h,j} \\ \tilde{\epsilon}_{h,j} \end{bmatrix} &= (\bV_1^h \bZ_0) (\bA_1^h)_{:,j} = \sum_{k=1}^P (\bV_1^h \bZ_0)_{:,k} (\bA_1^h)_{k,j} \\
            &= (\bV_1^h \bZ_0)_{:,1} (\bA_1^h)_{1,j} = \frac{(\bA_1^h)_{1,j}}{\eta(M)} \begin{bmatrix} T_h(\bx) \\ R_h(\bx) \end{bmatrix}.
        \end{align*}
        Due to the exact shift-invariance, $S_j = \sum_{l=1}^P e^{M\cos(\theta_l - \theta_j)} = \sum_{l=1}^P e^{M\cos(\theta_l)} \equiv S_h$, thus
        \begin{align*}
            \epsilon_{h,j} &= \frac{e^{s_{1,j}^h} / S_j}{e^M / S_h} T_h(\bx) = e^{s_{1,j}^h - M} T_h(\bx) \implies |\epsilon_{h,j}| \le e^{M(1-c) - M} B_T = e^{-cM} B_T, \\
            \tilde{\epsilon}_{h,j} &= \frac{e^{s_{1,j}^h} / S_j}{e^M / S_h} R_h(\bx) = e^{s_{1,j}^h - M} R_h(\bx) \implies |\tilde{\epsilon}_{h,j}| \le e^{M(1-c) - M} B_R = e^{-cM} B_R.
        \end{align*}
    \end{itemize}

    \textbf{Step 2: Indicator \& Positional Encoding Heads ($h \in \{P+1, P+2\}$).}
    We dedicate Head $P+1$ to preserve the indicator $\delta_{j,1}$ and Head $P+2$ to preserve the positional encodings. The query, key, and value matrices ($\bQ_1^h, \bK_1^h, \bV_1^h \in \RR^{2 \times D}$) are defined as
    \begin{align*}
        \bQ_1^h &= \begin{bmatrix} 
            \bm{0}_{2 \times d} & \bm{0}_{2 \times 2} & \bI_2 
        \end{bmatrix} \implies (\bQ_1^h \bZ_0)_{:,j} = \begin{bmatrix} \sin(\theta_j) \\ \cos(\theta_j) \end{bmatrix}, \quad \forall h \in \{P+1, P+2\}, \\
        \bK_1^h &= \begin{bmatrix} 
            \bm{0}_{2 \times d} & \bm{0}_{2 \times 2} & M\bI_2 
        \end{bmatrix} \implies (\bK_1^h \bZ_0)_{:,k} = \begin{bmatrix} M\sin(\theta_k) \\ M\cos(\theta_k) \end{bmatrix}, \quad \forall h \in \{P+1, P+2\}, \\
        \bV_1^{P+1} &= \begin{bmatrix} 
            \bm{0}_{1 \times d} & 1 & \bm{0}_{1 \times 3} \\ 
            \bm{0}_{1 \times d} & 0 & \bm{0}_{1 \times 3} 
        \end{bmatrix} \implies (\bV_1^{P+1} \bZ_0)_{:,k} = \begin{bmatrix} \delta_{k,1} \\ 0 \end{bmatrix}, \\
        \bV_1^{P+2} &= \begin{bmatrix} 
            \bm{0}_{2 \times d} & \bm{0}_{2 \times 2} & \bI_2 
        \end{bmatrix} \implies (\bV_1^{P+2} \bZ_0)_{:,k} = \begin{bmatrix} \sin(\theta_k) \\ \cos(\theta_k) \end{bmatrix}.
    \end{align*}
    
    The shared unnormalized attention score is given by
    \begin{equation*}
        s_{k,j}^{\text{id}} = (\bK_1^h \bZ_0)_{:,k}^\top (\bQ_1^h \bZ_0)_{:,j} = M \big(\sin(\theta_j)\sin(\theta_k) + \cos(\theta_j)\cos(\theta_k)\big) = M \cos(\theta_j - \theta_k).
    \end{equation*}
    The corresponding attention weight is $(\bA^{\text{id}})_{k,j} = \frac{\exp(M\cos(\theta_j - \theta_k))}{S^{\text{id}}}$. Due to the symmetry of the uniform periodic grid, the normalization denominator $S^{\text{id}}$ is invariant to the query index $j$. By applying a periodic shift to the summation index, it simplifies to
    \begin{equation*}
        S^{\text{id}} = \sum_{l=1}^P \exp\big(M\cos(\theta_j - \theta_l)\big) = \sum_{l=1}^P \exp\big(M\cos(\theta_l)\big).
    \end{equation*}
    
    For the indicator head ($h=P+1$), to align with $d_v=2$, we pad the indicator value with a zero row, making the value vector $[\delta_{k,1}, 0]^\top$. Thus, the output for the $j$-th token is $[\tilde{I}_j, 0]^\top$, where the first component $\tilde{I}_j$ evaluates to
    $$\tilde{I}_j = \sum_{k=1}^P (\bA^{\text{id}})_{k,j} \delta_{k,1} = (\bA^{\text{id}})_{1,j} = \frac{e^{M\cos(\theta_j - \theta_1)}}{\sum_{l=1}^P e^{M\cos(\theta_l)}}.$$
    For the source token ($j=1$), the numerator is $e^M$. We define this exact output as a deterministic scalar coefficient $\eta(M)$ that
    $$\tilde{I}_1 = \eta(M) := \frac{e^M}{\sum_{l=1}^P e^{M\cos(\theta_l)}}.$$
    By bounding the $P-1$ off-diagonal terms in the denominator by $e^{M(1-c)}$, we get
    $$\eta(M) \ge \frac{e^M}{e^M + (P-1)e^{M(1-c)}} \implies 0 \le 1 - \eta(M) \le (P-1)e^{-cM}.$$
    For non-source tokens ($j \neq 1$), the numerator is bounded by $e^{M(1-c)}$ while the denominator is at least $e^M$, we have
    $$\tilde{I}_j \le \frac{e^{M(1-c)}}{e^M} = e^{-cM}.$$
    This deterministic scalar $\eta(M)$ allows the subsequent point-wise FFN layer to restore the indicator to  $1$.

    For the PE head ($h=P+2$), the value vector is $[\sin(\theta_k), \cos(\theta_k)]^\top$. By substituting the relative phase $\phi_k = \theta_k - \theta_j$, the output for the $j$-th token becomes
    \begin{align*}
        \sum_{k=1}^P (\bA^{\text{id}})_{k,j} \begin{bmatrix} \sin(\theta_k) \\ \cos(\theta_k) \end{bmatrix} 
        &= \frac{1}{S^{\text{id}}} \sum_{k=1}^P e^{M \cos(\phi_k)} \begin{bmatrix} \sin(\phi_k + \theta_j) \\ \cos(\phi_k + \theta_j) \end{bmatrix} \\
        &= \frac{1}{S^{\text{id}}} \sum_{k=1}^P e^{M \cos(\phi_k)} \begin{bmatrix} \sin(\phi_k)\cos(\theta_j) + \cos(\phi_k)\sin(\theta_j) \\ \cos(\phi_k)\cos(\theta_j) - \sin(\phi_k)\sin(\theta_j) \end{bmatrix}.
    \end{align*}
    Since subtracting the grid point $\theta_j$ applies a cyclic permutation, the relative phases reconstruct the identical equispaced grid of base angles. Specifically, evaluated as an unordered set modulo $2\pi$, we have the set equality: $\Phi = \{\phi_k \pmod{2\pi}\}_{k=1}^P = \left\{\frac{2\pi (k-1)}{P}\right\}_{k=1}^{P} = \{\theta_k\}_{k=1}^P$. Defining the summand $g(\phi) = e^{M \cos(\phi)} \sin(\phi)$, we partition the grid $\Phi$ into self-conjugate poles $\Phi_0 = \Phi \cap \{\pi, 2\pi\}$ (where $\pi \in \Phi$ if and only if $P$ is even) and the remaining conjugate pairs $\{\phi, 2\pi - \phi\}$. 
    For $\phi \in \Phi_0$, the term disappears trivially since $g(0) = g(\pi) = 0$. For any conjugate pair, we have
    $$ g(2\pi - \phi) + g(\phi) = e^{M \cos(\phi)}\big(-\sin(\phi) + \sin(\phi)\big) = 0. $$
    Thus, regardless of the parity of $P$, the global summation becomes
    $$ \sum_{k=1}^P e^{M \cos(\phi_k)} \sin(\phi_k) = \sum_{\phi \in \Phi_0} g(\phi) + \frac{1}{2} \sum_{\phi \notin \Phi_0} \big(g(\phi) + g(2\pi-\phi)\big) = 0. $$
    Hence, utilizing the established set equality $\Phi = \{\theta_l\}_{l=1}^P$, the output for $j$-th token simplifies to a scaled version of the input positional encoding, that is
    \begin{equation*}
        \left( \frac{\sum_{l=1}^P e^{M \cos(\theta_l)} \cos(\theta_l)}{\sum_{l=1}^P e^{M \cos(\theta_l)}} \right) \begin{bmatrix} \sin(\theta_j) \\ \cos(\theta_j) \end{bmatrix} = \lambda(M) \begin{bmatrix} \sin(\theta_j) \\ \cos(\theta_j) \end{bmatrix}.
    \end{equation*}
    Considering $1 - \lambda(M) = \frac{\sum_{l=1}^P e^{M \cos(\theta_l)} (1 - \cos(\theta_l))}{\sum_{l=1}^P e^{M \cos(\theta_l)}}$, for the peak term where $\theta_l = 0$ (corresponding to $l=P$), the numerator term is $e^M(1 - \cos(0)) = 0$. For the remaining $P-1$ terms ($l \neq P$), we have $e^{M \cos(\theta_l)} \le e^{M(1-c)}$ and $(1 - \cos(\theta_l)) \le 2$. Lower bounding the denominator by the single peak term $e^M$, we obtain
    \begin{equation*}
        0 \le 1 - \lambda(M) \le \frac{(P-1) \cdot e^{M(1-c)} \cdot 2}{e^M} = 2(P-1)e^{-cM}.
    \end{equation*}
    Thus, positional encodings are preserved with a uniform scaling coefficient $\lambda(M)$, allowing easy restoration in the following point-wise FFN layer.

   \textbf{Step 3: Output Projection and Error Bounds.}    
    Concatenating the outputs of all $P+2$ heads yields the intermediate matrix $\bC \in \RR^{2(P+2) \times P}$ in the following form
    \begin{equation*}
        \bC = \begin{bmatrix} 
            T_1(\bx) & \epsilon_{1,2} & \dots & \epsilon_{1,P} \\ 
            R_1(\bx) & \tilde{\epsilon}_{1,2} & \dots & \tilde{\epsilon}_{1,P} \\ 
            \epsilon_{2,1} & T_2(\bx) & \dots & \epsilon_{2,P} \\ 
            \tilde{\epsilon}_{2,1} & R_2(\bx) & \dots & \tilde{\epsilon}_{2,P} \\ 
            \vdots & \vdots & \ddots & \vdots \\ 
            \epsilon_{P,1} & \epsilon_{P,2} & \dots & T_P(\bx) \\ 
            \tilde{\epsilon}_{P,1} & \tilde{\epsilon}_{P,2} & \dots & R_P(\bx) \\ 
            \tilde{I}_1 & \tilde{I}_2 & \dots & \tilde{I}_P \\
            0 & 0 & \dots & 0 \\
            \lambda(M)\sin(\theta_1) & \lambda(M)\sin(\theta_2) & \dots & \lambda(M)\sin(\theta_P) \\
            \lambda(M)\cos(\theta_1) & \lambda(M)\cos(\theta_2) & \dots & \lambda(M)\cos(\theta_P)
        \end{bmatrix}.
    \end{equation*}
    By introducing the block identity matrix $\begin{bmatrix} \bI_2 & \dots & \bI_2 \end{bmatrix} \in \RR^{2 \times 2P}$, we choose the output projection matrix $\bW^O \in \RR^{D \times 2(P+2)}$ as
    \begin{equation*}
        \bW^O = \begin{bmatrix} 
            \begin{bmatrix} \bI_2 & \dots & \bI_2 \end{bmatrix} & \bm{0}_{2 \times 2} & \bm{0}_{2 \times 2} \\ 
            \bm{0}_{(d-1) \times 2P} & \bm{0}_{(d-1) \times 2} & \bm{0}_{(d-1) \times 2} \\ 
            \bm{0}_{1 \times 2P} & \begin{bmatrix}1 & 0 \end{bmatrix} & \bm{0}_{1 \times 2} \\ 
            \bm{0}_{2 \times 2P} & \bm{0}_{2 \times 2} & \bI_2 
        \end{bmatrix},
    \end{equation*}
    the MHA output for the $j$-th token is
    \begin{equation*}
        (\widehat{\bZ}_1)_{:, j} = \bW^O \bC_{:,j} = 
        \begin{bmatrix} 
            \tilde{T}_j(\bx) \\ 
            \tilde{R}_j(\bx) \\
            \bm{0}_{d-1} \\ 
            \tilde{I}_j \\ 
            \lambda(M) \begin{bmatrix} \sin(\theta_j) \\ \cos(\theta_j) \end{bmatrix}
        \end{bmatrix},
    \end{equation*}    
    where
    \begin{equation*}
        \tilde{T}_j(\bx) = T_j(\bx) + \sum_{h \neq j} \epsilon_{h,j}, \quad \tilde{R}_j(\bx) = R_j(\bx) + \sum_{h \neq j} \tilde{\epsilon}_{h,j}.
    \end{equation*}   
    Given the uniform bounds $|\epsilon_{h,j}| \le e^{-cM} B_T$ and $|\tilde{\epsilon}_{h,j}| \le e^{-cM} B_R$, we have
    \begin{align*}
        |\tilde{T}_j(\bx) - T_j(\bx)| &= \Big| \sum_{h \neq j} \epsilon_{h,j} \Big| \le \sum_{h \neq j} |\epsilon_{h,j}| \le (P-1) e^{-cM} B_T, \\
        |\tilde{R}_j(\bx) - R_j(\bx)| &= \Big| \sum_{h \neq j} \tilde{\epsilon}_{h,j} \Big| \le \sum_{h \neq j} |\tilde{\epsilon}_{h,j}| \le (P-1) e^{-cM} B_R.
    \end{align*}
    
    \textbf{Step 4: Parameter Bounds.}
    To bound the parameter magnitude, we evaluate the maximum absolute entry $\|\cdot\|_{\max}$ for each parameter matrix over all heads $h \in [P+2]$. The peak attention weight satisfies $\eta(M)^{-1} \le \frac{e^M + (P-1)e^{M(1-c)}}{e^M} = 1 + (P-1)e^{-cM}$. It follows that the maximum entries are explicitly bounded by
    \begin{align*}
        \max_{h} \|\bQ_1^h\|_{\max} &\le 1, \\
        \max_{h} \|\bK_1^h\|_{\max} &= M, \\
        \max_{h} \|\bV_1^h\|_{\max} &\le \max\big\{1, \eta(M)^{-1} B_{\cA_1} \big\} \le \max\big\{1, (1 + (P-1)e^{-cM}) B_{\cA_1}\big\}, \\
        \|\bW^O\|_{\max} &= 1.
    \end{align*}
    Therefore, the maximum absolute value of any parameter in the MHA layer $\cA_1$, denoted as $M_{\cA_1}$, is bounded by
    \begin{align*}
        M_{\cA_1} &= \max \Big\{ \max_h \|\bQ_1^h\|_{\max}, \max_h \|\bK_1^h\|_{\max}, \max_h \|\bV_1^h\|_{\max}, \|\bW^O\|_{\max} \Big\} \\
        &\le \max \big\{ 1, M, (1 + (P-1)e^{-cM}) B_{\cA_1} \big\}.
    \end{align*}
    Thus, we complete the proof.
\end{proof}

The first point-wise FFN layer acts as a precise non-linear denoiser, leveraging the hard-thresholding property of ReLU activations to completely eliminate small errors and recover the canonical form and original sinusoidal positional encodings.
\begin{lemma}[FFN Exact Thresholding and Restoration] \label{lemma_FFN_restore}
    Let $\widehat{\bZ}_1 \in \RR^{D \times P}$ (where $D = d+4$) be the output matrix from the MHA layer defined in Lemma \ref{lemma_mha_affine}. Assume $M$ is sufficiently large such that $(P+1)e^{-cM} < 1$. There exists a point-wise FFN layer $\cF_1: \RR^{D \times P} \to \RR^{D \times P}$ with hidden dimension $d_{\text{ff}} = 2D = 2d+8$, such that its output $\bZ_1 = \cF_1(\widehat{\bZ}_1) \in \RR^{D \times P}$ takes the column-wise form
    \begin{equation*}
        (\bZ_1)_{:, j} = \begin{bmatrix}
            \tilde{T}_j(\bx) \\
            \tilde{R}_j(\bx) \\
            \bm{0}_{d-1} \\
            \delta_{j,1} \\
            \sin(\theta_j) \\
            \cos(\theta_j)
        \end{bmatrix}, \quad \forall j \in [P].
    \end{equation*}
    The maximum parameter magnitude $M_{\cF_1}$ satisfy
    \begin{align*}
        M_{\cF_1} \le \frac{1}{1 - 2P e^{-cM}}.
    \end{align*}
\end{lemma}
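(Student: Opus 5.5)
We will build $\cF_1$ coordinate-wise, so that the FFN acts separately on each row of $\widehat{\bZ}_1$. The hidden width is $2D$, which gives exactly two ReLU neurons per coordinate. Write $\bz = (\widehat{\bZ}_1)_{:,j}$. For every coordinate other than the indicator we use the exact identity $t = \sigma(\gamma t) - \sigma(-\gamma t)$ times $1/\gamma$, with a coordinate-dependent gain $\gamma$; the output bias is zero except where noted.

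\textbf{Pass-through and restoration.} We treat each group of coordinates as follows.
\begin{enumerate}
\item For the first $d+1$ coordinates, which hold $\tilde{T}_j(\bx)$, $\tilde{R}_j(\bx)$ and $\bm{0}_{d-1}$, take $\gamma = 1$. This gives the identity exactly, with weights bounded by $1$.
\item For the two positional-encoding coordinates, take $\bW^1$ rows $\pm \be_k$ and $\bW^2$ entries $\pm 1/\lambda(M)$. The output is then exactly $\lambda(M)\sin(\theta_j)/\lambda(M) = \sin(\theta_j)$, and likewise for $\cos(\theta_j)$.
\end{enumerate}
By Lemma~\ref{lemma_mha_affine}, $\lambda(M) \ge 1 - 2(P-1)e^{-cM}$, which is positive under the hypothesis. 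Hence the weight satisfies $1/\lambda(M) \le 1/(1 - 2Pe^{-cM})$. One detail must be checked here. The stated bound requires $2Pe^{-cM} < 1$, while the hypothesis only gives $(P+1)e^{-cM} < 1$. I will either note that the bound is understood whenever its right-hand side is positive, or observe that $1/\lambda(M) \le 1/(1 - 2(P-1)e^{-cM})$ is the operative bound. I expect this to be harmless bookkeeping.

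\textbf{Indicator thresholding.} This is the main step. By Lemma~\ref{lemma_mha_affine} the indicator coordinate takes one of two kinds of value:
\begin{itemize}
\item $\tilde{I}_1 = \eta(M) \ge 1 - (P-1)e^{-cM}$;
\item $\tilde{I}_j \le e^{-cM}$ for $j \ne 1$.
\end{itemize}
The gap condition $(P+1)e^{-cM} < 1$ is exactly the statement that $e^{-cM} < 1 - (P-1)e^{-cM} \le \eta(M)$, so the two groups are separated.

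We realize the exact map $t \mapsto \delta$ with a clipped ramp,
\[
u(t) = \frac{\sigma(t - a) - \sigma(t - \eta(M))}{\eta(M) - a}, \qquad a := e^{-cM}.
\]
This ramp equals $0$ for $t \le a$ and $1$ for $t \ge \eta(M)$, so it returns $0$ at every $\tilde{I}_j$ with $j \neq 1$ and $1$ at $\tilde{I}_1$. It uses exactly two neurons, with biases $-a$ and $-\eta(M)$, both of magnitude at most $1$.

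The output weight is $1/(\eta(M) - e^{-cM})$. Since $\eta(M) - e^{-cM} \ge 1 - Pe^{-cM}$, this weight is at most $1/(1 - 2Pe^{-cM})$, matching the claimed bound. An alternative, if one prefers, is to place the thresholds at $a$ and $1 - (P-1)e^{-cM}$ and use the same estimate.

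\textbf{Assembly.} We stack the neurons of the three groups into $\bW^1 \in \RR^{2D \times D}$ and $\bb^1$, and the corresponding output weights into $\bW^2$, with $\bb^2 = 0$. Applying this column-wise gives $(\bZ_1)_{:,j}$ in the stated form for every $j$. Taking the maximum entry over the three groups yields $M_{\cF_1} \le \max\{1, 1/\lambda(M), 1/(\eta(M) - e^{-cM})\} \le 1/(1 - 2Pe^{-cM})$.
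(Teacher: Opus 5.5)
Your construction is correct and is essentially the paper's. The first $d+1$ rows pass through as the identity via $\sigma(t)-\sigma(-t)$, the positional-encoding rows are rescaled by $1/\lambda(M)$, and the indicator gets a shifted ReLU divided by the gap. The only variation is in the indicator: you spend both neurons on a clipped ramp from $e^{-cM}$ to $\eta(M)$, which needs only $Pe^{-cM}<1$ and gives the slightly better weight $1/(1-Pe^{-cM})$. The paper instead uses the single neuron $\sigma(t-2e^{-cM})/(\eta(M)-2e^{-cM})$, relying on $\tilde{I}_1=\eta(M)$ exactly and on $\tilde{I}_j\ge 0$ to make the negative channel vanish; that choice is where its hypothesis $(P+1)e^{-cM}<1$ comes from.

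Your caveat that the stated bound tacitly needs $2Pe^{-cM}<1$ is right, and the paper's proof uses it silently as well. Contrary to your remark, however, $1-2(P-1)e^{-cM}>0$ does not follow from the hypothesis when $P\ge 4$. This does not affect the construction, because $\lambda(M)>0$ always: it is an $e^{M\cos\theta_l}$-weighted average of $\cos\theta_l$, which by Chebyshev's sum inequality strictly exceeds the uniform average $0$ for $P\ge 2$.
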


\begin{proof}[Proof of Lemma \ref{lemma_FFN_restore}]
    Let $w_{\text{ind}} = \frac{1}{\eta(M) - 2e^{-cM}}$. We define the $j$-th column of the input as $\hat{\bz}_j = (\widehat{\bZ}_1)_{:,j}$ and the $j$-th column of the output as $\bz_j = (\bZ_1)_{:, j}$. We construct the FFN applied to each column as $\bz_j = \bW_2 \sigma(\bW_1 \hat{\bz}_j + \bb_1)$. 
    
    To uniformly map the $D$-dimensional input across the $2D$ hidden neurons, we introduce a diagonal scaling matrix $\bLambda \in \RR^{D \times D}$ defined as
    \begin{equation*}
        \bLambda = \mathrm{diag}\Big(\underbrace{1, \dots, 1}_{d+1}, w_{\text{ind}}, \frac{1}{\lambda(M)}, \frac{1}{\lambda(M)}\Big).
    \end{equation*}
    We set the weights $\bW_1 \in \RR^{2D \times D}$, $\bW_2 \in \RR^{D \times 2D}$ and biases $\bb_1 \in \RR^{2D}$, $\bb_2 \in \RR^{D}$ as
    \begin{align*}
        \bW_1 &= \begin{bmatrix}
            \bI_D \\
            -\bI_D
        \end{bmatrix}, \quad
        \bb_1 = \begin{bmatrix}
            -2e^{-cM} \be_{d+2} \\
            \bm{0}_D
        \end{bmatrix}, \\
        \bW_2 &= \begin{bmatrix}
            \bLambda & -\bLambda
        \end{bmatrix}, \quad \bb_2 = \bm{0}_D,
    \end{align*}
    where $\be_{d+2} \in \RR^D$ is the standard basis vector for the $(d+2)$-th component.
    
    Using the ReLU identity $x = \sigma(x) - \sigma(-x)$, we analyze the output column $\bz_j$. By evaluating $\bz_j = \bLambda \sigma(\hat{\bz}_j - 2e^{-cM}\be_{d+2}) - \bLambda \sigma(-\hat{\bz}_j)$ component-wise, we get
    \begin{align*}
        (\bz_j)_{1:d+1} &= \sigma((\hat{\bz}_j)_{1:d+1}) - \sigma(-(\hat{\bz}_j)_{1:d+1})  = (\hat{\bz}_j)_{1:d+1}, \\
        (\bz_j)_{d+3:d+4} &= \frac{1}{\lambda(M)} \big( \sigma((\hat{\bz}_j)_{d+3:d+4}) - \sigma(-(\hat{\bz}_j)_{d+3:d+4}) \big) = \frac{1}{\lambda(M)} (\hat{\bz}_j)_{d+3:d+4}.
    \end{align*}
    For the indicator neuron (index $d+2$), since $\tilde{I}_j \ge 0$, the negative channel trivially evaluates to $\sigma(-\tilde{I}_j) = 0$. Thus, substituting $(\hat{\bz}_j)_{d+2} = \tilde{I}_j$ yields
    \begin{align*}
        (\bz_j)_{d+2} &= w_{\text{ind}} \cdot \sigma(\tilde{I}_j - 2e^{-cM}) - 0 \\
        &= \begin{cases}
            w_{\text{ind}} \cdot \sigma(\eta(M) - 2e^{-cM}) = 1, & \text{if } j = 1, \\
            w_{\text{ind}} \cdot \sigma(\tilde{I}_j - 2e^{-cM}) = 0, & \text{if } j \neq 1 \text{ (since } \tilde{I}_j \le e^{-cM}),
        \end{cases}
    \end{align*}
    yielding $(\bz_j)_{d+2} = \delta_{j,1}$. Concatenating all evaluated components, the exact restored output column is precisely $\bz_j = [\tilde{T}_j(\bx), \tilde{R}_j(\bx), \bm{0}_{d-1}^\top, \delta_{j,1}, \sin(\theta_j), \cos(\theta_j)]^\top$.

    Using the bounds $\lambda(M) \ge 1 - 2(P-1)e^{-cM}$ and $\eta(M) \ge 1 - (P-1)e^{-cM}$ from Lemma \ref{lemma_mha_affine}, the parameter magnitude is bounded by the maximum entry in $\bLambda$ and $\bb_1$:
    \begin{align*}
        M_{\cF_1} &= \max \Big\{1, \frac{1}{\lambda(M)}, \frac{1}{\eta(M) - 2e^{-cM}}, 2e^{-cM} \Big\} \\
        &\le \max \Big\{ \frac{1}{1 - 2(P-1)e^{-cM}}, \frac{1}{1 - (P+1)e^{-cM}} \Big\} \le \frac{1}{1 - 2P e^{-cM}},
    \end{align*}
    where the constants $1$ and $2e^{-cM}$ are absorbed since the upper bound is strictly greater than $1$.
\end{proof}

The second MHA layer aggregates local affine transformation features to a global approximant via softmax attention mechanism, utilizing $\tilde{T}_j(\bx)$ as local spatial features and $\tilde{R}_j(\bx)$ as local approximation expert features.
\begin{lemma}[Softmax POU via MHA] \label{lemma_mha_pou}
    Let $\bZ_1 \in \RR^{D \times P}$ (where $D = d+4$) be the output matrix from the FFN layer defined in Lemma \ref{lemma_FFN_restore}. There exists a single-head MHA layer $\cA_2: \RR^{D \times P} \to \RR^{D \times P}$ with $H=1$ and $d_k=d_v=1$, yielding output $\widehat{\bZ}_2 = \cA_2(\bZ_1) \in \RR^{D \times P}$ such that its first column ($k=1$) evaluates to
    \begin{equation*}
        (\widehat{\bZ}_2)_{:, 1} = \begin{bmatrix}
            \hat{g}_T(\bx) \\
            \bm{0}_{D-1}
        \end{bmatrix} = \hat{g}_T(\bx) \be_1,
    \end{equation*}
    where
    \begin{equation*}
        \hat{g}_T(\bx) := \sum_{j=1}^P \tilde{\beta}_j(\bx) \tilde{R}_j(\bx), \quad \tilde{\beta}_j(\bx) := \frac{\exp(\tilde{T}_j(\bx))}{\sum_{l=1}^P \exp(\tilde{T}_l(\bx))},
    \end{equation*}
    and all other columns ($k \neq 1$) evaluate to constant vectors $\tilde R(\bx) \be_1$, where $\tilde R(\bx) = \frac{1}{P} \sum_{j=1}^P \tilde{R}_j(\bx)$.
    The maximum parameter magnitude $M_{\cA_2}$ satisfy
    \begin{align*}
        M_{\cA_2} = 1.
    \end{align*}
\end{lemma}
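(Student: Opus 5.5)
We give an explicit single-head construction and verify it directly from the attention formula in Section~\ref{sec:architecture}. Recall from Lemma~\ref{lemma_FFN_restore} that every column of $\bZ_1$ has the form $(\bZ_1)_{:,j} = [\tilde{T}_j(\bx), \tilde{R}_j(\bx), \bm{0}_{d-1}^\top, \delta_{j,1}, \sin(\theta_j), \cos(\theta_j)]^\top$. We use $d_k = d_v = 1$ and choose the row vectors
\begin{equation*}
    \bQ_2 = \be_{d+2}^\top, \qquad \bK_2 = \be_1^\top, \qquad \bV_2 = \be_2^\top,
\end{equation*}
all in $\RR^{1\times D}$. These read off, respectively, the indicator $\delta_{j,1}$, the spatial feature $\tilde{T}_j(\bx)$, and the expert feature $\tilde{R}_j(\bx)$. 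The output projection is $\bW^O_2 = \be_1 \in \RR^{D\times 1}$, which writes the single value channel into the first coordinate and zeros elsewhere. Every entry of these matrices lies in $\{0,1\}$, so $M_{\cA_2}=1$ is immediate.

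Next we compute the score matrix. Since $(\bK_2\bZ_1)_{:,k} = \tilde{T}_k(\bx)$ and $(\bQ_2\bZ_1)_{:,j} = \delta_{j,1}$, the unnormalized score is $s_{k,j} = \tilde{T}_k(\bx)\,\delta_{j,1}$. We then treat the query columns in two cases.

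For the first column ($j=1$), the scores are $s_{k,1} = \tilde{T}_k(\bx)$. The column-wise softmax therefore gives exactly $(\bA_2)_{k,1} = \tilde{\beta}_k(\bx)$. The head output is
\begin{equation*}
    \sum_k (\bA_2)_{k,1}\,(\bV_2\bZ_1)_{:,k} = \sum_k \tilde{\beta}_k(\bx)\,\tilde{R}_k(\bx) = \hat{g}_T(\bx).
\end{equation*}
After projection by $\bW^O_2$, the first column of $\widehat{\bZ}_2$ is $\hat{g}_T(\bx)\be_1$.

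For every other column ($j\neq 1$), all scores vanish. The softmax is then uniform, with weights $1/P$, so the head output is $\frac1P\sum_k \tilde{R}_k(\bx) = \tilde R(\bx)$ and the projected column is $\tilde R(\bx)\be_1$.

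The only point needing care is that the query must isolate the first token. This is guaranteed because Lemma~\ref{lemma_FFN_restore} restores the indicator to exactly $\delta_{j,1}$ rather than an approximation $\tilde{I}_j$. With the approximate indicator, the first column's softmax would carry a temperature $\tilde{I}_1 \neq 1$, and the weights would no longer equal $\tilde{\beta}_j$ exactly. With the exact indicator, no error terms arise and the claim follows by direct computation.
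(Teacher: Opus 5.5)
Your construction ($\bQ_2=\be_{d+2}^\top$, $\bK_2=\be_1^\top$, $\bV_2=\be_2^\top$, $\bW^O_2=\be_1$), together with the two-case evaluation of the column-wise softmax (weights $\tilde{\beta}_k(\bx)$ for $j=1$, uniform weights $1/P$ for $j\neq 1$), is exactly the paper's proof, and the argument is correct. Your closing remark also correctly identifies why the exact indicator $\delta_{j,1}$ restored in Lemma~\ref{lemma_FFN_restore} is needed for the weights to equal $\tilde{\beta}_j(\bx)$ exactly.
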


\begin{proof}[Proof of Lemma \ref{lemma_mha_pou}]
    We use a single attention head ($H=1$) with $d_k = d_v = 1$. Let $\be_1, \be_2, \be_{d+2} \in \RR^D$ be the standard basis vectors. We define the projection matrices $\bQ_2^1, \bK_2^1, \bV_2^1 \in \RR^{1 \times D}$ as
    \begin{align*}
        \bQ_2^1 &= \be_{d+2}^\top \implies (\bQ_2^1 \bZ_1)_{:,j} = \delta_{j,1}, \\
        \bK_2^1 &= \be_1^\top \implies (\bK_2^1 \bZ_1)_{:,k} = \tilde{T}_k(\bx), \\
        \bV_2^1 &= \be_2^\top \implies (\bV_2^1 \bZ_1)_{:,k} = \tilde{R}_k(\bx).
    \end{align*}  
    The unnormalized attention score $s_{k,j} = (\bK_2^1 \bZ_1)_{:,k}^\top (\bQ_2^1 \bZ_1)_{:,j}$ evaluates to
    \begin{equation*}
        s_{k,j} = \tilde{T}_k(\bx) \cdot \delta_{j,1} = \begin{cases}
            \tilde{T}_k(\bx), & j = 1, \\
            0, & j \neq 1.
        \end{cases}
    \end{equation*}
    Applying the column-wise softmax function across the key indices $k \in [P]$ yields the attention weights $(\bA_2^1)_{k,j}$. Since the normalization denominator is $S_j = \sum_{l=1}^P \exp(s_{l,j})$, we have
    \begin{equation*}
        (\bA_2^1)_{k,j} = \frac{\exp(s_{k,j})}{\sum_{l=1}^P \exp(s_{l,j})} = \begin{cases}
            \frac{\exp(\tilde{T}_k(\bx))}{\sum_{l=1}^P \exp(\tilde{T}_l(\bx))} = \tilde{\beta}_k(\bx), & j = 1, \\
            \frac{\exp(0)}{\sum_{l=1}^P \exp(0)} = \frac{1}{P}, & j \neq 1.
        \end{cases}
    \end{equation*}  
    The aggregated scalar output for the $j$-th token is
    \begin{equation*}
        \bH_{:,j} = \sum_{k=1}^P (\bA_2^1)_{k,j} (\bV_2^1 \bZ_1)_{:,k}= \begin{cases}
            \sum_{k=1}^P \tilde{\beta}_k(\bx) \tilde{R}_k(\bx) = \hat{g}_T(\bx), & j = 1, \\
            \frac{1}{P} \sum_{k=1}^P \tilde{R}_k(\bx) := \tilde R(\bx), & j \neq 1.
        \end{cases}
    \end{equation*}  
    We define the output projection matrix $\bW^O \in \RR^{D \times 1}$ as $\bW^O = \be_1$. The output columns are
    \begin{equation*}
        (\widehat{\bZ}_2)_{:, j} = \bW^O \bH_{:,j} = \begin{cases}
            \hat{g}_T(\bx) \be_1, & j = 1, \\
            \tilde R(\bx) \be_1, & j \neq 1.
        \end{cases}
    \end{equation*}
    Since all entries in the projection matrices are canonical basis elements, the maximum parameter magnitude is bounded by $M_{\cA_2} = 1$. Thus we complete the proof.
\end{proof}

\section{Proof of approximation results in Theorems \ref{thm_holder_approx} and \ref{thm_manifold_approx}}
\label{sec:appendixB}

This section contains the comprehensive proofs for our main approximation guarantees. By assembling the foundational Transformer construction lemmas established in Subsection \ref{sec:appendixA3}, we construct the full architectures required for the target functions.
We need the following lemma from Corollary A.7 in \cite{edelman2022inductive}.
\begin{lemma} \label{Softmaxlipchitz}
    For any $\btheta, \btheta' \in \RR^d$, we have
    \begin{equation*}
        \left\| \softmax(\btheta)- \softmax(\btheta') \right\|_1 \leq 2 \|\btheta- \btheta'\|_\infty.
    \end{equation*}
\end{lemma}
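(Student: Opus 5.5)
The goal is to show that $\btheta \mapsto \softmax(\btheta)$ is $2$-Lipschitz from $\ell_\infty$ to $\ell_1$. I would go through a first-order estimate along the segment joining the two inputs. Set $\btheta(t) = \btheta' + t(\btheta - \btheta')$ for $t\in[0,1]$ and $\bp(t) = \softmax(\btheta(t))$. Since softmax is smooth, the fundamental theorem of calculus gives
\begin{equation*}
    \softmax(\btheta) - \softmax(\btheta') = \int_0^1 \bJ(\btheta(t))\,(\btheta - \btheta')\,dt ,
\end{equation*}
where $\bJ(\bu) = \mathrm{diag}(\bp) - \bp\bp^\top$ is the Jacobian of softmax at $\bu$, with $\bp = \softmax(\bu)$. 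The triangle inequality for the Bochner integral then reduces the claim to a uniform operator bound $\|\bJ(\bu)\bv\|_1 \le 2\|\bv\|_\infty$, valid for every $\bu$ and every $\bv$.

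To prove this operator bound, I would write the Jacobian action componentwise. For any $\bv$,
\begin{equation*}
    (\bJ\bv)_i = p_i\big(v_i - \langle \bp, \bv\rangle\big).
\end{equation*}
Hence $\|\bJ\bv\|_1 = \sum_i p_i |v_i - \bar v|$ with $\bar v = \langle \bp, \bv\rangle$. Because $\bar v$ is a convex combination of the entries of $\bv$, we have $|v_i - \bar v| \le |v_i| + |\bar v| \le 2\|\bv\|_\infty$. Summing against the probability weights $p_i$ gives $\|\bJ\bv\|_1 \le 2\|\bv\|_\infty$. In fact the sharper bound $\max_i v_i - \min_i v_i \le 2\|\bv\|_\infty$ holds, but the constant $2$ is all that is claimed.

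Combining the two steps yields $\|\softmax(\btheta)-\softmax(\btheta')\|_1 \le \int_0^1 2\|\btheta-\btheta'\|_\infty\,dt = 2\|\btheta-\btheta'\|_\infty$.

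There is no real obstacle here. The only point needing care is the explicit Jacobian formula, $\partial p_i/\partial u_k = p_i(\delta_{ik} - p_k)$, which is checked directly by differentiating the quotient; the rest follows by convexity. The result is also stated as a corollary in \cite{edelman2022inductive}, so one could simply cite it. The argument above gives a self-contained verification.
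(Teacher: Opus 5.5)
Your proof is correct: the Jacobian action $(\mathbf{J}\mathbf{v})_i = p_i\big(v_i-\langle\mathbf{p},\mathbf{v}\rangle\big)$, the convexity bound $\sum_i p_i\,|v_i-\langle\mathbf{p},\mathbf{v}\rangle|\le 2\|\mathbf{v}\|_\infty$, and integration along the segment together give exactly the stated estimate. The paper gives no proof of its own and simply imports the result as Corollary A.7 of \cite{edelman2022inductive}; your Jacobian bound combined with integration along the segment is the standard route to that corollary, so your argument just makes the citation self-contained. Bounding the weighted mean absolute deviation by the standard deviation would in fact improve the constant from $2$ to $1$.
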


\subsection{Approximation proof in Theorem \ref{thm_holder_approx}}
\label{sec:appendixB1}

We provide the detailed construction and error bound analysis for Theorem \ref{thm_holder_approx}. This formally establishes the uniform $\varepsilon$-approximation error for H\"older continuous functions defined on Euclidean domains.
\begin{proof}[Proof of Theorem \ref{thm_holder_approx}]
    Applying Lemma \ref{lemma_POU_approx} with target accuracy $\varepsilon/2$ yields $P$ centers $\{\bc_j\}_{j=1}^P \subset [0,1]^d$ with
    \begin{equation*}
        P \le \big(\sqrt{d}(4C_H)^{1/\alpha}\big)^d (\varepsilon/2)^{-\frac{d}{\alpha}} = C_P \varepsilon^{-\frac{d}{\alpha}}, \quad \text{where } C_P = \big(\sqrt{d}(8C_H)^{1/\alpha}\big)^d.
    \end{equation*}
    Define the target affine features $T_j(\bx) := 2M_g \bc_j^\top \bx - M_g \|\bc_j\|_2^2$ and $R_j(\bx) := g(\bc_j)$. The corresponding Softmax POU approximation is given by
    \begin{align}
        \hat{g}(\bx) &= \sum_{j=1}^P \beta_j(\bx) g(\bc_j) = \sum_{j=1}^P \frac{\exp(T_j(\bx))}{\sum_{l=1}^P \exp(T_l(\bx))} g(\bc_j).
    \end{align}
    By Lemma \ref{lemma_POU_approx}, this approximation satisfies 
    $$\sup_{\bx \in [0,1]^d} |\hat{g}(\bx) - g(\bx)| \le \varepsilon/2,$$
    provided that
    \begin{equation*}
        M_g = C_M \varepsilon^{-\frac{2}{\alpha}} \log \frac{2}{\varepsilon}, \quad \text{with } C_M = \frac{(8C_H)^{2/\alpha}}{3} \left(\log\big( 4B \big(\sqrt{d}(4C_H)^{1/\alpha}\big)^d \big) + \frac{d+\alpha}{\alpha} \right).
    \end{equation*}
    Furthermore, since $\bx, \bc_j \in [0,1]^d$, the magnitudes of these target features are bounded by
    \begin{align*}
        B_T &:= \max_{j \in [P]} \|T_j\|_\infty \le 2M_g \sup_{\bx \in [0,1]^d} |\bc_j^\top \bx| + M_g \|\bc_j\|_2^2 \le 3M_g d, \\
        B_R &:= \max_{j \in [P]} \|R_j\|_\infty = \max_{j \in [P]} |g(\bc_j)| \le \|g\|_\infty \le B.
    \end{align*}

    \textbf{Step 1: Transformer Construction.}
    By Lemma \ref{lemma_preprocessing}, the input $\bx$ is mapped to the initial sequence matrix $\bZ_0 = \cP(\bx) \in \RR^{D \times P}$ structured as
    \begin{equation*}
        \bZ_0 = \begin{bmatrix}
            \bx & \bm{0} & \cdots & \bm{0} \\
            1 & 0 & \cdots & 0 \\
            0 & 0 & \cdots & 0 \\
            \sin(\theta_1) & \sin(\theta_2) & \cdots & \sin(\theta_P) \\
            \cos(\theta_1) & \cos(\theta_2) & \cdots & \cos(\theta_P)
        \end{bmatrix}.
    \end{equation*}
    
    For Encoder Block 1, applying the MHA layer $\cA_1$ from Lemma \ref{lemma_mha_affine} yields $\widehat{\bZ}_1 = \cA_1(\bZ_0) \in \RR^{D \times P}$, where the $j$-th column evaluates to
    \begin{equation*}
        (\widehat{\bZ}_1)_{:, j} = \big[ \tilde{T}_j(\bx), \tilde{g}(\bc_j), \bm{0}_{d-1}^\top, \tilde{I}_j, \lambda(M) \sin(\theta_j), \lambda(M) \cos(\theta_j) \big]^\top, \quad \forall j \in [P],
    \end{equation*}
    where $\tilde{I}_1 = \eta(M) := \frac{e^M}{\sum_{l=1}^P e^{M\cos(\theta_l)}}$, and
    \begin{align*}
        0 &\le 1 - \tilde{I}_1 = 1 - \eta(M) \le (P-1)e^{-cM}, && \text{for } j = 1, \\
        0 &\le \tilde{I}_j \le e^{-cM}, && \text{for } j \neq 1.
    \end{align*}
    The features $\tilde{T}_j(\bx)$ and $\tilde{g}(\bc_j)$ approximate the target affine expressions with bounded errors
    \begin{align*}
        &|\tilde{T}_j(\bx) - T_j(\bx)| \le 3(P-1) M_g d \, e^{-cM} , \\
        &|\tilde{g}(\bc_j) - g(\bc_j)| \le (P-1) B e^{-cM} .
    \end{align*}
    Next, applying the FFN layer $\cF_1$ from Lemma \ref{lemma_FFN_restore} restores the structural components, yielding $\bZ_1 = \cF_1(\widehat{\bZ}_1)$ with columns
    \begin{equation*}
        (\bZ_1)_{:, j} = \big[ \tilde{T}_j(\bx), \tilde{g}(\bc_j), \bm{0}_{d-1}^\top, \delta_{j,1}, \sin(\theta_j), \cos(\theta_j) \big]^\top, \quad \forall j \in [P].
    \end{equation*}
    
   For Encoder Block 2, applying the MHA layer $\cA_2$ from Lemma \ref{lemma_mha_pou} that computes attention scores over $\tilde{T}_j(\bx)$ and aggregates local function approximations $\tilde{g}(\bc_j)$. The output $\widehat{\bZ}_2 = \cA_2(\bZ_1)$ has its first column to be
    \begin{equation*}
        (\widehat{\bZ}_2)_{:, 1} = \big[ \hat{g}_T(\bx), \bm{0}_{D-1}^\top \big]^\top = \hat{g}_T(\bx) \be_1,
    \end{equation*}
    where
    \begin{equation*}
        \hat{g}_T(\bx) = \sum_{j=1}^P \tilde{\beta}_j(\bx) \tilde{g}(\bc_j), \quad \tilde{\beta}_j(\bx) = \frac{\exp(\tilde{T}_j(\bx))}{\sum_{l=1}^P \exp(\tilde{T}_l(\bx))},
    \end{equation*}
    all other columns ($k \neq 1$) evaluate to constant vectors $\tilde R(\bx) \be_1$, where $\tilde R(\bx) = \frac{1}{P} \sum_{j=1}^P \tilde{R}_j(\bx)$.
    The second point-wise FFN layer $\cF_2$ preserves this output via an exact identity mapping, exploiting the ReLU property $x = \sigma(x) - \sigma(-x)$. Specifically, we set the hidden dimension $d_{\text{ff}} = 2D = 2d+8$, and configure the weights ($\bW_1 \in \RR^{2D \times D}, \bW_2 \in \RR^{D \times 2D}$) and biases ($\bb_1 \in \RR^{2D}, \bb_2 \in \RR^D$) as
    \begin{equation*}
        \bW_1 = \begin{bmatrix} \bI_D \\ -\bI_D \end{bmatrix}, \quad \bb_1 = \bm{0}_{2D}, \quad \bW_2 = \begin{bmatrix} \bI_D & -\bI_D \end{bmatrix}, \quad \bb_2 = \bm{0}_D.
    \end{equation*}
    This yields $\bZ_2 = \cF_2(\widehat{\bZ}_2) = \bW_2 \sigma(\bW_1 \widehat{\bZ}_2) = \widehat{\bZ}_2$. Finally, the scalar regression output is then obtained by reading out the relevant components of $\bZ_2$, via a linear affine mapping $\bc_{3}^\top \mathrm{vec}(\bZ_2)$, where setting $\bc_3 = [1, 0, \dots, 0]^\top \in \RR^{DP}$ explicitly extracts $\hat{g}_T(\bx)$.

    \textbf{Step 2: Error Bound Analysis.}
    According to Lemma \ref{Softmaxlipchitz},
    \begin{equation*}
        \sum_{j=1}^P |\tilde{\beta}_j(\bx) - \beta_j(\bx)| \le 2 \max_{j \in [P]} |\tilde{T}_j(\bx) - T_j(\bx)| \le 6(P-1) M_g d \, e^{-cM}.
    \end{equation*}
    It follows that the approximation error of the Transformer is bounded by
    \begin{align*}
        |\hat{g}_T(\bx) - \hat{g}(\bx)| &= \Big| \sum_{j=1}^P \tilde{\beta}_j(\bx) \tilde{g}(\bc_j) - \sum_{j=1}^P \beta_j(\bx) g(\bc_j) \Big| \\
        &\le \sum_{j=1}^P \tilde{\beta}_j(\bx) |\tilde{g}(\bc_j) - g(\bc_j)| + \sum_{j=1}^P |\tilde{\beta}_j(\bx) - \beta_j(\bx)| |g(\bc_j)| \\
        &\le (P-1) B e^{-cM} + 6(P-1) M_g d \, e^{-cM} \cdot B \\
        &= (P-1) B (1+6M_g d) e^{-cM}.
    \end{align*}
    By setting the scaling factor $M = \frac{1}{c} \log \big( \frac{2(P-1)B(1 + 6 M_g d)}{\varepsilon} \big)$, the implementation error is strictly bounded by $\varepsilon/2$. Combining this with the approximation error of the ideal Softmax POU, the triangle inequality yields
    \begin{equation*}
        \sup_{\bx \in [0,1]^d} |\hat{g}_T(\bx) - g(\bx)| \le \sup_{\bx \in [0,1]^d} |\hat{g}_T(\bx) - \hat{g}(\bx)| + \sup_{\bx \in [0,1]^d} |\hat{g}(\bx) - g(\bx)| \le \frac{\varepsilon}{2} + \frac{\varepsilon}{2} = \varepsilon.
    \end{equation*}

    \textbf{Step 3: Parameter Complexity.}
    We count the total architectural (dense) parameters $\cN_{total}$ based on the embedding dimension $D = d+4$:
    \begin{itemize}
        \item \textbf{Pre-processing}: Affine mapping $\RR^d \to \RR^{D \times P}$ has $D(P+d+1)$ parameters.
        \item \textbf{MHA$_1$}: $H=P+2$ heads with $d_k=d_v=2$ have $8(P+2)D$ parameters.
        \item \textbf{FFN$_1$}: Hidden dimension $2D$ has $4D^2 + 3D = 4d^2 + 35d + 76$ parameters.
        \item \textbf{MHA$_2$}: $H=1$ head with $d_k=d_v=1$ has $4D$ parameters.
        \item \textbf{FFN$_2$}: Hidden dimension $2D$ has $4D^2 + 3D = 4d^2 + 35d + 76$ parameters.
        \item \textbf{Readout}: Linear projection over $\mathrm{vec}(\bZ_2)$ has $DP$ parameters.
    \end{itemize}
    Summing these contributions yields:
    \begin{align*}
        \cN_{total} &= D(P+d+1) + 8D(P+2) + (4d^2 + 35d + 76) + 4D + (4d^2 + 35d + 76) + DP \\
        &= 10P(d+4) + 9d^2 + 95d + 236.
    \end{align*}
    Since $P \le C_P \varepsilon^{-d/\alpha}$ and $\varepsilon \le 1/e \implies \varepsilon^{-d/\alpha} > 1$, we obtain
    \begin{equation*}
        \cN_{total} \le \big[ 10(d+4) C_P + 9d^2 + 95d + 236 \big] \varepsilon^{-d/\alpha} := C_N \varepsilon^{-d/\alpha}.
    \end{equation*}

    \textbf{Step 4: Parameter Bounds.}
    According to our construction, to bound the global parameter magnitude $M_{max} \le \max\big\{1, M, \frac{1}{1 - 2Pe^{-cM}}, (1 + (P-1)e^{-cM}) B_{\cA_1} \big\}$, we first bound $M$. Since $P \ge 2$, we have $\sin(\pi/P) \ge 2/P$, yielding
    \begin{equation*}
        \frac{1}{c} = \frac{1}{2\sin^2(\pi/P)} \le \frac{P^2}{8} \le \frac{C_P^2}{8} \varepsilon^{-2d/\alpha}.
    \end{equation*}
    For $\varepsilon \le 1/e$, substituting $P \le C_P \varepsilon^{-d/\alpha}$ and $M_g \le C_M \varepsilon^{-2/\alpha} \log(2/\varepsilon)$ yields
    \begin{align*}
        \log \Big( \frac{2(P-1)B(1 + 6 M_g d)}{\varepsilon} \Big) &\le \log \left( \frac{2 C_P B (1 + 6 d C_M)}{\varepsilon^{(d+2+\alpha)/\alpha}} \log\frac{2}{\varepsilon} \right) \le C_{log} \log \frac{1}{\varepsilon},
    \end{align*}
    where $C_{log} = \big| \log(4 C_P B (1 + 6 d C_M)) \big| + \frac{d+2+\alpha}{\alpha} + 2$. It follows that
    \begin{equation*}
        M = \frac{1}{c} \log \Big( \frac{4(P-1)B(1 + 6 M_g d)}{\varepsilon} \Big) \le C_{mag} \varepsilon^{-2d/\alpha} \log \frac{1}{\varepsilon}, \quad \text{where } C_{mag} = \frac{C_P^2 C_{log}}{8}.
    \end{equation*}
    Next, by definition $e^{-cM} = \frac{\varepsilon}{2(P-1)B(1 + 6 M_g d)}$. Since $1 + 6M_g d > 2 \geq \frac{P}{P-1}$, we have
    \begin{equation*}
        2P e^{-cM} = \frac{2P}{P-1} \frac{\varepsilon}{2B(1 + 6 M_g d)} \le \frac{\varepsilon}{B} \implies \frac{1}{1 - 2Pe^{-cM}} \le \frac{1}{1 - \frac{\varepsilon}{B}} \le \frac{1}{1 - \frac{1}{eB}}.
    \end{equation*}
    Moreover, since $B_{\cA_1} \le \max\{3M_g d, B, 1\} \le C_B \varepsilon^{-2/\alpha} \log(2/\varepsilon)$ with $C_B = \max\{3d C_M, B, 1\}$. Similarly bounding $(P-1)e^{-cM} \le \frac{\varepsilon}{2B}$, it follows that
    \begin{equation*}
        (1 + (P-1)e^{-cM}) B_{\cA_1} \le \Big(1 + \frac{\varepsilon}{2B}\Big) C_B \varepsilon^{-2/\alpha} \log \frac{2}{\varepsilon} \le 2 \Big(1 + \frac{1}{2eB}\Big) C_B \varepsilon^{-2d/\alpha} \log \frac{1}{\varepsilon}.
    \end{equation*}
    Consequently, the global parameter magnitude satisfies
    \begin{align} \label{tildeCmag}
        \nonumber M_{max} &\le \max \bigg\{ \frac{1}{1 - \frac{1}{eB}}, C_{mag} \varepsilon^{-2d/\alpha} \log \frac{1}{\varepsilon}, 2 \Big(1 + \frac{1}{2eB}\Big) C_B \varepsilon^{-2d/\alpha} \log \frac{1}{\varepsilon} \bigg\} \nonumber \\
        &\le \max\bigg\{ \frac{1}{1 - \frac{1}{eB}}, C_{mag}, 2 \Big(1 + \frac{1}{2eB}\Big) C_B \bigg\} \varepsilon^{-2d/\alpha} \log \frac{1}{\varepsilon} := \widetilde{C}_{mag} \varepsilon^{-2d/\alpha} \log \frac{1}{\varepsilon},
    \end{align}
    This completes the proof.
\end{proof}

\subsection{Approximation proof in Theorem \ref{thm_manifold_approx}} \label{sec:appendixB2}

In this subsection, we prove Theorem \ref{thm_manifold_approx} by adapting our constructive Transformer framework in Subsection \ref{sec:appendixB1} to Riemannian manifolds.
\begin{proof}[Proof of Theorem \ref{thm_manifold_approx}]
    Applying Lemma \ref{lemma_POU_approx_manifold} (Softmax POU Approximation on Manifolds) with target accuracy $\varepsilon/2$ yields $P = C_g \le C_P \varepsilon^{-d/\alpha}$ centers $\{\bc_j\}_{j=1}^P \subset \mathcal{M}$, where $C_P = C_{\mathcal{M}}(16C_H)^{d/\alpha}$. Define the target affine features $T_j(\bx) := 2M_g c_j^\top \bx - M_g \|\bc_j\|_2^2$ and $R_j(\bx) := g(\bc_j)$. The corresponding Softmax POU approximation is given by
    \begin{equation*}
        \hat{g}(\bx) = \sum_{j=1}^P \beta_j(\bx) g(\bc_j) = \sum_{j=1}^P \frac{\exp(T_j(\bx))}{\sum_{l=1}^P \exp(T_l(\bx))} g(\bc_j).
    \end{equation*}
    By Lemma \ref{lemma_POU_approx_manifold}, this achieves 
    $$\sup_{\bx \in \mathcal{M}} |\hat{g}(\bx) - g(\bx)| \le \varepsilon/2$$
    provided that
    \begin{equation*}
        M_g = C_M \varepsilon^{-2/\alpha} \log \frac{2}{\varepsilon}, \quad \text{where } C_M = \frac{(16C_H)^{2/\alpha}}{3} \left( \log\left(4 B C_P \right) + \frac{d+\alpha}{\alpha} \right).
    \end{equation*}
    Since $\bx, \bc_j \in \mathcal{M} \subseteq [0,1]^{\bar{d}}$, the magnitudes of target features are  bounded by $B_T := \max_{j\in[P]} \|T_j\|_\infty \le 3M_g \bar{d}$ and $B_R := \max_{j\in[P]} \|R_j\|_\infty \le B$.

    \textbf{Step 1: Transformer Construction.}
    Following the exact architecture construction as in the proof of Theorem \ref{thm_holder_approx}, but operating on the ambient dimension $\bar{d}$ instead of $d$, the pre-processing layer maps the input $\bx \in \mathbb{R}^{\bar{d}}$ to $\bZ_0 \in \mathbb{R}^{D \times P}$ using the embedding dimension $D = \bar{d} + 4$. The first encoder block (MHA$_1$ and FFN$_1$) achieves local features and restores the structural components to exact canonical forms, yielding $\bZ_1 \in \mathbb{R}^{D \times P}$ with columns
    \begin{equation*}
        (\bZ_1)_{:,j} = [\tilde{T}_j(\bx), \tilde{g}(\bc_j), \bm{0}_{\bar{d}-1}^\top, \delta_{j,1}, \sin(\theta_j), \cos(\theta_j)]^\top, \quad \forall j \in [P].
    \end{equation*}
    The local features approximate the targets with bounded errors 
    \begin{align*}
        |\tilde{T}_j(\bx) - T_j(\bx)| &\le 3(P-1)M_g \bar{d} e^{-cM}, \\
        |\tilde{g}(\bc_j) - g(\bc_j)| &\le (P-1)B e^{-cM}.
    \end{align*}
    The second encoder block (MHA$_2$ and FFN$_2$) computes the softmax attention and aggregates the values, and the readout extracts
    \begin{equation*}
        \hat{g}_T(\bx) = \sum_{j=1}^P \tilde{\beta}_j(\bx) \tilde{g}(\bc_j), \quad \tilde{\beta}_j(\bx) = \frac{\exp(\tilde{T}_j(\bx))}{\sum_{l=1}^P \exp(\tilde{T}_l(\bx))}.
    \end{equation*}

    \textbf{Step 2: Error Bound Analysis.}
    By the softmax Lipschitz property Lemma \ref{Softmaxlipchitz}, 
    $$\sum_{j=1}^P |\tilde{\beta}_j(\bx) - \beta_j(\bx)| \le 2 \max_j |\tilde{T}_j(\bx) - T_j(\bx)| \le 6(P-1)M_g \bar{d} e^{-cM}.$$
    The network approximation error is bounded by
    \begin{align*}
        |\hat{g}_T(\bx) - \hat{g}(\bx)| &\le \sum_{j=1}^P \tilde{\beta}_j(\bx) |\tilde{g}(\bc_j) - g(\bc_j)| + \sum_{j=1}^P |\tilde{\beta}_j(\bx) - \beta_j(\bx)| |g(\bc_j)| \\
        &\le (P-1)B e^{-cM} + 6(P-1)M_g \bar{d} e^{-cM} B \\
        &= (P-1)B (1 + 6M_g \bar{d}) e^{-cM}.
    \end{align*}
    Setting the MHA scaling factor $M = \frac{1}{c} \log \Big( \frac{2(P-1)B(1 + 6M_g \bar{d})}{\varepsilon} \Big)$ bounds this implementation error by $\varepsilon/2$. The triangle inequality yields the target bound
    \begin{equation*}
        \sup_{\bx \in \mathcal{M}} |\hat{g}_T(\bx) - g(\bx)| \le \sup_{\bx \in \mathcal{M}} |\hat{g}_T(\bx) - \hat{g}(\bx)| + \sup_{\bx \in \mathcal{M}} |\hat{g}(\bx) - g(\bx)| \le \frac{\varepsilon}{2} + \frac{\varepsilon}{2} = \varepsilon.
    \end{equation*}

    \textbf{Step 3: Parameter Complexity.}
    With the embedding dimension $D = \bar{d}+4$, similarly as the Proof of Theorem \ref{thm_holder_approx}, the total number of dense architectural parameters $\cN_{total}$ is
    \begin{align*}
        \cN_{total} &= D(P+\bar{d}+1) + 8D(P+2) + (4\bar{d}^2+35\bar{d}+76) + 4D \\
        &\quad + (4\bar{d}^2+35\bar{d}+76) + DP \\
        &= 10P(\bar{d}+4) + 9\bar{d}^2 + 95\bar{d} + 236.
    \end{align*}
    Since $P \le C_P \varepsilon^{-d/\alpha}$ and $\varepsilon \le 1/e \implies \varepsilon^{-d/\alpha} > 1$, we obtain
    \begin{equation*}
        \cN_{total} \le \big[ 10(\bar{d}+4)C_P + 9\bar{d}^2 + 95\bar{d} + 236 \big] \varepsilon^{-d/\alpha} := C_N \varepsilon^{-d/\alpha}.
    \end{equation*}
    Notice that the parameter complexity scales strictly with $\varepsilon^{-d/\alpha}$, successfully bypassing the ambient dimension $\bar{d}$ in the exponent.

    \textbf{Step 4: Parameter Bounds.}
    To bound the global parameter magnitude that $M_{max} \le \max\big\{1, M, \frac{1}{1 - 2Pe^{-cM}}, (1 + (P-1)e^{-cM}) B_{\cA_1} \big\}$, we explicitly substitute $P \le C_P \varepsilon^{-d/\alpha}$ and $M_g \le C_M \varepsilon^{-2/\alpha} \log(2/\varepsilon)$ for $\varepsilon \le 1/e$. First, we bound $M$, since
    \begin{equation*}
        \frac{1}{c} = \frac{1}{2\sin^2(\pi/P)} \le \frac{P^2}{8} \le \frac{C_P^2}{8} \varepsilon^{-2d/\alpha},
    \end{equation*}
    and
    \begin{align*}
        \log \Big( \frac{2(P-1)B(1 + 6 M_g \bar{d})}{\varepsilon} \Big) &\le \log \left( \frac{2 C_P B \varepsilon^{-d/\alpha} \big(1 + 6 \bar{d} C_M \varepsilon^{-2/\alpha} \log(2/\varepsilon)\big)}{\varepsilon} \right) \\
        &\le \log \left( \frac{2 C_P B (1 + 6 \bar{d} C_M)}{\varepsilon^{(d+2+\alpha)/\alpha}} \log\frac{2}{\varepsilon} \right) \le C_{log} \log \frac{1}{\varepsilon},
    \end{align*}
    where $C_{log} = \big| \log(2 C_P B (1 + 6 \bar{d} C_M)) \big| + \frac{d+2+\alpha}{\alpha} + 2$. It follows that
    \begin{equation*}
        M = \frac{1}{c} \log \Big( \frac{2(P-1)B(1 + 6 M_g \bar{d})}{\varepsilon} \Big) \le C_{mag} \varepsilon^{-2d/\alpha} \log \frac{1}{\varepsilon}, \quad \text{where } C_{mag} = \frac{C_P^2 C_{log}}{8}.
    \end{equation*}

    Next, by definition $e^{-cM} = \frac{\varepsilon}{2(P-1)B(1 + 6 M_g \bar{d})}$. Since $1 + 6M_g d > 2 \geq \frac{P}{P-1}$, we bound the denominator term purely in $\varepsilon$:
    \begin{equation*}
        2P e^{-cM} = \frac{2P}{P-1} \frac{\varepsilon}{2B(1 + 6 M_g \bar{d})} \le \frac{\varepsilon}{B} \implies \frac{1}{1 - 2Pe^{-cM}} \le \frac{1}{1 - \frac{1}{eB}}.
    \end{equation*}

    For the feature magnitude, substituting $M_g$ yields $B_{\cA_1} \le \max\{3M_g \bar{d}, B, 1\} \le C_B \varepsilon^{-2/\alpha} \log(2/\varepsilon)$ with $C_B = \max\{3\bar{d} C_M, B, 1\}$. Similarly bounding $(P-1)e^{-cM} \le \frac{\varepsilon}{2B}$, it follows that
    \begin{equation*}
        (1 + (P-1)e^{-cM}) B_{\cA_1} \le \Big(1 + \frac{\varepsilon}{2B}\Big) C_B \varepsilon^{-2/\alpha} \log \frac{2}{\varepsilon} \le 2 \Big(1 + \frac{1}{2eB}\Big) C_B \varepsilon^{-2d/\alpha} \log \frac{1}{\varepsilon}.
    \end{equation*}

    Consequently, the global parameter magnitude satisfies
    \begin{align} \label{tildeCmag2}
        \nonumber M_{max} &\le \max \bigg\{ \frac{1}{1 - \frac{1}{eB}}, C_{mag} \varepsilon^{-2d/\alpha} \log \frac{1}{\varepsilon}, 2 \Big(1 + \frac{1}{2eB}\Big) C_B \varepsilon^{-2d/\alpha} \log \frac{1}{\varepsilon} \bigg\} \nonumber \\
        &\le \max\bigg\{ \frac{1}{1 - \frac{1}{eB}}, C_{mag}, 2 \Big(1 + \frac{1}{2eB}\Big) C_B \bigg\} \varepsilon^{-2d/\alpha} \log \frac{1}{\varepsilon} := \widetilde{C}_{mag} \varepsilon^{-2d/\alpha} \log \frac{1}{\varepsilon},
    \end{align}
    This completes the proof.
\end{proof}

\section{Proof of generalization results in Theorems \ref{thm_gen_cube} and \ref{thm_gen_manifold}
} \label{sec:appendixC}
This final section details the proofs for the statistical generalization bounds of the empirical risk minimizer. We calculate the capacity of the hypothesis space and leverage it to derive near minimax-optimal learning rates.

\subsection{Bounding the covering number} \label{sec:covering}
We first rigorously bound the covering number of our constructed Transformer hypothesis space in the following lemma. This provides the essential capacity measure necessary for controlling the statistical estimation error.

\begin{lemma}[Covering Number of the Transformer Class] \label{lemma_covering_number}
    Let $\mathcal{T}$ be the class of Transformer networks defined in Theorem \ref{thm_holder_approx} (Theorem \ref{thm_manifold_approx}) with $L=2$, sequence length $P$, embedding dimension $D=d+4$ ($D=\bar{d}+4$), and parameter magnitude bounded by $M_{max} \ge 1$. For any $\eta \in (0, 1]$, the covering number of $\mathcal{T}$ satisfies
    \begin{equation*}
        \log \mathcal{N}(\eta, \mathcal{T}, \|\cdot\|_\infty) \le \mathcal{N}_{total} \log \left( \frac{1224256 P^4 D^{22} M_{max}^{26}}{\eta} \right),
    \end{equation*}
    where $\mathcal{N}_{total}$ is the total number of parameters.
\end{lemma}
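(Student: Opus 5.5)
The covering bound will come from a parameter-perturbation (Lipschitz) argument followed by discretization of the parameter box. Fix two parameter vectors $\theta,\theta'$ in $[-M_{max},M_{max}]^{\cN_{total}}$ with $\|\theta-\theta'\|_\infty\le\delta$. The aim is to show, uniformly over inputs $\bx\in[0,1]^d$ (or $\cM\subseteq[0,1]^{\bar d}$),
\[
|f_\theta(\bx)-f_{\theta'}(\bx)|\le \Lambda\,\delta,\qquad \Lambda\le C_0 P^4 D^{22} M_{max}^{26},
\]
for an explicit absolute constant $C_0$. A uniform grid of mesh $\delta=\eta/\Lambda$ on each coordinate then gives an $\eta$-cover of $\mathcal{T}$ in $\|\cdot\|_\infty$ with at most $(2M_{max}/\delta+1)^{\cN_{total}}\le(3M_{max}\Lambda/\eta)^{\cN_{total}}$ elements. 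Taking logarithms and absorbing the extra factor $M_{max}\ge1$ and the constants yields $\cN_{total}\log(1224256\,P^4D^{22}M_{max}^{26}/\eta)$.

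The Lipschitz constant will be obtained by propagating two quantities through the network layer by layer. The first is a sup-norm bound $\|\bZ_\ell\|_{\max}$ on the hidden states. The second is the deviation $\Delta_\ell=\|\bZ_\ell^\theta-\bZ_\ell^{\theta'}\|_{\max}$.

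\textbf{Pre-processing.} Since $\|\bx\|_\infty\le1$, we have $\|\bZ_0\|_{\max}\le (d+1)M_{max}+1\lesssim DM_{max}$ and $\Delta_0\lesssim D\delta$.

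\textbf{Attention layer.} For each head, $\bV\bZ\bA$ is handled by splitting the difference into three pieces: perturbing $\bV$, perturbing $\bZ$, and perturbing $\bA$. Because each column of $\bA$ lies in the simplex, $\|\bV\bZ\bA\|_{\max}\le D M_{max}\|\bZ\|_{\max}$. The change in $\bA$ is controlled by Lemma~\ref{Softmaxlipchitz}: the softmax column difference in $\ell_1$ is at most twice the sup-change of the scores $\bZ^\top\bK^\top\bQ\bZ$. That score change is bounded by $O(d_k D^2 M_{max}\|\bZ\|_{\max}(\|\bZ\|_{\max}\delta+M_{max}\Delta))$. Summing over the $H^\ell$ heads through $\bW^O$ contributes a factor $H\,d_v$; we use $H^1=P+2\le 3P$ and $d_k,d_v\le2$.

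\textbf{FFN layer.} The ReLU is $1$-Lipschitz, so $\Delta$ is multiplied by about $d_{\rm ff}DM_{max}^2\lesssim D^2M_{max}^2$, plus additive $\delta$ terms of order $D^2M_{max}\|\bZ\|_{\max}$. The size bound $\|\bZ\|_{\max}$ grows by similar factors.

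\textbf{Readout.} This sums $DP$ entries, giving one more factor $DPM_{max}$.

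Composing the two blocks gives $\Lambda$ as a product of polynomial factors in $P,D,M_{max}$. The main obstacle is the bookkeeping needed to reach exactly the stated exponents ($P^4$, $D^{22}$, $M_{max}^{26}$). The quadratic dependence of the attention scores on $\bZ$ is the step that inflates the powers, because $\|\bZ_1\|_{\max}$ already carries several powers of $D$ and $M_{max}$ before being squared inside the second softmax. To keep this tractable I will prove a single generic per-layer lemma of the form
\[
(\|\bZ_{\rm out}\|_{\max},\Delta_{\rm out})\le \big(a\|\bZ_{\rm in}\|_{\max}^{k},\; b(\|\bZ_{\rm in}\|_{\max})\,(\Delta_{\rm in}+\delta)\big)
\]
with crude polynomial $a,b$. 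I will then multiply the bounds out, being deliberately generous (e.g.\ replacing $d_{\rm ff}\le 2D$, $H\le3P$, $D\ge5$) so that the final constant $1224256$ absorbs all numerical factors. If my crude tracking yields larger exponents than stated, I will revisit the attention step and exploit that only the $1$-st column of $\bZ_2$ is read out; that trick would still give a bound of the same logarithmic form.
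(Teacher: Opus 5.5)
Your route is the paper's route. You prove a parameter-Lipschitz bound by propagating $\|\bZ_\ell\|_{\max}$ and $\|\bZ_\ell-\tilde{\bZ}_\ell\|_{\max}$ through the pre-processing, both encoder blocks and the readout. Each head perturbation splits into a parameter part and an input part, Lemma~\ref{Softmaxlipchitz} handles the softmax, and ReLU is $1$-Lipschitz. You then grid the box $[-M_{max},M_{max}]^{\mathcal{N}_{total}}$. The paper does this bookkeeping with $H^1d_v^1=2(P+2)\le 6P$, $d_{\text{ff}}\le 2D$ and $r_0\le DM_{max}$. It gets $r_1\le 15PD^4M_{max}^5$ and $\|\bZ_1-\tilde{\bZ}_1\|_{\max}\le 257PD^8M_{max}^8\delta$, and ends with the Lipschitz constant $L_{param}=612128\,P^4D^{22}M_{max}^{25}$. (The paper treats $\bP$ as trainable, so $\|\bZ_0\|_{\max}\le(d+2)M_{max}$ rather than your $(d+1)M_{max}+1$. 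This is still $\le DM_{max}$, so nothing changes.)

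Your final assembly, as written, does not give the stated bound.

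\textbf{Exponent of $M_{max}$.} Your target $\Lambda\le C_0P^4D^{22}M_{max}^{26}$ has one power too many. The discretization multiplies $\Lambda$ by a further factor of order $M_{max}$. Because $M_{max}\ge 1$, that factor cannot be ``absorbed''; you would end with $M_{max}^{27}$. You need $\Lambda=O(P^4D^{22}M_{max}^{25})$, which is what the per-layer estimates you describe actually yield.

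\textbf{The constant.} The constant $1224256$ is exactly $2\times 612128$. It comes from covering the box by $\ell_\infty$-balls of radius $\delta$ (spacing $2\delta$), which needs $\lceil M_{max}/\delta\rceil^{\mathcal{N}_{total}}\le (2M_{max}/\delta)^{\mathcal{N}_{total}}$ centers. Your mesh-$\delta$ grid with $2M_{max}/\delta+1\le 3M_{max}/\delta$ would require $C_0\le 408085$. The crude bounds you plan to use ($H\le 3P$, $d_v\le 2$, $d_{\text{ff}}\le 2D$, the same as the paper's) only give $612128$.

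\textbf{Your fallback.} Exploiting that only the first column of $\bZ_2$ is read out is not legitimate. The readout vector $\bc_3\in\RR^{DP}$ is a free parameter of the class $\mathcal{T}$, and its $DP$ entries are counted in $\mathcal{N}_{total}$. The covering must therefore control every column of $\bZ_2$ through $\|\bc_3\|_1\le PDM_{max}$. This readout sum is where the fourth power of $P$ comes from; the other three come from $r_1^2\,\|\bZ_1-\tilde{\bZ}_1\|_{\max}$ in the second attention layer.
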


\begin{proof}[Proof of Lemma \ref{lemma_covering_number}]
    In the proof, we consider the class of Transformer networks defined in Theorem \ref{thm_manifold_approx}, the one defined in Theorem \ref{thm_holder_approx} follows the same result. Let $\hat{g}_T(\bx; \boldsymbol{\theta})$ and $\tilde{\hat{g}}_T(\bx; \tilde{\boldsymbol{\theta}})$ be two Transformer networks in $\mathcal{T}$ parameterized by $\boldsymbol{\theta}$ and $\tilde{\boldsymbol{\theta}}$ respectively, where $\|\boldsymbol{\theta}\|_\infty, \|\tilde{\boldsymbol{\theta}}\|_\infty \le M_{max}$. Assume the parameter difference is bounded by $\|\boldsymbol{\theta} - \tilde{\boldsymbol{\theta}}\|_\infty \le \delta$. We aim to bound the output difference $\|\hat{g}_T - \tilde{\hat{g}}_T\|_\infty$ with respect to $\delta$.

    \textbf{Step 1: Pre-processing and First Encoder Block.}
    For any input $\bx \in \cM \subseteq [0,1]^{\bar{d}}$, the pre-processing layer maps $\bx$ to the initial embedding matrix $\bZ_0 \in \mathbb{R}^{D \times P}$ via
    \begin{equation*}
        \bZ_0 = \cP(\bx) = (\bW_E \bx + \bb_E) \be_1^\top + \bP,
    \end{equation*}
    where $\bW_E \in \mathbb{R}^{D \times \bar{d}}$, $\bb_E \in \mathbb{R}^D$, and $\bP \in \mathbb{R}^{D \times P}$ are parameterized by $\boldsymbol{\theta}$. Since $D=\bar d+4$, $\|\bx\|_\infty \le 1$, and all trainable parameters are bounded by $M_{max}$, we calculate the bound for the magnitude $r_0 := \|\bZ_0\|_{max}$ and the difference $\|\bZ_0 - \tilde{\bZ}_0\|_{max}$ as
    \begin{align*}
        & r_0 = \|(\bW_E \bx + \bb_E) \be_1^\top + \bP\|_{max} \le \bar{d} \|\bW_E\|_{max} \|\bx\|_\infty + \|\bb_E\|_{max} + \|\bP\|_{max} \le D M_{max}, \\
        & \|\bZ_0 - \tilde{\bZ}_0\|_{max} = \|((\bW_E - \tilde{\bW}_E) \bx + (\bb_E - \tilde{\bb}_E)) \be_1^\top + (\bP - \tilde{\bP})\|_{max} \le \bar{d} \delta + \delta + \delta \le D \delta.
    \end{align*}

    Let $\mathcal{A}_1$ and $\mathcal{F}_1$ denote the MHA and FFN mappings in the first block. We first consider the concatenated output of the $H_1$ attention heads before the final projection, denoted as $\mathcal{H}_1(\bZ_0) \in \RR^{(H_1 d_v) \times P}$. For a single attention head $h \in [H_1]$, let $\bE_1^h = \bZ_0^\top (\bK_1^h)^\top \bQ_1^h \bZ_0 \in \RR^{P \times P}$ be the pre-softmax score matrix, $\bA_1^h = \softmax(\bE_1^h) \in \RR^{P \times P}$ be the attention probability matrix, and ${\head}_1^h = \bV_1^h \bZ_0 \bA_1^h \in \RR^{d_v \times P}$ be the head output. By Theorem \ref{thm_manifold_approx}, the inner dimensions are $d_k = d_v = 2$. We decouple the difference into a parameter error and an input error
    \begin{equation*}
        \|\mathcal{H}_1(\bZ_0) - \tilde{\mathcal{H}}_1(\tilde{\bZ}_0)\|_{max} \le \underbrace{\|\mathcal{H}_1(\bZ_0) - \tilde{\mathcal{H}}_1(\bZ_0)\|_{max}}_{\text{(I) Parameter Error}} + \underbrace{\|\tilde{\mathcal{H}}_1(\bZ_0) - \tilde{\mathcal{H}}_1(\tilde{\bZ}_0)\|_{max}}_{\text{(II) Input Error}}.
    \end{equation*}
    
    \textit{(I) Parameter Error:} Fixing the input $\bZ_0$, for the pre-softmax matrix $\bE_1^h$, we have
    \begin{align*}
        \|(\bK_1^h)^\top \bQ_1^h - (\tilde{\bK}_1^h)^\top \tilde{\bQ}_1^h\|_{max} &\le d_k \|\bK_1^h\|_{max} \|\bQ_1^h - \tilde{\bQ}_1^h\|_{max} + d_k \|\bK_1^h - \tilde{\bK}_1^h\|_{max} \|\tilde{\bQ}_1^h\|_{max} \\
        & \le 2 d_k M_{max} \delta \le 4 M_{max} \delta, \\
        \|\bE_1^h - \tilde{\bE}_1^h\|_{max} &= \|\bZ_0^\top ((\bK_1^h)^\top \bQ_1^h - (\tilde{\bK}_1^h)^\top \tilde{\bQ}_1^h) \bZ_0\|_{max} \\
        & \le D^2 \|\bZ_0\|_{max}^2 \|(\bK_1^h)^\top \bQ_1^h - (\tilde{\bK}_1^h)^\top \tilde{\bQ}_1^h\|_{max} \le 4 D^2 r_0^2 M_{max} \delta.
    \end{align*}
    By Lemma \ref{Softmaxlipchitz}, for each column $j \in [P]$,
    \begin{equation*}
        \|(\bA_1^h)_{:, j} - (\tilde{\bA}_1^h)_{:, j}\|_1 \le 2 \|(\bE_1^h)_{:, j} - (\tilde{\bE}_1^h)_{:, j}\|_\infty \le 2 \|\bE_1^h - \tilde{\bE}_1^h\|_{max} \le 8 D^2 r_0^2 M_{max} \delta.
    \end{equation*}
    Since $\|(\bA_1^h)_{:, j}\|_1 = 1$, $\|\bZ_0 \bA_1^h\|_{max} \le \|\bZ_0\|_{max} \le r_0$, it follows that
    \begin{align*}
        &\|{\head}_1^h - {\tilde{\head}}_1^h\|_{max} \\
        &= \|(\bV_1^h - \tilde{\bV}_1^h) \bZ_0 \bA_1^h + \tilde{\bV}_1^h \bZ_0 (\bA_1^h - \tilde{\bA}_1^h)\|_{max} \\
        &\le D \|\bV_1^h - \tilde{\bV}_1^h\|_{max} \|\bZ_0 \bA_1^h\|_{max} + \|\tilde{\bV}_1^h \bZ_0\|_{max} \max_{j \in [P]} \|(\bA_1^h)_{:, j} - (\tilde{\bA}_1^h)_{:, j}\|_1 \\
        &\le D \delta r_0 + (D M_{max} r_0) (8 D^2 r_0^2 M_{max} \delta) \\
        &\le 9 D^3 M_{max}^2 r_0^3 \delta.
    \end{align*}
    Since $\mathcal{H}_1(\bZ_0)$ concatenates all $H_1$ heads, we have
    $$(I) \le 9 D^3 M_{max}^2 r_0^3 \delta.$$
    
    \textit{(II) Input Error:} Fixing parameters $\tilde{\boldsymbol{\theta}}$, We have
    \begin{align*}
        &\|\bZ_0^\top (\tilde{\bK}_1^h)^\top \tilde{\bQ}_1^h \bZ_0 - \tilde{\bZ}_0^\top (\tilde{\bK}_1^h)^\top \tilde{\bQ}_1^h \tilde{\bZ}_0\|_{max} \\
        &\le \|(\bZ_0 - \tilde{\bZ}_0)^\top (\tilde{\bK}_1^h)^\top \tilde{\bQ}_1^h \bZ_0\|_{max} + \|\tilde{\bZ}_0^\top (\tilde{\bK}_1^h)^\top \tilde{\bQ}_1^h (\bZ_0 - \tilde{\bZ}_0)\|_{max} \\
        &\le 2 D \|\bZ_0 - \tilde{\bZ}_0\|_{max} \|(\tilde{\bK}_1^h)^\top \tilde{\bQ}_1^h \bZ_0\|_{max} \\
        &\le 2 D \|\bZ_0 - \tilde{\bZ}_0\|_{max} (D d_k M_{max}^2 r_0) \le 4 D^2 M_{max}^2 r_0 \|\bZ_0 - \tilde{\bZ}_0\|_{max}.
    \end{align*}
    Notice that $\max_j \|(\bA_1^h)_{:, j} - (\tilde{\bA}_1^h)_{:, j}\|_1 \le 8 D^2 M_{max}^2 r_0 \|\bZ_0 - \tilde{\bZ}_0\|_{max}$. It follows that
    \begin{align*}
        &\|\tilde{\bV}_1^h \bZ_0 \bA_1^h - \tilde{\bV}_1^h \tilde{\bZ}_0 \tilde{\bA}_1^h\|_{max} \\
        &= \|\tilde{\bV}_1^h (\bZ_0 - \tilde{\bZ}_0) \bA_1^h + \tilde{\bV}_1^h \tilde{\bZ}_0 (\bA_1^h - \tilde{\bA}_1^h)\|_{max} \\
        &\le D \|\tilde{\bV}_1^h\|_{max} \|\bZ_0 - \tilde{\bZ}_0\|_{max} + \|\tilde{\bV}_1^h \tilde{\bZ}_0\|_{max} \max_{j} \|(\bA_1^h)_{:, j} - (\tilde{\bA}_1^h)_{:, j}\|_1 \\
        &\le D M_{max} \|\bZ_0 - \tilde{\bZ}_0\|_{max} + (D M_{max} r_0) (8 D^2 M_{max}^2 r_0 \|\bZ_0 - \tilde{\bZ}_0\|_{max}) \\
        &\le 9 D^3 M_{max}^3 r_0^2 \|\bZ_0 - \tilde{\bZ}_0\|_{max}.
    \end{align*}
    Thus, we have
    $$(II) \le 9 D^3 M_{max}^3 r_0^2 \|\bZ_0 - \tilde{\bZ}_0\|_{max}.$$
    
    Combining (I) and (II) with $r_0 \le D M_{max}$ and $\|\bZ_0 - \tilde{\bZ}_0\|_{max} \le D \delta$, we obtain
    \begin{align*}
        \|\mathcal{H}_1(\bZ_0) - \tilde{\mathcal{H}}_1(\tilde{\bZ}_0)\|_{max} &\le 9 M_{max}^2 D^3 r_0^3 \delta + 9 M_{max}^3 D^3 r_0^2 \|\bZ_0 - \tilde{\bZ}_0\|_{max} \\
        &\le 9 M_{max}^2 D^3 (D M_{max})^3 \delta + 9 M_{max}^3 D^3 (D M_{max})^2 (D \delta) \\
        &\le 18 D^6 M_{max}^5 \delta.
    \end{align*}
    Also, $\|\mathcal{H}_1(\bZ_0)\|_{max} \le D M_{max} r_0 \le D^2 M_{max}^2$. 
    
    For $\mathcal{A}_1(\bZ_0) = \bW^O_1 \mathcal{H}_1(\bZ_0)$, since $H_1 d_v = (P+2) \times 2 \le 6P$, we have
    \begin{align*}
        \|\mathcal{A}_1(\bZ_0) - \tilde{\mathcal{A}}_1(\tilde{\bZ}_0)\|_{max} &\le \|\bW^O_1 (\mathcal{H}_1(\bZ_0) - \tilde{\mathcal{H}}_1(\tilde{\bZ}_0))\|_{max} + \|(\bW^O_1 - \tilde{\bW}^O_1) \tilde{\mathcal{H}}_1(\tilde{\bZ}_0)\|_{max} \\
        &\le (6P) M_{max} \|\mathcal{H}_1(\bZ_0) - \tilde{\mathcal{H}}_1(\tilde{\bZ}_0)\|_{max} + (6P) \delta \|\tilde{\mathcal{H}}_1(\tilde{\bZ}_0)\|_{max} \\
        &\le 6 P M_{max} (18 D^6 M_{max}^5 \delta) + 6 P \delta (D^2 M_{max}^2) \\
        &\le 114 P D^6 M_{max}^6 \delta,
    \end{align*}
    and $\|\mathcal{A}_1(\bZ_0)\|_{max} \le (6P) M_{max} \|\mathcal{H}_1(\bZ_0)\|_{max} \le 6P D^2 M_{max}^3$. 
    
    For the FFN layer, let $\bY_1 = \sigma(\bW_1^1 \mathcal{A}_1(\bZ_0) + \bb_1^1)$ and $\tilde{\bY}_1 = \sigma(\tilde{\bW}_1^1 \tilde{\mathcal{A}}_1(\tilde{\bZ}_0) + \tilde{\bb}_1^1)$. Since $\sigma$ is $1$-Lipschitz, we have
    \begin{align*}
        \|\bY_1 - \tilde{\bY}_1\|_{max} &\le \|(\bW_1^1 - \tilde{\bW}_1^1) \mathcal{A}_1(\bZ_0) + \tilde{\bW}_1^1 (\mathcal{A}_1(\bZ_0) - \tilde{\mathcal{A}}_1(\tilde{\bZ}_0)) + (\bb_1^1 - \tilde{\bb}_1^1)\|_{max} \\
        &\le D \delta \|\mathcal{A}_1(\bZ_0)\|_{max} + D M_{max} \|\mathcal{A}_1(\bZ_0) - \tilde{\mathcal{A}}_1(\tilde{\bZ}_0)\|_{max} + \delta \\
        &\le D \delta (6P D^2 M_{max}^3) + D M_{max} (114 P D^6 M_{max}^6 \delta) + \delta \\
        &\le 121 P D^7 M_{max}^7 \delta.
    \end{align*}
    The magnitude is bounded by
    \begin{align*}
        \|\bY_1\|_{max} \le D M_{max} \|\mathcal{A}_1(\bZ_0)\|_{max} + M_{max} \le D M_{max} (6P D^2 M_{max}^3) + M_{max} \le 7 P D^3 M_{max}^4.
    \end{align*}
    Let $\bZ_1 = \mathcal{F}_1(\mathcal{A}_1(\bZ_0))$ and $\tilde{\bZ}_1 = \tilde{\mathcal{F}}_1(\tilde{\mathcal{A}}_1(\tilde{\bZ}_0))$. Since $d_{\text{ff}}^1 \le 2D$, it follows that the output difference of the first encoder block is bounded by
    \begin{align*}
        \|\bZ_1 - \tilde{\bZ}_1\|_{max} &= \|(\bW_1^2 \bY_1 + \bb_1^2) - (\tilde{\bW}_1^2 \tilde{\bY}_1 + \tilde{\bb}_1^2)\|_{max} \\
        &\le \|(\bW_1^2 - \tilde{\bW}_1^2) \bY_1 + \tilde{\bW}_1^2 (\bY_1 - \tilde{\bY}_1) + (\bb_1^2 - \tilde{\bb}_1^2)\|_{max} \\
        &\le d_{\text{ff}}^1 \delta (7 P D^3 M_{max}^4) + d_{\text{ff}}^1 M_{max} (121 P D^7 M_{max}^7 \delta) + \delta \\
        &\le 257 P D^8 M_{max}^8 \delta.
    \end{align*}
    Its magnitude is bounded by
    \begin{align*}
        r_1 := \|\bZ_1\|_{max} &\le d_{\text{ff}}^1 \|\bW_1^2\|_{max} \|\bY\|_{max} + \|\bb_1^2\|_{max} \\
        &\le 2D M_{max} (7 P D^3 M_{max}^4) + M_{max} \\
        &\le 15 P D^4 M_{max}^5.
    \end{align*}

    \textbf{Step 2: Second Encoder Block.}
    We recursively apply the same logic to the second block, utilizing $\bZ_1$ as the input bounded by $r_1$. For the second block, since $H_2=1$ and the inner dimensions are $d_k^2=d_v^2=1$, the MHA bound simplifies. Adapting the derivation from Step 1 by substituting $r_0$ with $r_1$, $\bZ_0$ with $\bZ_1$, and updating the constants for $d_k=1$, we have
    \begin{align*}
        &\|\mathcal{H}_2(\bZ_1) - \tilde{\mathcal{H}}_2(\tilde{\bZ}_1)\|_{max} \\
        &\le 5 M_{max}^2 D^3 r_1^3 \delta + 5 M_{max}^3 D^3 r_1^2 \|\bZ_1 - \tilde{\bZ}_1\|_{max} \\
        &\le 5 M_{max}^2 D^3 (15 P D^4 M_{max}^5)^3 \delta + 5 M_{max}^3 D^3 (15 P D^4 M_{max}^5)^2 (257 P D^8 M_{max}^8 \delta) \\
        &\le 16875 P^3 D^{15} M_{max}^{17} \delta + 289125 P^3 D^{19} M_{max}^{21} \delta \\
        &\le 306000 P^3 D^{19} M_{max}^{21} \delta.
    \end{align*}
    Its magnitude is bounded by $\|\mathcal{H}_2(\bZ_1)\|_{max} \le D M_{max} r_1 \le 15 P D^5 M_{max}^6$.
    
    Applying the output projection $\bW^O_2 \in \RR^{D \times 1}$ (since $H_2 d_v^2 = 1$), we have
    \begin{align*}
        \|\mathcal{A}_2(\bZ_1) - \tilde{\mathcal{A}}_2(\tilde{\bZ}_1)\|_{max} &\le 1 \cdot M_{max} \|\mathcal{H}_2(\bZ_1) - \tilde{\mathcal{H}}_2(\tilde{\bZ}_1)\|_{max} + 1 \cdot \delta \|\tilde{\mathcal{H}}_2(\tilde{\bZ}_1)\|_{max} \\
        &\le M_{max} (306000 P^3 D^{19} M_{max}^{21} \delta) + \delta (15 P D^5 M_{max}^6) \\
        &\le 306015 P^3 D^{19} M_{max}^{22} \delta.
    \end{align*}
    The magnitude is $\|\mathcal{A}_2(\bZ_1)\|_{max} \le M_{max} \|\mathcal{H}_2(\bZ_1)\|_{max} \le 15 P D^5 M_{max}^7$.
    
    For the FFN layer, we define  $\bY_2 = \sigma(\bW_2^1 \mathcal{A}_2(\bZ_1) + \bb_2^1)$ and $\tilde{\bY}_2 = \sigma(\tilde{\bW}_2^1 \tilde{\mathcal{A}}_2(\tilde{\bZ}_1) + \tilde{\bb}_2^1)$, we have
    \begin{align*}
        \|\bY_2\|_{max} &\le D M_{max} \|\mathcal{A}_2(\bZ_1)\|_{max} + M_{max} \le 16 P D^6 M_{max}^8, \\
        \|\bY_2 - \tilde{\bY}_2\|_{max} &\le D \delta \|\mathcal{A}_2(\bZ_1)\|_{max} + D M_{max} \|\mathcal{A}_2(\bZ_1) - \tilde{\mathcal{A}}_2(\tilde{\bZ}_1)\|_{max} + \delta \\
        &\le 306031 P^3 D^{20} M_{max}^{23} \delta.
    \end{align*}
    Let $\bZ_2 = \mathcal{F}_2(\mathcal{A}_2(\bZ_1))$ and $\tilde{\bZ}_2 = \tilde{\mathcal{F}}_2(\tilde{\mathcal{A}}_2(\tilde{\bZ}_1))$. Since $d_{\text{ff}}^2 \le 2D$, it follows that
    \begin{align*}
        \|\bZ_2 - \tilde{\bZ}_2\|_{max} &\le 2D \delta \|\bY_2\|_{max} + 2D M_{max} \|\bY_2 - \tilde{\bY}_2\|_{max} + \delta \\
        &\le 2D \delta (16 P D^6 M_{max}^8) + 2D M_{max} (306031 P^3 D^{20} M_{max}^{23} \delta) + \delta \\
        &\le 612095 P^3 D^{21} M_{max}^{24} \delta.
    \end{align*}
    The magnitude of the final hidden state is bounded by 
    \begin{equation*}
        r_2 := \|\bZ_2\|_{max} \le 2D M_{max} \|\bY_2\|_{max} + M_{max} \le 33 P D^7 M_{max}^9.
    \end{equation*}

    \textbf{Step 3: Readout Layer and Covering Number.}
    The final output is computed by the linear readout vector $c \in \mathbb{R}^{PD}$, where $\|c\|_\infty \le M_{max}$. The network approximation difference is bounded by
    \begin{align*}
        \|\hat{g}_T(\bx; \boldsymbol{\theta}) - \tilde{\hat{g}}_T(\bx; \tilde{\boldsymbol{\theta}})\|_\infty &\le \|c\|_1 \|\bZ_2 - \tilde{\bZ}_2\|_{max} + \|c - \tilde{c}\|_1 \|\tilde{\bZ}_2\|_{max} \\
        &\le (P D M_{max}) (612095 P^3 D^{21} M_{max}^{24} \delta) + (P D \delta) (33 P D^7 M_{max}^9) \\
        &\le 612128 P^4 D^{22} M_{max}^{25} \delta.
    \end{align*}
    This establishes the global Lipschitz constant $L_{param} = 612128 P^4 D^{22} M_{max}^{25}$ with respect to the parameters. 

    To obtain an $\eta$-covering of $\mathcal{T}$ in the function space with respect to the $\|\cdot\|_\infty$ norm, it suffices to construct a $\delta$-covering of the parameter space $\Theta = [-M_{max}, M_{max}]^{\mathcal{N}_{total}}$, where $\delta = \eta / L_{param}$. The covering number of the hypercube $\Theta$ is bounded by $(2 M_{max} / \delta)^{\mathcal{N}_{total}}$. Taking the logarithm yields
    \begin{align*}
        \log \mathcal{N}(\eta, \mathcal{T}, \|\cdot\|_\infty) &\le \mathcal{N}_{total} \log \left( \frac{2 M_{max} L_{param}}{\eta} \right) = \mathcal{N}_{total} \log \left( \frac{1224256 P^4 D^{22} M_{max}^{26}}{\eta} \right).
    \end{align*}
    This completes the proof.
\end{proof}

\subsection{Deriving learning rates of the ERM algorithm} \label{sec:learningrate}

To establish the generalization bound, we first introduce the following oracle inequality for the ERM algorithm defined over a compact subset of continuous functions.

\begin{lemma}[\cite{chui2019deep}, \cite{mao2021theory}] \label{lemma_oracle}
    Suppose there exist constants $n', \nu > 0$ such that the covering number of the hypothesis space $\mathcal{T}$ satisfies
    \begin{equation*}
        \log \mathcal{N}(\eta, \mathcal{T}, \|\cdot\|_\infty) \le n' \log \frac{\nu}{\eta}, \quad \forall \eta > 0.
    \end{equation*}
    Then for any $h^* \in \mathcal{T}$, and any $\eta > 0$, we have
    \begin{align*}
        \text{Prob}\Big\{ \|\pi_B f_{\mathcal{S}} - f_\rho\|_\rho^2 > \eta + 2\|h^* - f_\rho\|_\rho^2 \Big\} &\le \exp\left\{n' \log \frac{16\nu B}{\eta} - \frac{3n\eta}{512B^2}\right\} \\
        &\quad + \exp\left\{\frac{-3n\eta^2}{16(3B+\|h^*\|_\infty)^2(6\|h^* - f_\rho\|_\rho^2 + \eta)}\right\}.
    \end{align*}
\end{lemma}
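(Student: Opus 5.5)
The plan is the standard error decomposition for truncated ERM followed by two concentration estimates. Write $\mathcal{E}_{\mathcal{S}}(f)=\frac1n\sum_{i=1}^n (f(\bx_i)-y_i)^2$, and recall that $\mathcal{E}(f)-\mathcal{E}(f_\rho)=\|f-f_\rho\|_\rho^2$. Since $|y_i|\le B$, truncation can only decrease each squared residual, so $\mathcal{E}_{\mathcal{S}}(\pi_B f_{\mathcal{S}})\le \mathcal{E}_{\mathcal{S}}(f_{\mathcal{S}})$. Minimality of $f_{\mathcal{S}}$ over $\mathcal{T}$ then gives $\mathcal{E}_{\mathcal{S}}(f_{\mathcal{S}})\le \mathcal{E}_{\mathcal{S}}(h^*)$. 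Adding and subtracting empirical quantities yields
\begin{equation*}
\|\pi_B f_{\mathcal{S}}-f_\rho\|_\rho^2 \le S_1 + S_2 + \|h^*-f_\rho\|_\rho^2,
\end{equation*}
where
\begin{align*}
S_1&=\big[\mathcal{E}(\pi_B f_{\mathcal{S}})-\mathcal{E}(f_\rho)\big]-\big[\mathcal{E}_{\mathcal{S}}(\pi_B f_{\mathcal{S}})-\mathcal{E}_{\mathcal{S}}(f_\rho)\big],\\
S_2&=\big[\mathcal{E}_{\mathcal{S}}(h^*)-\mathcal{E}_{\mathcal{S}}(f_\rho)\big]-\big[\mathcal{E}(h^*)-\mathcal{E}(f_\rho)\big].
\end{align*}
It then suffices to show two things. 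First, $S_2\le \|h^*-f_\rho\|_\rho^2+\eta/2$ outside an event whose probability is the second exponential. Second, $S_1\le \frac12\|\pi_Bf_{\mathcal{S}}-f_\rho\|_\rho^2+\frac{\eta}{2}$ outside an event whose probability is the first exponential. Rearranging then gives $\|\pi_Bf_{\mathcal{S}}-f_\rho\|_\rho^2\le \eta+2\|h^*-f_\rho\|_\rho^2$, up to the factor bookkeeping on $\eta$.

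For $S_2$, $h^*$ is fixed, so only a single-function Bernstein inequality is needed. Consider $\xi=(h^*(\bx)-y)^2-(f_\rho(\bx)-y)^2=(h^*(\bx)-f_\rho(\bx))(h^*(\bx)+f_\rho(\bx)-2y)$. It satisfies $|\xi-\mathbb{E}\xi|\lesssim(3B+\|h^*\|_\infty)^2$ and $\mathrm{Var}(\xi)\le (3B+\|h^*\|_\infty)^2\|h^*-f_\rho\|_\rho^2$. The one-sided Bernstein inequality at deviation level $t=\|h^*-f_\rho\|_\rho^2+\eta/2$ produces an exponent of the form $-\frac{3nt^2}{2(\sigma^2+Ct/3)}$. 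Keeping only $\eta$ in the numerator and bounding the denominator by $(3B+\|h^*\|_\infty)^2(6\|h^*-f_\rho\|_\rho^2+\eta)$ gives the second exponential.

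For $S_1$, $\pi_Bf_{\mathcal{S}}$ is data dependent, so I would prove a uniform ratio inequality over $\pi_B\mathcal{T}=\{\pi_B f: f\in\mathcal{T}\}$. For each $g\in\pi_B\mathcal{T}$, the variable $\xi_g=(g(\bx)-y)^2-(f_\rho(\bx)-y)^2$ has $|\xi_g|\le 8B^2$ and $\mathrm{Var}(\xi_g)\le 16B^2\,\mathbb{E}\xi_g$. Bernstein's inequality then yields $\Pr\{(\mathbb{E}\xi_g-\frac1n\sum\xi_g(z_i))/\sqrt{\mathbb{E}\xi_g+\eta}>\sqrt{\eta}/4\}\le \exp(-c\,n\eta/B^2)$. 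Next take an $\eta/(16B)$-net of $\pi_B\mathcal{T}$ in $\|\cdot\|_\infty$. Because $\pi_B$ is $1$-Lipschitz, its size is at most $\mathcal{N}(\eta/(16B),\mathcal{T},\|\cdot\|_\infty)\le \exp(n'\log\frac{16\nu B}{\eta})$. Moving from a net point to a nearby $g$ changes $\mathcal{E}$ and $\mathcal{E}_{\mathcal{S}}$ by at most $4B\cdot\eta/(16B)$. A union bound therefore gives, with the first exponential as failure probability,
\begin{equation*}
S_1\le \tfrac12\sqrt{\eta}\sqrt{\|\pi_Bf_{\mathcal{S}}-f_\rho\|_\rho^2+\eta}\le \tfrac12\|\pi_Bf_{\mathcal{S}}-f_\rho\|_\rho^2+\tfrac{\eta}{2}.
\end{equation*}

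The main obstacle is this last step. The ratio normalization must be arranged, and the net radius and Bernstein constants tuned, so that the exponent comes out exactly as $\frac{3n\eta}{512B^2}$ with the factor $16\nu B/\eta$. The approach is standard, as in \cite{cucker2007learning}, but the constant chasing is the delicate part.
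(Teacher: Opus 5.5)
The paper does not prove this lemma; it quotes it from the cited works, which use the same Cucker--Zhou scheme you outline: your error decomposition, one-sided Bernstein for the fixed $h^*$, and a covering-number ratio bound for the data-dependent term. Your decomposition and moment bounds are correct. The problem is how you split the budget, and it is not just bookkeeping on $\eta$. Write $X:=\|\pi_B f_{\mathcal S}-f_\rho\|_\rho^2$ and $D:=\|h^*-f_\rho\|_\rho^2$. Your bounds $S_2\le D+\frac{\eta}{2}$ and $S_1\le\frac12X+\frac{\eta}{2}$ give $X\le D+S_1+S_2\le\frac12X+2D+\eta$, i.e.\ $X\le 4D+2\eta$. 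Absorbing $\frac12 X$ doubles the copy of $D$ you placed inside $S_2$, and no rescaling of $\eta$ restores the coefficient $2$. The approximation error must enter the fixed-function term only through the variance, not through the deviation level. Indeed, take one-sided Bernstein at level $t=\eta/4$, with $\sigma^2\le(3B+\|h^*\|_\infty)^2D$ and $C=2(3B+\|h^*\|_\infty)^2$; its exponent is $\frac{nt^2}{2(\sigma^2+Ct/3)}$ (your formula has a spurious factor $3$). This gives exactly the second exponential $\exp\{-\frac{3n\eta^2}{16(3B+\|h^*\|_\infty)^2(6D+\eta)}\}$. So the intended events are $S_2\le\frac\eta4$ and $S_1\le\frac12X+\frac\eta4$, which together give $X\le 2D+\eta$.

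Your uniform step, as described, delivers neither this nor the intermediate bound $S_1\le\frac12\sqrt\eta\sqrt{X+\eta}$ you claim. By your own perturbation estimate, moving from $g$ to a point $g_j$ of the $\frac{\eta}{16B}$-net costs up to $\frac\eta4$ in each of $\mathcal E$ and $\mathcal E_{\mathcal S}$. So you only get $S_1\le\frac{\sqrt\eta}{4}\sqrt{\mathbb E\xi_{g_j}+\eta}+\frac\eta2\le\frac{\sqrt\eta}{4}\sqrt{X+\frac54\eta}+\frac\eta2$. At $X=0$ this is about $0.78\eta$, above both $\frac{\eta}{2}$ and $\frac{\eta}{4}$. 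The net radius is pinned by the factor $16\nu B/\eta$, so it cannot be shrunk to compensate. The per-point probability also falls short. At threshold $\frac{\sqrt\eta}{4}\sqrt{\mathbb E\xi_g+\eta}$, Bernstein with $\mathrm{Var}\,\xi_g\le16B^2\,\mathbb E\xi_g$ yields an exponent of at most about $\frac{n\eta}{512B^2}$ when $\mathbb E\xi_g\gg\eta$, not $\frac{3n\eta}{512B^2}$.

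Carried out correctly, your route proves the lemma only with worse absolute constants in the first exponential, and with $4D$ unless you adopt the split above. That weaker form would still suffice for the paper's use. In Theorems~\ref{thm_gen_cube} and~\ref{thm_gen_manifold} one has $D\le\varepsilon^2\le\eta/6$, and the constants only feed into $C_1,\dots,C_4$ and $\widetilde C_4$. But as a proof of the statement as written, the step you deferred as ``constant chasing'' is exactly where it breaks.
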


Utilizing the covering number bounds Lemma \ref{lemma_covering_number} established above, we complete the proofs for Theorem \ref{thm_gen_cube} and Theorem \ref{thm_gen_manifold}. We optimize the bias-variance trade-off via the oracle inequality Lemma \ref{lemma_oracle} to achieve the final convergence rates.

\begin{proof}[Proof of Theorem \ref{thm_gen_cube}]
    By Theorem \ref{thm_holder_approx}, for any $\varepsilon \in (0,1/e]$, there exists a Transformer $h \in \mathcal{T}$ such that $\|h - f_\rho\|_\infty \le \varepsilon$. Since $\|f_\rho\|_\infty \le B$, we have $\|h\|_\infty \le B+\varepsilon \le B+1$.
    By Lemma \ref{lemma_covering_number}, we have
    \begin{equation*}
        \log \mathcal{N}(\eta, \mathcal{T}, \|\cdot\|_\infty) \le \mathcal{N}_{total} \log \left( \frac{1224256 P^4 D^{22} M_{max}^{26}}{\eta} \right).
    \end{equation*}
    Substituting $\mathcal{N}_{total} \le C_N \varepsilon^{-d/\alpha}$, $P \le C_P \varepsilon^{-d/\alpha}$, $D = d+4$, and $M_{max} \le \widetilde{C}_{mag} \varepsilon^{-2d/\alpha} \log \frac{1}{\varepsilon}$, we set $n' = C_N \varepsilon^{-d/\alpha}$ and obtain
    \begin{equation*}
        \nu = 1224256 (C_P \varepsilon^{-d/\alpha})^4 (d+4)^{22} \Big( \widetilde{C}_{mag} \varepsilon^{-2d/\alpha} \log \frac{1}{\varepsilon} \Big)^{26} \le C_{\nu} \varepsilon^{-56d/\alpha} \Big(\log \frac{1}{\varepsilon}\Big)^{26},
    \end{equation*}
    where $C_\nu = 1224256  C_P^4  (d+4)^{22}  \widetilde{C}_{mag}^{26}$.
    
    Applying Lemma \ref{lemma_oracle} with $h^* = h$ and $\eta \ge 6\varepsilon^2$, we have
    \begin{align*}
        \text{Prob}\Big\{ \|\pi_B f_{\mathcal{S}} - f_\rho\|_\rho^2 > 2\eta \Big\} &\le \exp\left\{ C_N \varepsilon^{-d/\alpha} \log \frac{16\nu B}{\eta} - \frac{3n\eta}{512B^2} \right\}  \\
        &+ \exp\left\{ \frac{-3n\eta^2}{16(4B+1)^2(6\varepsilon^2 + \eta)} \right\}.
    \end{align*}
    For $\eta \ge 6\varepsilon^2$, since $\varepsilon \le 1/e$, we have $\log \frac{16\nu B}{\eta} \le \left( \log\frac{8 B C_\nu}{3} + 28 + \frac{56d}{\alpha} \right) \log \frac{1}{\varepsilon}$. If we further restrict
    \begin{equation}\label{etacondition}
        \eta \ge \max\left\{ 6\varepsilon^2, C_1 \frac{\varepsilon^{-d/\alpha} \log \frac{1}{\varepsilon}}{n} \right\},
    \end{equation}
    where $C_1 = \frac{1024 B^2 C_N}{3} \left( \log\frac{8 B C_\nu}{3} + 28 + \frac{56d}{\alpha} \right)$, then we have
    \begin{eqnarray*}
        && \text{Prob} \Big\{ \|\pi_B f_{\mathcal{S}} - f_\rho\|_\rho^2 > 2\eta \Big\} \\
        &\le& \exp\left\{ C_N \varepsilon^{-d/\alpha} \log \frac{16\nu B}{\eta} - \frac{3n\eta}{512B^2} \right\} + \exp\left\{ \frac{-3n\eta^2}{16(4B+1)^2(6\varepsilon^2 + \eta)} \right\} \\
        &\le& \exp\left\{ \frac{3n\eta}{1024B^2} - \frac{3n\eta}{512B^2} \right\} + \exp\left\{ \frac{-3n\eta}{32(4B+1)^2} \right\} \\
        &\le& \exp\left\{ -\frac{3n\eta}{1024B^2} \right\} + \exp\left\{ -\frac{3n\eta}{32(4B+1)^2} \right\},
    \end{eqnarray*}
    where the second inequality follows from the restriction \eqref{etacondition}.    
    Furthermore, by setting $t = 2\eta$ and defining
    $$ C_3 = \min\left\{ \frac{3}{2048B^2}, \frac{3}{64(4B+1)^2} \right\}, $$
    we arrive at
    \begin{equation}\label{boundprob}
        \text{Prob}\Big\{ \|\pi_B f_{\mathcal{S}} - f_\rho\|_\rho^2 > t \Big\} \le 2 \exp( -C_3 n t ), \qquad \forall t \ge C_2 \max\left\{ \varepsilon^2, \frac{\varepsilon^{-d/\alpha} \log \frac{1}{\varepsilon}}{n} \right\},
    \end{equation}
    where $C_2 = \max\{12, 2C_1\}$.  
    If we apply the formula for the mean of the non-negative random variable $\xi = \|\pi_B f_{\mathcal{S}} - f_\rho\|_\rho^2$:
    $$
    \mathbb{E}[\xi] = \int_0^\infty \text{Prob} [\xi > t] dt,
    $$
    we see from \eqref{boundprob} that with $\Delta := C_2 \max\left\{ \varepsilon^2, \frac{\varepsilon^{-d/\alpha} \log \frac{1}{\varepsilon}}{n} \right\}$, there holds
    \begin{equation} \label{C4}
    \begin{aligned}
        \mathbb{E} \left[ \|\pi_B f_{\mathcal{S}} - f_\rho\|_\rho^2 \right] &= \left( \int_0^\Delta + \int_\Delta^\infty \right) \text{Prob} \Big\{ \|\pi_B f_{\mathcal{S}} - f_\rho\|_\rho^2 > t \Big\} dt \\
        &\le \Delta + \int_\Delta^\infty 2 \exp(-C_3 n t) dt \\
        &\le \Delta + \frac{2}{C_3 n} \le C_4 \max\left\{ \varepsilon^2, \frac{\varepsilon^{-d/\alpha} \log \frac{1}{\varepsilon}}{n} \right\},
    \end{aligned}
    \end{equation}
    where $C_4 = C_2 + \frac{2}{C_3}$.
    Finally, by choosing $\varepsilon = n^{-\frac{\alpha}{2\alpha+d}}$, we obtain
    \begin{equation*}
        \mathbb{E} \left[ \|\pi_B f_{\mathcal{S}} - f_\rho\|_\rho^2 \right] \le C_4 n^{-\frac{2\alpha}{2\alpha+d}} \log n,
    \end{equation*}
    Thus, we complete the proof.
\end{proof}

\begin{proof} [Proof of Theorem \ref{thm_gen_manifold}]
    The proof follows the same logic as the proof of Theorem \ref{thm_gen_cube}, utilizing the covering number bound of $\mathcal{T}$ by Lemma \ref{lemma_covering_number}, with the approximation complexities from Theorem \ref{thm_manifold_approx} depending on the intrinsic dimension $d$:
    $$ \mathcal{N}_{total} \le C_N \varepsilon^{-d/\alpha}, \quad P \le C_P \varepsilon^{-d/\alpha}, \quad M_{max} \le \widetilde{C}_{mag} \varepsilon^{-2d/\alpha} \log \frac{1}{\varepsilon}. $$
    If we restrict
    \begin{equation*}
        \eta \ge \max\left\{ 6\varepsilon^2, C_1 \frac{\varepsilon^{-d/\alpha} \log \frac{1}{\varepsilon}}{n} \right\},
    \end{equation*}
    where $C_1 = \frac{1024 B^2 C_N}{3} \left( \log\frac{8 B C_\nu}{3} + 28 + \frac{56d}{\alpha} \right)$,  $C_\nu = 1224256  C_P^4  (d+4)^{22}  \widetilde{C}_{mag}^{26}$, then we have
    \begin{equation*}
        \text{Prob} \Big\{ \|\pi_B f_{\mathcal{S}} - f_\rho\|_\rho^2 > 2\eta \Big\} \le \exp\left\{ -\frac{3n\eta}{1024B^2} \right\} + \exp\left\{ -\frac{3n\eta}{32(4B+1)^2} \right\}.
    \end{equation*}
    Furthermore, setting 
    \begin{equation} \label{C42}
        \widetilde{C}_4 = \max\{12, 2C_1\} + 2 \max\left\{ \frac{2048B^2}{3}, \frac{64(4B+1)^2}{3} \right\},
    \end{equation}
    we arrive at
    \begin{equation*}
        \mathbb{E} \left[ \|\pi_B f_{\mathcal{S}} - f_\rho\|_\rho^2 \right] \le \widetilde{C}_4 \max\left\{ \varepsilon^2, \frac{\varepsilon^{-d/\alpha} \log \frac{1}{\varepsilon}}{n} \right\}.
    \end{equation*}
    Finally, by choosing $\varepsilon = n^{-\frac{\alpha}{2\alpha+d}}$, we derive
    \begin{equation*}
        \mathbb{E} \left[ \|\pi_B f_{\mathcal{S}} - f_\rho\|_\rho^2 \right] \le \widetilde{C}_4 n^{-\frac{2\alpha}{2\alpha+d}} (\log n),
    \end{equation*}
    Thus, we complete the proof.
\end{proof}

\bibliographystyle{plain}
\bibliography{ref}

\end{document}